\documentclass[10pt]{article}

\usepackage[margin=1in]{geometry}
\usepackage[utf8]{inputenc}
\usepackage{amsmath,amssymb,amsfonts,amsthm,mathtools}
\newcommand{\toprule}{\hline}
\newcommand{\midrule}{\hline}
\newcommand{\bottomrule}{\hline}
\usepackage{graphicx}
\usepackage{hyperref}
\usepackage{xcolor}
\usepackage{array}
\usepackage{enumitem}
\usepackage{verbatim}
\usepackage{placeins} % keep floats within their section (prevents figures drifting past appendix boundaries)
\usepackage{listings}

\definecolor{codebg}{RGB}{248,248,248}
\definecolor{codeframe}{RGB}{200,200,200}
\definecolor{codekeyword}{RGB}{0,0,180}
\definecolor{codestring}{RGB}{163,21,21}
\definecolor{codecomment}{RGB}{0,128,0}
\definecolor{codenumber}{RGB}{128,128,128}

\lstdefinestyle{pythonstyle}{
  backgroundcolor=\color{codebg},
  frame=single,
  rulecolor=\color{codeframe},
  basicstyle=\ttfamily\scriptsize,
  keywordstyle=\color{codekeyword}\bfseries,
  stringstyle=\color{codestring},
  commentstyle=\color{codecomment}\itshape,
  numbersep=8pt,
  showstringspaces=false,
  breaklines=true,
  breakatwhitespace=false,
  tabsize=4,
  columns=fullflexible,
  keepspaces=true,
}
\usepackage{authblk}
\usepackage{algorithm}
\usepackage{algpseudocode}
\usepackage{cleveref}
\usepackage[backend=biber, sorting=none]{biblatex}
\hypersetup{colorlinks=true,linkcolor=blue,citecolor=blue,urlcolor=blue}

\newtheorem{theorem}{Theorem}[section]
\newtheorem{lemma}{Lemma}[section]
\newtheorem{proposition}{Proposition}[section]
\newtheorem{corollary}{Corollary}[section]

\newcommand{\peaklr}{\eta_{\mathrm{peak}}}
\newcommand{\etamax}{\eta_{\max}}

\title{Scale Weight Decay and Train Better}
\author{Anuj Apte\thanks{\texttt{anuj.apte@jpmchase.com}, \texttt{apte.anuj@outlook.com}}}
\affil{Global Technology Applied Research, JPMorganChase,
New York, NY 10001}
\date{\vspace{-8.5ex}}

\begin{document}
\maketitle

\begin{abstract}
\normalsize
The discovery of scaling laws has motivated training neural networks on ever increasing quantities of data. This is typically done with a constant decoupled weight decay which causes the network weights to shrink steadily over the course of training. Taking inspiration from the Robbins--Monro conditions, we propose to scale weight decay by the fraction of the peak learning rate $\eta/\eta_{\max}$. We prove that this scaled weight decay preserves the asymptotic stationarity guarantees of the corresponding unregularized methods for both stochastic gradient descent and the non-Euclidean spectral optimizer Muon, thereby avoiding the additional asymptotic bias introduced by constant decoupled weight decay. This retains the stability benefits of weight decay without changing the asymptotic optimization target. Using a steady-state analysis, we explain why under standard weight decay the weight norm shrinks steadily as training proceeds, whereas under scaled weight decay it settles to a roughly constant value. When applied to the training of mixture-of-experts models, Muon with scaled weight decay (Muon-SW) consistently outpaces Muon with identical hyperparameters, reaching the same validation loss $\mathbf{30\%}$ faster at our largest scale across models from $72 - 930$ million parameters trained at $\sim 600$ tokens per active parameter. If this trend continues to hold, the method promises to substantially accelerate the pre-training of frontier models while requiring only a few lines of code to implement.

\end{abstract}

\begin{figure}[htb]
    \centering
    \includegraphics[width=0.9\linewidth]{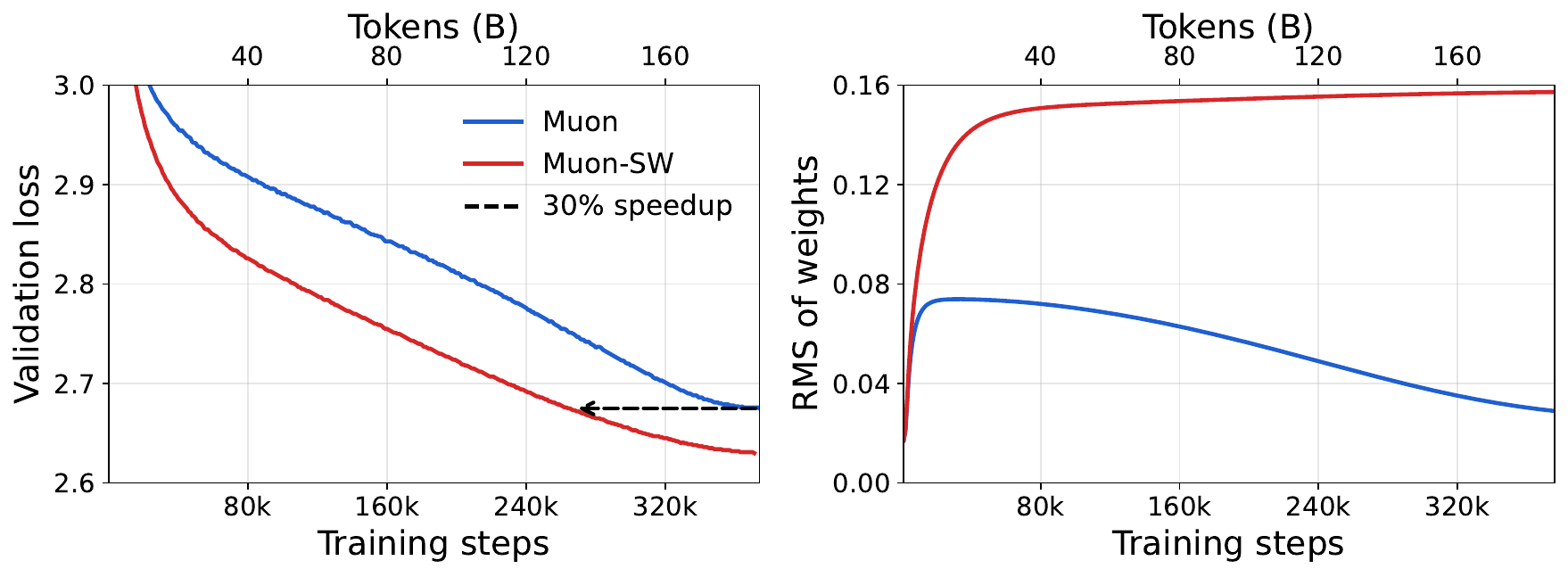}
    \caption{\textbf{Scaled weight decay trains faster and keeps weights from collapsing (width-1024 MoE, ${\sim}932$M parameters).} \emph{Left:} validation loss versus optimizer steps for \textcolor{blue}{Muon} with standard weight decay and \textcolor{red}{Muon-SW} with scaled weight decay (decay factor $\textcolor{red}{\eta_t^2 \lambda/\eta_{\max}}$ instead of $\textcolor{blue}{\eta_t \lambda}$). Muon-SW reaches the best validation loss of the standard run ($2.675$) in $264$k steps versus $374$k, a $\sim 30\%$ reduction in steps (dashed arrow). \emph{Right:} RMS norm of the weights, averaged across non-router layers. Under standard weight decay the norm peaks early and then steadily decays, whereas under scaled weight decay it settles to a roughly constant value. The models are trained on FineWeb dataset \cite{penedo2024finewebdatasetsdecantingweb} and use 
   identical hyperparameters: $8$ experts, top-2 routing, weight decay $0.1$, a peak learning rate of $4.8\times10^{-3}$ that decays to $4.8\times10^{-4}$ by the end of training, and a global batch of $256$ sequences of length $2{,}048$, corresponding to $524{,}288$ tokens per step.}
    \label{fig:w1024-main}
\end{figure}

\newpage

\section{Introduction}

Scaling laws, which predict neural network loss as a function of training data and model size, have motivated the training of neural networks on increasingly larger datasets \cite{kaplan2020scalinglawsneurallanguage,hoffmann2022trainingcomputeoptimallargelanguage,bahri2024}. This trend holds across modalities beyond language including vision \cite{dosovitskiy2021imageworth16x16words, zhai2022}, audio \cite{manakul2026} and robotics \cite{sartor2025}. For example, the largest language models with publicly available training methodology are pre-trained on tens of trillions of tokens \cite{kimiteam2026kimik2openagentic, 5team2025glm45agenticreasoningcoding, deepseekai2026deepseekv4highlyefficientmilliontoken}. This pre-training phase is where the model learns to predict the next token as accurately as possible. 
A key role in the training process is played by the weight decay which is applied throughout. Weight decay was originally introduced to aid in regularization and prevent overfitting by modifying the loss $\mathcal{L}$ as $\mathcal{L} + \lambda ||W||^2 /2 $ where $\lambda$ is the coefficient of weight decay and $\ell_2$ norm is used to measure the size of the weights $W$ \cite{krogh1991}. However, for modern deep learning the number of training data points far exceeds the number of weights which makes overfitting unlikely. In this case, the primary benefits of weight decay are stabilizing the training process by modifying the dynamics of training and reducing the final loss \cite{dangelo2024needweightdecaymodern}. Weight decay is applied in a decoupled manner given by 

\begin{equation}\label{eq:decoupled-wd}
 W_{t+1} = (1 - \textcolor{blue}{\eta_t \lambda}) W_{t}- \eta_{t} U_t~,
\end{equation}

where $W_t, \eta_t, U_t$ are the weight, learning rate and pre-conditioned gradient update at step $t$ and $\lambda$ is the coefficient of weight decay \cite{loshchilov2019decoupledweightdecayregularization}. For optimizers which pre-condition the update such as AdamW, this is not equivalent to modifying the loss function with an $\ell_2$ regularization term. Thus, contraction is applied separately from the adaptive gradient normalization. Since the coefficient of weight decay is multiplied by the learning rate $\eta_t$, the cumulative effect of weight decay depends on the choice of the learning schedule. For training transformers and their variants, it is important to increase the learning rate in the beginning till it reaches its maximum value \cite{kalra2024warmuplearningrateunderlying}. This process is called warmup; after the end of warmup the learning rate is decreased as training proceeds, typically following a cosine schedule \cite{loshchilov2017sgdrstochasticgradientdescent}.

An important change that has occurred since 2025 is the increased adoption of Muon as the main optimizer over AdamW. Muon is an optimizer for matrix- or tensor-valued weights that computes the update subject to a spectral norm constraint. This is accomplished by orthogonalizing the momentum buffer and then scaling by the learning rate before updating the weight \cite{jordan2024muon, liu2025muonscalablellmtraining}. Following AdamW, Muon incorporates a decoupled weight decay with a constant coefficient $\lambda$. As a result, the weights of the network continue to shrink in magnitude after the end of warmup as illustrated by the blue curve in the right panel of Figure~\ref{fig:w1024-main}, where we have plotted the root mean squared (RMS) value of matrix-valued weights averaged over the layers as a function of training steps. If the primary role of weight decay in modern deep learning is not regularization but controlling the training dynamics and improving the loss, then it is no longer prudent to apply weight decay with a constant coefficient which continuously pulls the training towards the origin. By plugging in $W_{t+1}=W_{t}$ in \eqref{eq:decoupled-wd} we observe that if the training converges it does to a point where $W_t = -U_t/ \lambda$, which might be quite far from an optimum of the loss function. Consequently, with constant decoupled weight decay the convergence guarantees apply to modified optimization dynamics rather than to a minimizer of the original loss function. 

This motivates the following question:

\begin{center}
\textit{Can weight decay be applied in a principled manner that empirically improves the training process while retaining the stationarity guarantees of the corresponding unregularized methods?}
\end{center}

Taking inspiration from the Robbins--Monro conditions \cite{Robbins1951}, we propose that instead of a constant weight decay it should be scaled by the fraction of the peak learning rate $\eta_t/\eta_{\max}$, where $\eta_{\max}$ is the maximum learning rate; that is, the base decay $\eta_t\lambda$ is further multiplied by $\eta_t/\eta_{\max}$

\begin{equation}\label{eq:decoupled-swd}
 W_{t+1} = \bigg(1 - \textcolor{red}{\frac{\eta_t^2}{\eta_{\max}}\lambda} \bigg) W_{t}- \eta_{t} U_t~.
\end{equation}

As a result, the impact of weight decay decreases as learning rate decreases towards the end of training. Using this method, we answer the above question in the affirmative and make the following contributions: 

\begin{itemize}[leftmargin=*, align=left]
\item By incorporating scaled weight decay into Muon, we derive Muon-SW and show that it substantially speeds up training of mixture-of-experts (MoE) \cite{shazeer2017, fedus2022} transformer models of size ranging from $72 - 930$ M parameters trained on $18-200$ B tokens corresponding to $600$ tokens per active parameter. The speed-up generally increases with scale and is largest at width $1024$, where Muon-SW reaches the same validation loss $\mathbf{30\%}$ faster than Muon with identical hyperparameters, as reported in Table~\ref{tab:main-results}.

\item Employing a steady-state analysis, we explain why under standard weight decay the weight norm shrinks steadily after warmup, whereas under scaled weight decay it settles to a roughly constant value. This is illustrated in the right panel of Figure~\ref{fig:w1024-main} and in Figure~\ref{fig:w1024-align}. 

\item We prove that, for both SGD (stochastic gradient descent) and Muon, scaled weight decay retains the asymptotic stationarity guarantees of the corresponding unregularized methods under the same conditions (Theorems~\ref{thm:scaled-wd-sgd-convergence} and~\ref{thm:scaled-wd-muon-stationarity}). In contrast, we show that constant weight decay can remain bounded away from a minimizer on simple quadratic objectives (Propositions~\ref{prop:const-wd-nonconvergence} and~\ref{prop:const-wd-muon}), as illustrated in Figure~\ref{fig:convergence-illustration}.

\end{itemize}

Defazio~\cite{defazio2025} introduces the same rule to counteract the growth of gradient norms late in training, a motivation that does not apply to update-normalizing optimizers like Muon. Instead, we motivate the rule through Robbins--Monro summability and show that the scaled shrink term preserves the asymptotic stationarity guarantee for the original unregularized objective. We also conduct a thorough empirical investigation across model sizes and training horizons for MoE models trained with Muon. Since the speedups are substantial, if our observed trend continues then scaled weight decay offers substantial gains for frontier training while requiring only a few lines of code to implement and incurring no additional computational overhead. Although we develop and evaluate scaled weight decay for Muon, the same principle should apply to other optimizers for matrix-valued weights such as Shampoo \cite{gupta2018, anil2021}, SOAP \cite{vyas2025}, Aurora \cite{dewulf2026auroraleverageawarespectraloptimizer}, and other variants of Muon \cite{ahn2025dion, ahn2025dion2,amsel2026polarexpressoptimalmatrix,apte2026anytimetrainingschedulefreespectral,an2025asgoadaptivestructuredgradient,liu2026muon2boostingmuonadaptive,khaled2025muonbpfastermuonblockperiodic}.

\section{Scaled Weight Decay and Muon-SW}
\label{sec:scaled-wd}

Consider an unconstrained optimization problem of minimizing an objective function $f: \mathcal{W} \to \mathbb{R}$ over a space $\mathcal{W}$ such as $\mathbb{R}^d$ or $\mathbb{R}^{m \times n}$:
\begin{equation}\label{eq:sw-problem}
    \min_{W \in \mathcal{W}} f(W)~.
\end{equation}
The primary setting of interest is when $\mathcal{W}$ denotes the parameter (weight) space of a neural network, and the objective $f$ is the expected cross-entropy loss under the data-generating distribution
\begin{equation}
f(W) = \mathbb{E}_{z \sim \mathcal{D}}\big[\ell(W; z)\big]~,
\end{equation}
where $z$ is a data point drawn from data distribution $\mathcal{D}$ and $\ell(W; z)$ is the per-sample cross-entropy loss \cite{bottou2018optimization}. By backpropagation through the network, we can compute a noisy estimate of the gradient of $f$ over a mini-batch 
\begin{equation}\label{eq:swd-noisy-grad}
    G_t = \nabla f(W_t) + \xi_t~,
\end{equation}
where $\xi_t$ is assumed to be a zero-mean noise term with bounded second moment. This gradient estimate $G_t$ is used to compute the pre-conditioned update $U_t$, which in general is a function of the current and past gradients. Typically the gradients are first accumulated into a momentum buffer $M_t$ to average out the noise, and $U_t$ is then obtained by applying an optimizer-specific preconditioner to $M_t$. For AdamW this is a coordinate-wise rescaling by a running estimate of the second moment, while for Muon it is an orthogonalization that normalizes the update to have unit spectral norm.

When the optimizer is SGD, the preconditioner is the identity, and the iteration $W_{t+1} = W_t - \eta_t G_t$ is exactly the stochastic approximation scheme studied by Robbins and Monro \cite{Robbins1951}. Assuming $f$ is $L$-smooth, bounded below, and the noise $\xi_t$ has bounded variance, the method admits an asymptotic stationarity guarantee provided the learning-rate schedule $\{\eta_t\}$ satisfies the two conditions
\begin{equation}\label{eq:robbins-monro}
    \sum_{t} \eta_t = \infty
    \qquad \text{and} \qquad
    \sum_{t} \eta_t^2 < \infty~.
\end{equation}
The first condition prevents the total step budget from becoming finite, while the second ensures that the accumulated variance of the noise $\xi_t$ is finite. Under the Robbins--Monro conditions, the unregularized method admits an asymptotic stationarity guarantee. Provided the iterates remain bounded, an additional update term of order $\mathcal{O}(\eta_t^2)$ is summable and preserves this guarantee. In the strongly convex setting, the guarantee can be strengthened to convergence to the unique minimizer.

A term that scales as $\mathcal{O}(\eta_t)$, on the other hand, can move the stationary set. This is precisely the situation for standard decoupled weight decay \eqref{eq:decoupled-wd}, which enters at order $\mathcal{O}(\eta_t)$ and can converge to a point that is not a minimizer of the original objective $f$. If instead we scale the decay by the learning-rate fraction $\eta_t / \etamax$ as given in \eqref{eq:decoupled-swd}, the shrink term enters at order $\mathcal{O}(\eta_t^2)$. Thus, scaling weight decay preserves the stabilizing effect on the training dynamics while the learning rate is high and retains the asymptotic stationarity guarantee for the original unregularized loss (Section~\ref{sec:convergenc-theory}).

\begin{algorithm}[htb]
\caption{Muon-SW}
\label{alg:muon-swd}
\setlength{\baselineskip}{1.2\baselineskip}
\begin{algorithmic}[1]
  \State \textbf{Input:} initial weights $W_0 \in \mathbb{R}^{m \times n}$, loss $\mathcal{L}$, base learning-rate schedule $\{\eta_t\}$, momentum $\beta$, weight decay $\lambda$, peak learning rate $\etamax$
\State Initialize $M_0 \in \mathbb{R}^{m \times n} \gets 0$
\For{$t = 1, 2, \dots$}
    \State $G_t \gets \nabla_W \mathcal{L}(W_t)$
    \State $M_t \gets \beta M_{t-1} + G_t$
    \State $O_t \gets \mathrm{NS5}(M_t)$
    \State $\widehat{\eta}_t \gets 0.2\, \eta_t \sqrt{\max(m,n)}$
    \State $W_{t+1} \gets \bigl(1 - \textcolor{red}{\lambda \eta_t^2/\etamax}\bigr) W_t - \widehat{\eta}_t\, O_t$
\EndFor
\end{algorithmic}
\end{algorithm}

% In Algorithm~\ref{alg:muon-swd}, $\eta_t$ is the base learning-rate schedule, while $\widehat{\eta}_t = 0.2\,\eta_t\sqrt{\max(m,n)}$ is the actual matrix-scaled Muon update magnitude. The decay factor is computed from the base learning rate $\eta_t$.

We now specialize to Muon, which is an optimizer for matrix-valued parameters. Note that most of the weights of a modern neural network are matrix- or tensor-valued. For example in a transformer, the overwhelming majority of the trainable parameters live in the two-dimensional weight matrices of the attention and feed-forward blocks, with only a small fraction residing in parameters such as embeddings, normalization scales, and biases. Muon updates each matrix-valued weight by taking the momentum buffer and replacing it with the update of bounded spectral norm that is best aligned with it. Let $M_t$ be the momentum buffer given by
\begin{equation}\label{eq:muon-momentum}
    M_t = \beta M_{t-1} + G_t~,
\end{equation}
and consider the update direction that maximizes alignment with $M_t$ subject to a spectral-norm constraint,
\begin{equation}\label{eq:muon-constraint}
    O_t = \arg\max_{\|O\|_2 \le 1} \langle O, M_t \rangle~.
\end{equation}
Writing the singular value decomposition $M_t = U \Sigma V^\top$, the maximizer of \eqref{eq:muon-constraint} is obtained by setting all nonzero singular values to $1$, giving the orthogonalized direction
\begin{equation}\label{eq:muon-orthogonalize}
    O_t = U V^\top = \mathrm{polar}(M_t)~,
\end{equation}
which is the polar factor of $M_t$. In practice the exact SVD is avoided and $O_t$ is approximated by a fixed number of Newton--Schulz iterations applied to $M_t$. The pre-conditioned update is then $U_t = O_t$, with the matrix-shape rescaled learning rate represented by $\widehat{\eta}_t$. The Muon update with decoupled (constant) weight decay is
\begin{equation}\label{eq:muon-update}
    W_{t+1} = (1 - \textcolor{blue}{\eta_t \lambda}) W_t - \widehat{\eta}_t\, O_t~.
\end{equation}
Substituting the scaled decay coefficient of \eqref{eq:decoupled-swd} in place of the constant $\lambda$ in \eqref{eq:muon-update} gives the Muon variant with scaled weight decay, which we call Muon-SW:
\begin{equation}\label{eq:muon-swd-update}
    W_{t+1} = \Bigl(1 - \textcolor{red}{\frac{\eta_t^2}{\etamax} \lambda}\Bigr) W_t - \widehat{\eta}_t\, O_t~.
\end{equation}

We establish the convergence of this update rule in Section~\ref{sec:convergenc-theory}, and study its effect on training dynamics in Section~\ref{sec:steady-state}. The convergence analysis absorbs the fixed matrix-shape scaling into the effective step-size notation, so $\eta_t$ there denotes the effective step size.

Since Muon is applied only to matrix-valued weights, Muon-SW must be used alongside another optimizer, such as AdamW, for the remaining parameters. Rescaling the learning rate by a factor $0.2 \sqrt{\max{(m,n)}}$ for a matrix of shape $m \times n$ allows one to match the RMS of the updates from the two optimizers and hence reuse the same base learning rate across both \cite{liu2025muonscalablellmtraining}. The pseudocode for Muon-SW is presented in Algorithm \ref{alg:muon-swd}; the full implementation is deferred to Appendix~\ref{app:implementation}. The major change relative to Muon is the factor $\lambda \eta_t^2/\etamax$ multiplying the decay coefficient, and orthogonalization uses five Newton--Schulz iterations (NS5).

\section{Convergence Theory}
\label{sec:convergenc-theory}

We have motivated scaling the weight decay with $\eta / \etamax$ based on the Robbins--Monro conditions. However, strictly speaking they apply to cases where the training continues for an unbounded period of time which is necessary to ensure that the optimizer can converge starting from an arbitrary initial point. For a finite training horizon one can still obtain convergence rates in terms of the total training time $T$, though these are less clean than the asymptotic Robbins--Monro statements. The results below exhibit the asymptotic contrast: constant weight decay can remain separated from a minimizer of the original loss, whereas scaled weight decay retains stationarity guarantees for the original objective and converges to its minimizer in the strongly convex SGD setting.

\subsection{Constant weight decay converges to a non-minimizer}
Recall from the introduction that with constant decay the iteration balances the gradient update against the shrink term, so at convergence $W_t = -U_t/\lambda$. Because the original loss $f$ need not be small at this shifted point, constant weight decay can settle at a higher loss than the true minimum. 

To make this precise we exhibit a simple one-dimensional example on which SGD with constant weight decay provably fails to converge to the minimizer. We take the objective to be deterministic so that no stochasticity is involved, which only strengthens the conclusion. Consider the shifted quadratic
\begin{equation}\label{eq:shifted-quadratic}
    f(w) = \frac{a}{2}(w - c)^2, \qquad a > 0,\ c \neq 0~,
\end{equation}
whose unique minimizer is $w^\star = c$. Gradient descent with constant decoupled weight decay $\lambda > 0$ gives the update
\begin{equation}\label{eq:const-wd-affine}
    w_{t+1} = \bigl(1 - \eta_t(a + \lambda)\bigr) w_t + \eta_t\, a c~.
\end{equation}

\begin{proposition}\label{prop:const-wd-nonconvergence}
Let the learning-rate schedule satisfy $0<\eta_t\le 1/(a+\lambda)$ and $\sum_t \eta_t = \infty$. Then from every starting point $w_0$, the iterates \eqref{eq:const-wd-affine} converge to
\begin{equation}\label{eq:const-wd-fixed-point}
    \bar{w} = \frac{ac}{a + \lambda} \neq c~,
\end{equation}
with loss gap
\begin{equation}\label{eq:const-wd-loss-gap}
    f(\bar{w}) - f(c) = \frac{a}{2}\left(\frac{\lambda}{a + \lambda}\right)^2 c^2 > 0~.
\end{equation}
In particular the iterates never converge to the minimizer $w^\star = c$. 
\end{proposition}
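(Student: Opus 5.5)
The plan is to recenter the affine recursion \eqref{eq:const-wd-affine} at its fixed point. First, I will verify that $\bar w = ac/(a+\lambda)$ is the unique fixed point of the map $w \mapsto (1-\eta(a+\lambda))w + \eta ac$ for every $\eta>0$. Solving $w = (1-\eta(a+\lambda))w + \eta a c$ gives $\eta(a+\lambda)w = \eta a c$, so the fixed point does not depend on $\eta_t$. Setting $e_t = w_t - \bar w$ and subtracting the fixed-point identity from \eqref{eq:const-wd-affine} then yields the homogeneous recursion
\begin{equation*}
e_{t+1} = \bigl(1-\eta_t(a+\lambda)\bigr)e_t, \qquad\text{so}\qquad e_T = e_0\prod_{t=0}^{T-1}\bigl(1-\eta_t(a+\lambda)\bigr).
\end{equation*}

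The key step is to show that this product tends to zero. The assumption $0<\eta_t\le 1/(a+\lambda)$ gives $0\le 1-\eta_t(a+\lambda) < 1$, so every factor is nonnegative. Using $1-x\le e^{-x}$, I will bound
\begin{equation*}
|e_T| \le |e_0|\exp\Bigl(-(a+\lambda)\sum_{t<T}\eta_t\Bigr),
\end{equation*}
which tends to $0$ because $\sum_t\eta_t=\infty$. Hence $w_t\to\bar w$ from every $w_0$. The only delicate point is this step, and it is mild: the upper bound on $\eta_t$ is exactly what keeps the factors in $[0,1)$, which rules out sign oscillation and blow-up. If some factor equals zero, the iterates simply hit $\bar w$ exactly, which is consistent with the claim.

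Finally, I will check that $\bar w\neq c$ and compute the loss gap. We have $c-\bar w = c\,\lambda/(a+\lambda)$, which is nonzero because $c\neq0$ and $\lambda>0$. Substituting into $f(\bar w)-f(c)=\tfrac a2(\bar w-c)^2$ gives \eqref{eq:const-wd-loss-gap}, and this is strictly positive. Since the limit is unique and differs from $c$, the iterates cannot converge to $w^\star=c$.
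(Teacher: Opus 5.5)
Your proposal is correct and follows the paper's proof essentially step for step: you recenter at the $\eta$-independent fixed point $\bar w = ac/(a+\lambda)$, reduce to the homogeneous recursion with factors in $[0,1)$, and kill the product via $1-x\le e^{-x}$ and $\sum_t\eta_t=\infty$, then compute $c-\bar w = c\lambda/(a+\lambda)$ for the loss gap. Nothing is missing.
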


The proof is given in Appendix~\ref{app:const-wd-1d}. The hypothesis $\sum_t \eta_t = \infty$ is exactly the first Robbins--Monro condition, and it is the natural regime in which the optimizer is allowed to travel arbitrarily far and thus has any chance of converging from an arbitrary initialization. In this regime, the failure in Proposition \ref{prop:const-wd-nonconvergence} is a property of the dynamics rather than of the initialization or the horizon. If instead $\sum_t \eta_t < \infty$ the iterate simply runs out of step budget and freezes at a point determined by the schedule and $w_0$; it can then land on the minimizer only for a finely tuned starting point, which is a measure-zero set. 

The same failure occurs for Muon in a qualitatively different form. Consider the shifted matrix quadratic
\begin{equation}\label{eq:matrix-quadratic}
    f(W) = \frac{1}{2}\|W - A\|_F^2, \qquad \nabla f(W) = W - A~,
\end{equation}
with $A \in \mathbb{R}^{m\times n}$ and unique minimizer $W^\star = A$. Muon with momentum $\beta \in [0,1)$ forms
\begin{equation}\label{eq:muon-momentum-theory}
    M_t = \beta M_{t-1} + \nabla f(W_t), \qquad O_t = \mathrm{polar}(M_t)~,
\end{equation}
where $\mathrm{polar}(C) = UV^\top$ for a compact singular value decomposition $C = U\Sigma V^\top$, so that $\|O_t\|_{\mathrm{op}} \le 1$. The constant-decay Muon update is
\begin{equation}\label{eq:muon-const-wd-theory}
    W_{t+1} = (1 - \eta_t\lambda) W_t - \eta_t O_t~.
\end{equation}

\begin{proposition}\label{prop:const-wd-muon}
Let $\lambda > 0$, suppose $0 < \eta_t\lambda \le 1$ and $\sum_t \eta_t = \infty$, and consider the Muon iteration \eqref{eq:muon-momentum-theory}--\eqref{eq:muon-const-wd-theory} on the objective \eqref{eq:matrix-quadratic}. Then, regardless of the momentum coefficient $\beta$ and the initialization $W_0$,
\begin{equation}\label{eq:muon-radius}
    \limsup_{t\to\infty} \|W_t\|_{\mathrm{op}} \le \frac{1}{\lambda}~.
\end{equation}
Consequently, if $\sigma_{\max}(A) > 1/\lambda$, then
\begin{equation}\label{eq:muon-separation}
    \liminf_{t\to\infty} \|W_t - A\|_{\mathrm{op}} \ge \sigma_{\max}(A) - \frac{1}{\lambda} > 0~,
\end{equation}
so the iterates never converge to the minimizer $W^\star = A$, and the loss stays bounded below by
\begin{equation}\label{eq:muon-loss-gap}
    \liminf_{t\to\infty} f(W_t) \ge \frac{1}{2}\sum_{i}\Bigl(\sigma_i(A) - \frac{1}{\lambda}\Bigr)_+^2~,
\end{equation}
where $\sigma_i(A)$ are the singular values of $A$ and $(x)_+ = \max(x,0)$.
\end{proposition}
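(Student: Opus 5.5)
The plan is to control the operator norm through a scalar recursion that holds for every trajectory. The objective and the momentum play no role in this step. Since $\|O_t\|_{\mathrm{op}}\le 1$ always, and $0<\eta_t\lambda\le 1$ makes $1-\eta_t\lambda\ge 0$, the triangle inequality applied to \eqref{eq:muon-const-wd-theory} gives
\begin{equation*}
r_{t+1} \le (1-\eta_t\lambda)\, r_t + \eta_t, \qquad r_t := \|W_t\|_{\mathrm{op}}.
\end{equation*}
Set $e_t := r_t - 1/\lambda$. The recursion becomes $e_{t+1} \le (1-\eta_t\lambda)\, e_t$.

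I would handle $e_t$ in two cases.
\begin{itemize}
\item If $e_t\le 0$ at some step, then $e_{t+1}\le (1-\eta_t\lambda)e_t\le 0$. So once $e_t\le 0$ it stays $\le 0$ forever.
\item If $e_t>0$ for all $t\ge t_0$, iterating gives $e_t\le e_{t_0}\prod_{s=t_0}^{t-1}(1-\eta_s\lambda)\le e_{t_0}\exp\bigl(-\lambda\sum_{s=t_0}^{t-1}\eta_s\bigr)$. This tends to $0$ because $\sum_t\eta_t=\infty$.
\end{itemize}
In either case $\limsup_t e_t\le 0$, which is \eqref{eq:muon-radius}. This is the only place the divergence of $\sum_t\eta_t$ is used, and it is the standard contraction-product argument.

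For \eqref{eq:muon-separation}, the reverse triangle inequality gives $\|W_t-A\|_{\mathrm{op}}\ge \sigma_{\max}(A)-\|W_t\|_{\mathrm{op}}$. Taking $\liminf$ and using the $\limsup$ bound on $\|W_t\|_{\mathrm{op}}$ gives the claim. Positivity of the gap is exactly the hypothesis $\sigma_{\max}(A)>1/\lambda$.

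The loss bound \eqref{eq:muon-loss-gap} is the step needing care, since it requires a singular-value-wise comparison rather than a single norm. I would use Mirsky's inequality (the unitarily invariant version of Hoffman--Wielandt for singular values):
\begin{equation*}
\|W-A\|_F^2\ge\sum_i\bigl(\sigma_i(A)-\sigma_i(W)\bigr)^2 .
\end{equation*}
Fix $\epsilon>0$. For $t$ large, every $\sigma_i(W_t)\le \|W_t\|_{\mathrm{op}}\le 1/\lambda+\epsilon$. For each $i$, the function $x\mapsto (\sigma_i(A)-x)^2$ on $[0,1/\lambda+\epsilon]$ is bounded below by $(\sigma_i(A)-1/\lambda-\epsilon)_+^2$: when $\sigma_i(A)>1/\lambda+\epsilon$ the minimum is attained at the right endpoint, and otherwise the bound is $0$. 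Summing over $i$ and taking $\liminf_t$ gives
\begin{equation*}
\liminf_{t\to\infty} f(W_t)\ge \tfrac12\sum_i\bigl(\sigma_i(A)-1/\lambda-\epsilon\bigr)_+^2 .
\end{equation*}
Letting $\epsilon\to0$, continuity of $(\cdot)_+^2$ yields \eqref{eq:muon-loss-gap}.

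The main obstacle is citing Mirsky's inequality correctly. If I want to avoid it, I would use von Neumann's trace inequality $\langle W,A\rangle\le\sum_i\sigma_i(W)\sigma_i(A)$ and expand $\|W-A\|_F^2=\|W\|_F^2-2\langle W,A\rangle+\|A\|_F^2$, which gives the same bound directly.
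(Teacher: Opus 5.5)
Your proposal is correct and follows the paper's proof essentially step for step: the same scalar recursion $e_{t+1}\le(1-\eta_t\lambda)e_t$ for $e_t=\|W_t\|_{\mathrm{op}}-1/\lambda$, the same reverse triangle inequality, and the same $\rho\downarrow 1/\lambda$ limiting argument for the loss. The paper folds your two cases into the single bound $e_t\le (e_0)_+\prod_{s<t}(1-\eta_s\lambda)$, and where it simply asserts that Frobenius projection onto the spectral-norm ball clips singular values, you justify the lower bound with Mirsky's (or von Neumann's trace) inequality, which is exactly what proves that clipping formula.
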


The proof is given in Appendix~\ref{app:const-wd-muon}. In contrast to the scalar case, the orthogonalized Muon direction has spectral norm at most $1$ irrespective of the gradient magnitude. A constant decay therefore confines every singular value of $W_t$ to a ball of radius $1/\lambda$ set by $\lambda$ alone, and any target outside that ball is missed. For training of transformers, spectral norms larger than $1/\lambda$ can arise in the MLP blocks (see Figure 1 in \cite{lion2026}).

Proposition~\ref{prop:const-wd-muon} also shows that the convergence guarantee for Muon with constant weight decay claimed by Sato et al.\ \cite{sato2026muon} is incorrect. Their Lemma~C.1 in fact bounds $\|W_t\|_{\mathrm{op}}$ by $1/\lambda$ rather than $1$, so the strictly positive descent coefficient their Theorem~C.1 relies on is unavailable when $\lambda < 1$; taking $A$ with $\sigma_{\max}(A) > 1/\lambda$ gives an explicit counterexample. 

\begin{figure}[htb]
    \centering
    \includegraphics[width=0.9\linewidth]{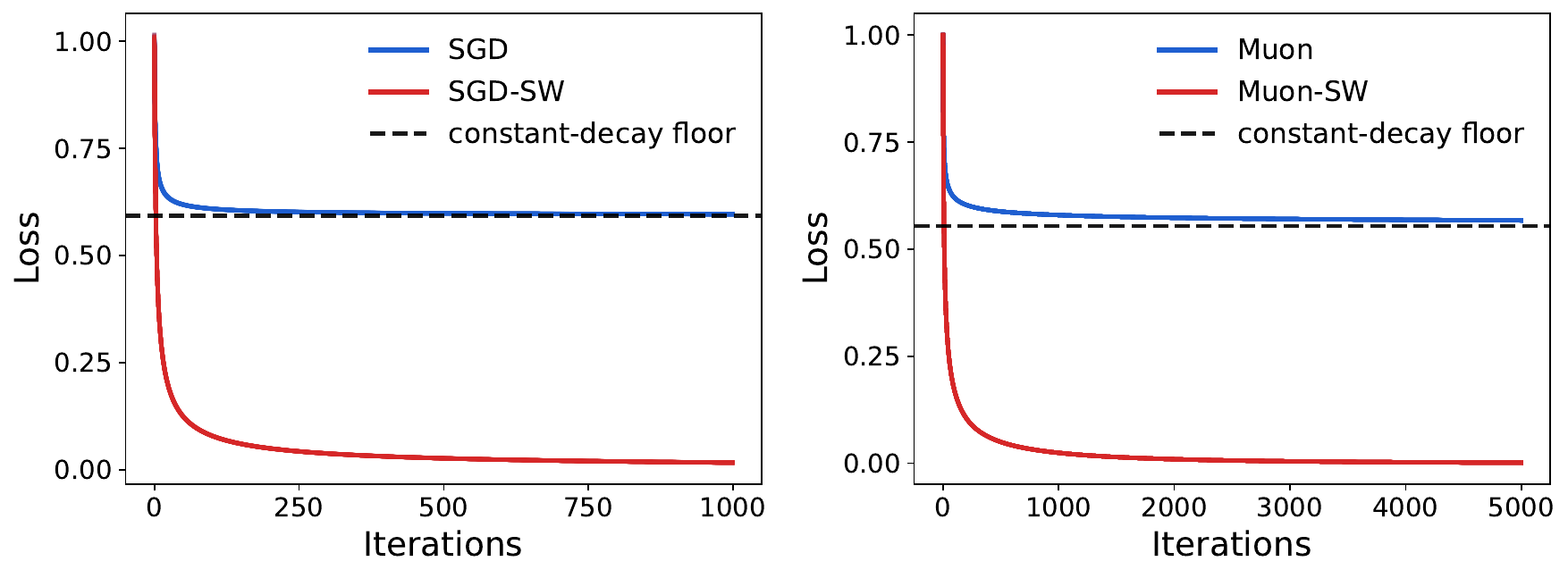}
    \caption{Scaled weight decay converges to a minimizer of the original loss, while constant weight decay does not. \textbf{Left:} deterministic gradient descent on the shifted scalar quadratic $f(w) = \frac{a}{2}(w-c)^2$ of Proposition~\ref{prop:const-wd-nonconvergence}. \textbf{Right:} Muon on the shifted matrix quadratic $f(W) = \frac{1}{2}\|W - A\|_F^2$ of Proposition~\ref{prop:const-wd-muon}, where the target $A$ has a Marchenko--Pastur-type spectrum with several singular values above $1/\lambda$. In each panel we plot the loss (normalized by its mean initial value) for many random initializations $W_0 \sim \mathcal{N}(0, 0.02)$ under the same $\eta_t = \eta_0/(1+t)$ schedule. The scaled-decay runs (SGD-SW, Muon-SW; red) drive the loss to zero, whereas the constant-decay runs (SGD, Muon; blue) instead flatten at the analytic lower bound (dashed).}
    \label{fig:convergence-illustration}
\end{figure}

\subsection{Convergence of scaled weight decay for SGD}
\label{subsec:scaled-wd-convergence}
In contrast to the behavior for constant weight decay, we now show that scaled weight decay preserves the standard weighted stationarity guarantee for the original objective. We begin with the stochastic gradient descent (SGD) case, where the argument is the simplest. Let $F : \mathbb{R}^d \to \mathbb{R}$ be bounded below by $F_{\inf}$ and $L$-smooth. At each step we observe a stochastic gradient $g_t$ satisfying
\begin{equation}\label{eq:sgd-assumptions}
    \mathbb{E}[g_t \mid \mathcal{F}_t] = \nabla F(x_t), \qquad
    \mathbb{E}[\|g_t\|_2^2 \mid \mathcal{F}_t] \le \sigma^2 + \rho \|\nabla F(x_t)\|_2^2~,
\end{equation}
for constants $\sigma^2 \ge 0$ and $\rho \ge 1$, where $\mathcal{F}_t$ is the filtration generated by the iterates and gradients up to step $t$. Scaled weight decay with coefficient $\lambda > 0$ uses the update
\begin{equation}\label{eq:scaled-wd-sgd}
    x_{t+1} = x_t - \eta_t g_t - q\, \eta_t^2 x_t, \qquad q = \frac{\lambda}{\etamax}~.
\end{equation}

\begin{theorem}\label{thm:scaled-wd-sgd-convergence}
Suppose $F$ and $g_t$ satisfy \eqref{eq:sgd-assumptions}, the iterates are bounded in the sense $\sup_t \mathbb{E}\|x_t\|_2^2 < \infty$, and the learning-rate schedule satisfies the Robbins--Monro conditions $\sum_t \eta_t = \infty$ and $\sum_t \eta_t^2 < \infty$. Then the scaled weight decay iterates \eqref{eq:scaled-wd-sgd} satisfy
\begin{equation}\label{eq:scaled-wd-sgd-limit}
    \frac{1}{S_T}\sum_{t=0}^{T-1} \eta_t\, \mathbb{E}\|\nabla F(x_t)\|_2^2 \to 0,
    \qquad S_T = \sum_{t=0}^{T-1}\eta_t~,
\end{equation}
and in particular $\liminf_{t\to\infty} \mathbb{E}\|\nabla F(x_t)\|_2^2 = 0$. Thus scaled weight decay has the same asymptotic weighted stationarity guarantee for the original objective $F$ as SGD without weight decay under the corresponding assumptions.
\end{theorem}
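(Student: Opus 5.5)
The plan is the standard descent-lemma argument for SGD, with the decay term $-q\eta_t^2 x_t$ treated as an extra perturbation of order $\eta_t^2$. Write $x_{t+1}-x_t = -\eta_t g_t - q\eta_t^2 x_t$. By $L$-smoothness,
\begin{equation*}
F(x_{t+1}) \le F(x_t) - \eta_t\langle \nabla F(x_t), g_t\rangle - q\eta_t^2\langle \nabla F(x_t), x_t\rangle + \tfrac{L}{2}\|\eta_t g_t + q\eta_t^2 x_t\|_2^2 .
\end{equation*}
Take the conditional expectation with respect to $\mathcal{F}_t$ and use unbiasedness, so that the linear term becomes $-\eta_t\|\nabla F(x_t)\|_2^2$.

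The cross term is handled by Young's inequality:
\begin{equation*}
q\eta_t^2|\langle \nabla F(x_t),x_t\rangle| \le \tfrac{\eta_t}{4}\|\nabla F(x_t)\|_2^2 + q^2\eta_t^3\|x_t\|_2^2 .
\end{equation*}
For the quadratic term use $\|a+b\|^2\le 2\|a\|^2+2\|b\|^2$ together with the variance bound \eqref{eq:sgd-assumptions}. This gives at most $L\eta_t^2(\sigma^2+\rho\|\nabla F(x_t)\|_2^2) + Lq^2\eta_t^4\|x_t\|_2^2$.

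Since $\sum_t\eta_t^2<\infty$, we have $\eta_t\to0$. Hence for all $t\ge t_0$ we have $L\rho\eta_t\le 1/4$, and the gradient coefficient is at least $\eta_t/2$. The finitely many earlier steps only contribute a finite constant. Taking full expectations and writing $B=\sup_t\mathbb{E}\|x_t\|_2^2$, we get for $t\ge t_0$
\begin{equation*}
\tfrac{\eta_t}{2}\mathbb{E}\|\nabla F(x_t)\|_2^2 \le \mathbb{E}F(x_t)-\mathbb{E}F(x_{t+1}) + L\sigma^2\eta_t^2 + (q^2\eta_t^3 + Lq^2\eta_t^4)B .
\end{equation*}

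Summing telescopes the loss terms. Because $F\ge F_{\inf}$, the total is bounded by $F(x_{t_0})-F_{\inf}$ (in expectation), which is finite. The remaining terms are summable because $\eta_t^3,\eta_t^4\le C\eta_t^2$ once $\eta_t$ is bounded. Therefore
\begin{equation*}
\sum_t\eta_t\,\mathbb{E}\|\nabla F(x_t)\|_2^2<\infty .
\end{equation*}
Dividing by $S_T\to\infty$ gives \eqref{eq:scaled-wd-sgd-limit}. The $\liminf$ statement follows by contradiction: if $\mathbb{E}\|\nabla F(x_t)\|_2^2\ge\epsilon$ for all large $t$, the weighted sum would diverge because $\sum_t\eta_t=\infty$.

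The main technical point is to control the decay term without a bound on $\|\nabla F(x_t)\|$ in terms of $x_t$. The Young split above resolves this, since it puts half the cross term into the descent coefficient and half into an $\eta_t^3\|x_t\|^2$ term. That second term is controlled by the boundedness assumption on the iterates. One must also check that $\mathbb{E}F(x_{t_0})$ is finite. This holds because $F$ is quadratically bounded above under $L$-smoothness and $\mathbb{E}\|x_t\|^2<\infty$.
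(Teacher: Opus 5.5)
Your proposal is correct and follows the paper's proof essentially step for step. It uses the same descent inequality and the same Young split $q\eta_t^2|\langle\nabla F(x_t),x_t\rangle|\le\frac{\eta_t}{4}\|\nabla F(x_t)\|_2^2+q^2\eta_t^3\|x_t\|_2^2$, the same bound $\|\Delta_t\|_2^2\le 2\eta_t^2\|g_t\|_2^2+2q^2\eta_t^4\|x_t\|_2^2$, and the same threshold beyond which the gradient coefficient is at least $\eta_t/2$ before telescoping. Your extra remarks that $\mathbb{E}F(x_{t_0})$ and the finitely many early terms are finite (via the quadratic upper bound from $L$-smoothness and $\sup_t\mathbb{E}\|x_t\|_2^2<\infty$) fill in details the paper leaves implicit.
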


When the objective is in addition strongly convex, the stationarity guarantee upgrades to convergence to the unique minimizer, with an explicit rate, and the bounded-iterate assumption is no longer needed since strong convexity controls the distance to $x^\star$ directly.

\begin{theorem}\label{thm:scaled-wd-sgd-strongly-convex}
Suppose, in addition to \eqref{eq:sgd-assumptions}, that $F$ is $\mu$-strongly convex with $\mu \le L$ and unique minimizer $x^\star$, and that the Robbins--Monro conditions hold. Then the scaled weight decay iterates \eqref{eq:scaled-wd-sgd} satisfy
\begin{equation}\label{eq:scaled-wd-sgd-strongly-convex-limit}
    \mathbb{E}\|x_t - x^\star\|_2^2 \to 0~,
\end{equation}
and for the harmonic schedule $\eta_t = \gamma/(t+t_0)$ with $\mu\gamma > 1$ the rate is $\mathbb{E}\|x_t - x^\star\|_2^2 = O(1/t)$.
\end{theorem}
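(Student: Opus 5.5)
We track the squared distance $D_t = \mathbb{E}\|x_t - x^\star\|_2^2$ and derive a one-step recursion that contracts at rate $1 - c\,\eta_t$ up to an additive term of order $\eta_t^2$. From there, both claims follow from a standard Chung/Robbins--Siegmund-type lemma.

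Rewrite the update \eqref{eq:scaled-wd-sgd} as
\[
x_{t+1}-x^\star = (x_t - x^\star) - \eta_t g_t - q\eta_t^2 (x_t - x^\star) - q\eta_t^2 x^\star .
\]
Expand the square and take conditional expectations. The cross term $-2\eta_t\langle x_t-x^\star,\nabla F(x_t)\rangle$ is at most $-2\mu\eta_t\|x_t-x^\star\|^2$ by strong convexity, using $\nabla F(x^\star)=0$. The second-moment term is bounded by
\[
\eta_t^2\big(\sigma^2+\rho\|\nabla F(x_t)\|^2\big)\le \eta_t^2\big(\sigma^2+\rho L^2\|x_t-x^\star\|^2\big).
\]
All remaining terms carry a factor $\eta_t^2$ or higher. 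We handle them by Young's inequality:
\[
2q\eta_t^2\,|\langle x_t-x^\star, x^\star\rangle| \le q\eta_t^2\|x_t-x^\star\|^2 + q\eta_t^2\|x^\star\|^2.
\]
The cross terms between $\eta_t g_t$ and $q\eta_t^2(\cdot)$ are $O(\eta_t^3)$ and are bounded the same way. The key point is that the decay perturbation $q\eta_t^2 x^\star$ enters only at order $\eta_t^2$; this is exactly where the scaled decay differs from the constant one, which would contribute a non-summable $\eta_t\|x^\star\|$ bias. The result is a recursion
\[
D_{t+1}\le \big(1-2\mu\eta_t + C_1\eta_t^2\big)D_t + C_2\eta_t^2,
\]
with constants $C_1$ depending on $\rho,L,q,\sup_t\eta_t$ and $C_2$ depending on $\sigma^2, q\|x^\star\|^2$. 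No bounded-iterate assumption is used, since $x_t$ enters only through $\|x_t-x^\star\|$.

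\emph{Convergence.} Because $\eta_t\to 0$ (implied by $\sum\eta_t^2<\infty$), there is $t_1$ after which $C_1\eta_t\le\mu$. For $t\ge t_1$ this gives
\[
D_{t+1}\le(1-\mu\eta_t)D_t + C_2\eta_t^2 .
\]
Before $t_1$, $D_t$ stays finite because finitely many steps of the recursion only multiply by bounded factors. For $t\ge t_1$ we apply the Robbins--Siegmund lemma, or unroll directly: $D_t$ converges and $\sum\mu\eta_t D_t<\infty$. Since $\sum\eta_t=\infty$, the limit must be $0$.

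\emph{Rate.} With $\eta_t=\gamma/(t+t_0)$ the recursion becomes
\[
D_{t+1}\le\Big(1-\tfrac{\mu'\gamma}{t+t_0}\Big)D_t+\tfrac{C_2\gamma^2}{(t+t_0)^2}.
\]
We use the contraction $1-2\mu\eta_t+C_1\eta_t^2$ itself rather than $1-\mu\eta_t$: for large $t$ this factor is $\le 1-\mu'\eta_t$ for any $\mu'<2\mu$. Taking $\mu'=\mu$ keeps $\mu'\gamma>1$ under the stated hypothesis, and the $2\mu$ slack leaves room to spare. We then prove $D_t\le K/(t+t_0)$ by induction for $t\ge t_1$, with
\[
K\ge \max\Big\{(t_1+t_0)D_{t_1},\ \tfrac{C_2\gamma^2}{\mu\gamma-1}\Big\}.
\]
The induction step is the usual check
\[
\Big(1-\tfrac{a}{s}\Big)\tfrac{K}{s}+\tfrac{b}{s^2}\le \tfrac{K}{s+1}, \qquad s=t+t_0,\ a=\mu\gamma,\ b=C_2\gamma^2,
\]
which holds when $K(a-1)\ge b$, using $1/s - 1/(s+1)\le 1/s^2$.

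The main obstacle is bookkeeping the $O(\eta_t^2)$ perturbation terms, in particular the factor $\rho L^2 D_t$ from the noise model and the $q$-terms, so that they are absorbed for large $t$ without requiring $\sup_t\mathbb{E}\|x_t\|^2<\infty$. I would handle this by collecting every such term into either $C_1\eta_t^2 D_t$ or $C_2\eta_t^2$ as above, and restricting the contraction argument to $t\ge t_1$.
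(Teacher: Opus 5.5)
Your proposal is correct and follows essentially the same route as the paper: expand $\|x_{t+1}-x^\star\|_2^2$, apply strong convexity and $\|\nabla F(x_t)\|_2\le L\|x_t-x^\star\|_2$, and absorb the shrink term by Young's inequality. Your combination of $-2q\eta_t^2\|e_t\|_2^2$ with the Young bound on $\langle e_t,x^\star\rangle$ is exactly the paper's $-2\langle e_t,x_t\rangle\le-\|e_t\|_2^2+\|x^\star\|_2^2$, and the rest reduces to $D_{t+1}\le(1-\mu\eta_t)D_t+\kappa\eta_t^2$ for large $t$; the only differences are that you get $D_t\to0$ from Robbins--Siegmund instead of the paper's $\epsilon$-tail unrolling, and your harmonic-rate induction (same constant $K$, with $t_1$ also chosen so that $\mu\eta_t\le1$) coincides with Lemma~\ref{lem:app-sa-recursion}.
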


The proof of both theorems is given in Appendix~\ref{app:scaled-wd-sgd}. The key point is that the scaled shrink term $q\eta_t^2 x_t$ enters the descent inequality only at order $\eta_t^2$, so it contributes only a summable perturbation to the stationarity analysis. Applying the strongly convex result to the shifted quadratic recovers convergence to the true minimizer, in direct contrast to the constant-decay failure of Proposition~\ref{prop:const-wd-nonconvergence}.

\begin{corollary}\label{cor:scaled-wd-quadratic}
Let $f(w) = a(w-c)^2/2$ with $a > 0$, and consider deterministic gradient descent with scaled weight decay, $w_{t+1} = w_t - \eta_t\, a(w_t - c) - q\eta_t^2 w_t$, under a Robbins--Monro schedule. Then $w_t \to c = w^\star$, the minimizer of the original objective.
\end{corollary}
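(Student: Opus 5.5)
The plan is to obtain the corollary as a direct specialization of Theorem~\ref{thm:scaled-wd-sgd-strongly-convex}. The objective $f(w) = a(w-c)^2/2$ on $\mathbb{R}$ is $L$-smooth with $L=a$, $\mu$-strongly convex with $\mu = a \le L$, and bounded below by $0$. Its unique minimizer is $x^\star = c$. For the oracle we take the deterministic gradient $g_t = f'(w_t) = a(w_t - c)$. Then $\mathbb{E}[g_t\mid\mathcal{F}_t] = f'(w_t)$ holds trivially, and $\mathbb{E}[g_t^2\mid\mathcal{F}_t] = f'(w_t)^2$, so \eqref{eq:sgd-assumptions} holds with $\sigma^2 = 0$ and $\rho = 1$. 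The stated recursion is exactly \eqref{eq:scaled-wd-sgd} with these choices. Under the Robbins--Monro hypothesis, Theorem~\ref{thm:scaled-wd-sgd-strongly-convex} therefore gives $\mathbb{E}|w_t - c|^2 \to 0$. Since everything is deterministic, the expectation drops and $w_t \to c$.

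As a sanity check independent of the theorem, I would also verify the result directly, since the 1D recursion is affine. Write $e_t = w_t - c$. The recursion becomes
\begin{equation*}
e_{t+1} = \bigl(1 - a\eta_t - q\eta_t^2\bigr) e_t - q\eta_t^2 c .
\end{equation*}
Because $\eta_t \to 0$, there is a $t_1$ after which $0 < 1 - a\eta_t - q\eta_t^2 \le 1 - a\eta_t/2$. Unrolling from $t_1$ gives
\begin{equation*}
|e_t| \le \prod_{s=t_1}^{t-1}\Bigl(1-\tfrac{a}{2}\eta_s\Bigr)|e_{t_1}| + |c|\,q\sum_{s=t_1}^{t-1}\eta_s^2 \prod_{r=s+1}^{t-1}\Bigl(1-\tfrac{a}{2}\eta_r\Bigr).
\end{equation*}
The first term tends to $0$ because $\sum_t \eta_t = \infty$. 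For the second term, split the sum at an index $N$. The tail $\sum_{s\ge N}\eta_s^2$ is small by square-summability, and for each fixed $s < N$ the product tends to $0$, again by $\sum_t\eta_t = \infty$.

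The only point needing care is the early phase, before $\eta_t$ is small, where the contraction factor may not lie in $(0,1)$. This is harmless, because only finitely many steps are affected and they just produce some finite $e_{t_1}$. Citing Theorem~\ref{thm:scaled-wd-sgd-strongly-convex} sidesteps this entirely, since its conclusion carries no step-size cap. I therefore expect no real obstacle, and would present the specialization as the proof with the direct computation as a remark.
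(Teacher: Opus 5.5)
Your proposal is correct and matches the paper's proof, which likewise notes $L=\mu=a$ and applies Theorem~\ref{thm:scaled-wd-sgd-strongly-convex} with deterministic gradients ($\sigma^2=0$, $\rho=1$). Your direct unrolling of $e_{t+1}=(1-a\eta_t-q\eta_t^2)e_t-q\eta_t^2 c$ is also valid; it is essentially the argument of Lemma~\ref{lem:app-sa-recursion} specialized to this scalar affine case.
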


\begin{proof}
The objective is $L$-smooth and $\mu$-strongly convex with $L = \mu = a$, so Theorem~\ref{thm:scaled-wd-sgd-strongly-convex} applies with deterministic gradients ($\sigma^2 = 0$, $\rho = 1$) and gives $w_t \to c$.
\end{proof}

\subsection{Convergence of scaled weight decay for Muon}
\label{subsec:scaled-wd-muon}

We now establish the analogous positive result for Muon. The setting is a matrix objective $f : \mathbb{R}^{m\times n} \to \mathbb{R}$ that is bounded below by $f_{\inf}$ and $L$-smooth in the Frobenius norm, with a stochastic gradient $G_t$ satisfying $\mathbb{E}[G_t \mid \mathcal{F}_t] = \nabla f(W_t)$ and $\mathbb{E}[\|G_t - \nabla f(W_t)\|_F^2 \mid \mathcal{F}_t] \le \sigma^2/B$, where $B$ is a fixed batch size. Muon with scaled weight decay forms the momentum buffer $M_t = \beta M_{t-1} + G_t$ with fixed $\beta \in [0,1)$, orthogonalizes it to $O_t = \mathrm{polar}(M_t)$, and updates
\begin{equation}\label{eq:muon-scaled-wd-theory}
    W_{t+1} = W_t - \eta_t O_t - q\, \eta_t^2 W_t, \qquad q = \frac{\lambda}{\etamax}~.
\end{equation}
We write $r = \min(m,n)$ for the ambient rank bound. Unlike SGD the natural stationarity measure for Muon is the \emph{nuclear norm} $\|\nabla f(W_t)\|_*$ rather than the Frobenius norm. When the orthogonalized direction is formed from the true gradient, spectral--nuclear duality gives $\langle \nabla f(W_t), O_t\rangle_F = \|\nabla f(W_t)\|_*$. With momentum and stochastic gradients this equality is perturbed by the buffer-tracking error, which is controlled in the appendix. For fixed momentum $\beta$ and a fixed batch $B$, the noise cannot be fully averaged out, so the guarantee holds only up to an explicit noise floor, which is also the case for Muon without any weight decay.

\begin{theorem}\label{thm:scaled-wd-muon-stationarity}
Suppose the iterates remain bounded, $\sup_t\mathbb{E}\|W_t\|_F^2 \le \mathcal{B}^2$, and the schedule satisfies $\sum_t \eta_t = \infty$ and $\sum_t \eta_t^2 < \infty$. Then Muon with scaled weight decay \eqref{eq:muon-scaled-wd-theory} satisfies
\begin{equation}\label{eq:muon-scaled-stationarity}
    \limsup_{T\to\infty}\ \frac{1}{S_T}\sum_{t=0}^{T-1} \eta_t\, \mathbb{E}\|\nabla f(W_t)\|_*
    \;\le\; 2\sigma\sqrt{\frac{(1-\beta)\,r}{B}},
    \qquad S_T = \sum_{t=0}^{T-1}\eta_t~,
\end{equation}
The bias and drift terms vanish in the limit; the remaining floor is the same noise floor as for Muon without weight decay, since the scaled shrink term enters the descent inequality only at order $\eta_t^2$ and contributes a summable perturbation.
\end{theorem}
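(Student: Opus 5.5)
The plan is to run the standard nonconvex descent-lemma analysis for Muon and treat the scaled shrink term $q\eta_t^2 W_t$ as a summable perturbation. Set $D_t = -O_t - q\eta_t W_t$, so that $W_{t+1} = W_t + \eta_t D_t$. By $L$-smoothness in the Frobenius norm,
\begin{equation*}
f(W_{t+1}) \le f(W_t) - \eta_t\langle \nabla f(W_t), O_t\rangle - q\eta_t^2\langle \nabla f(W_t), W_t\rangle + \tfrac{L}{2}\eta_t^2\|O_t + q\eta_t W_t\|_F^2 .
\end{equation*}
We use $\|O_t\|_F^2 \le r$, which gives $\|O_t + q\eta_t W_t\|_F^2 \le 2r + 2q^2\eta_t^2\|W_t\|_F^2$. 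For the cross term, smoothness gives $\|\nabla f(W_t)\|_F \le \|\nabla f(0)\|_F + L\|W_t\|_F$, so $|\langle \nabla f(W_t), W_t\rangle| \le \|\nabla f(0)\|_F\|W_t\|_F + L\|W_t\|_F^2$. By the bounded-iterate assumption $\sup_t\mathbb{E}\|W_t\|_F^2\le\mathcal{B}^2$ (with Cauchy--Schwarz), its expectation is bounded by a constant $C_1$. Hence all shrink and curvature contributions are at most $C\eta_t^2$ in expectation, and they are summable because $\sum_t\eta_t^2<\infty$.

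Next we handle the alignment term. Write $\widehat M_t = (1-\beta)M_t$, which is an exponential average of $G_s$ with weights summing to at most $1$ (the polar factor is scale invariant, so $O_t = \mathrm{polar}(\widehat M_t)$). The duality bounds $\langle \widehat M_t, O_t\rangle = \|\widehat M_t\|_*$ and $|\langle E, O_t\rangle| \le \|E\|_*$ give
\begin{equation*}
\langle \nabla f(W_t), O_t\rangle \ge \|\nabla f(W_t)\|_* - 2\|\widehat M_t - \nabla f(W_t)\|_* \ge \|\nabla f(W_t)\|_* - 2\sqrt r\,\|\widehat M_t - \nabla f(W_t)\|_F .
\end{equation*}
Split the error $E_t = \widehat M_t - \nabla f(W_t)$ into three parts.
\begin{itemize}[leftmargin=*]
\item \emph{Noise part.} This is $(1-\beta)\sum_{s\le t}\beta^{t-s}\xi_s$. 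The noise terms are martingale differences, so its second moment is at most $(1-\beta)^2\sigma^2/(B(1-\beta^2)) = \frac{(1-\beta)\sigma^2}{(1+\beta)B} \le (1-\beta)\sigma^2/B$. Jensen's inequality then gives the floor $\sigma\sqrt{(1-\beta)/B}$.
\item \emph{Drift part.} This is $(1-\beta)\sum_s\beta^{t-s}(\nabla f(W_s)-\nabla f(W_t))$. It is bounded by $L\sum_s\beta^{t-s}\sum_{k=s}^{t-1}\|W_{k+1}-W_k\|_F$, and each increment satisfies $\|W_{k+1}-W_k\|_F\le\eta_k(\sqrt r + q\eta_k\|W_k\|_F)$.
\item \emph{Initialization bias.} The weights sum to $1-\beta^{t+1}$, which leaves a $\beta^{t+1}\nabla f(W_t)$ term of size $O(\beta^t)$ in expectation, again by boundedness.
\end{itemize}

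Now sum the descent inequality from $t=0$ to $T-1$, take expectations, and telescope using $f\ge f_{\inf}$:
\begin{equation*}
\sum_{t<T}\eta_t\,\mathbb{E}\|\nabla f(W_t)\|_* \le f(W_0)-f_{\inf} + 2\sigma\sqrt{\tfrac{(1-\beta)r}{B}}\,S_T + 2\sqrt r\sum_{t<T}\eta_t(\text{drift}_t+\text{bias}_t) + C\sum_{t<T}\eta_t^2 .
\end{equation*}
Dividing by $S_T\to\infty$ and taking the $\limsup$ leaves exactly the stated floor, provided the bias and drift sums are $o(S_T)$.

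\textbf{Bias sum.} We have $\sum_t\eta_t\beta^t<\infty$, since $\eta_t$ is bounded (as $\eta_t\to0$).

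\textbf{Drift sum (the main obstacle).} Here I would exchange the order of summation. The contribution is $\sum_t\eta_t\sum_{s\le t}\beta^{t-s}\sum_{k=s}^{t-1}\eta_k c_k$, with $\mathbb{E}c_k$ bounded. Each pair $(t,k)$ with $k<t$ carries total weight at most $\sum_{s\le k}\beta^{t-s}\le\beta^{t-k}/(1-\beta)$. The sum is therefore at most $\frac{C}{1-\beta}\sum_t\sum_{k<t}\beta^{t-k}\eta_t\eta_k$, which by $\eta_t\eta_k\le(\eta_t^2+\eta_k^2)/2$ is at most $\frac{C'}{(1-\beta)^2}\sum_t\eta_t^2<\infty$.

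The delicate points are the $q\eta_k\|W_k\|_F$ piece inside $c_k$, which needs Cauchy--Schwarz for products such as $\mathbb{E}\|W_k\|_F$ and is bounded via $\mathcal{B}$, and ensuring that conditioning is applied correctly for the martingale noise bound. Once these are checked, all non-floor terms are $O(1)$ and vanish after division by $S_T$, which yields the stated bound.
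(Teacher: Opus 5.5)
Your proof is correct. Its skeleton matches the paper's: the descent lemma, the duality bound $\langle\nabla f(W_t),O_t\rangle\ge\|\nabla f(W_t)\|_*-2\sqrt r\,\|E_t\|_F$, and the bound $\|\nabla f(0)\|_F\mathcal{B}+L\mathcal{B}^2$ on the shrink cross term.

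You differ from the paper in how you handle the tracking error $E_t$.
\begin{itemize}
\item The paper uses the one-step recursion $E_t=\beta E_{t-1}+\beta\delta_t+(1-\beta)\xi_t$. Young's inequality then gives $\mathbb{E}\|E_t\|_F^2\le\beta\,\mathbb{E}\|E_{t-1}\|_F^2+\tfrac{\beta^2}{1-\beta}\mathbb{E}\|\delta_t\|_F^2+(1-\beta)^2\sigma^2/B$. From this the paper concludes only $\limsup_t\mathbb{E}\|E_t\|_F^2\le(1-\beta)\sigma^2/B$, which needs just $\eta_t\to0$. It then passes the $\limsup$ through the weighted Ces\`aro average.
\item You instead unroll $E_t$ into noise, drift and initialization-bias pieces. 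You bound the noise piece uniformly in $t$ by martingale orthogonality. You show the drift and bias pieces have finite $\eta_t$-weighted sums, using the exchange of summation and $\eta_t\eta_k\le(\eta_t^2+\eta_k^2)/2$. This uses $\sum_t\eta_t^2<\infty$ a second time, which the hypotheses allow.
\end{itemize}

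Your route needs more bookkeeping, but it gives more. You get a non-asymptotic bound
$\frac{1}{S_T}\sum_{t<T}\eta_t\mathbb{E}\|\nabla f(W_t)\|_*\le 2\sigma\sqrt{(1-\beta)r/((1+\beta)B)}+C/S_T$
with an explicit finite $C$. That is an $O(1/S_T)$ approach to the floor, and the floor itself is smaller than the stated one by the factor $(1+\beta)^{-1/2}$. The paper loses exactly this factor when its Young step replaces the contraction $\beta^2$ by $\beta$. The paper's recursive lemma, in turn, is shorter and modular: it is the standard momentum-tracking argument and is agnostic to summability in that step. However, it yields only the asymptotic $\limsup$ statement, with the looser constant.
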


The noise floor is due to a fixed batch size and momentum and matches the previous analysis of convergence of Muon without weight decay \cite{shen2025muon,sato2026muon,chang2026convergencemuon}. The noise term vanishes in the deterministic case $\sigma = 0$. In that regime the weighted average of $\mathbb{E}\|\nabla f(W_t)\|_*$ tends to zero, and when $f$ is in addition convex and the iterates are almost surely bounded this gives a best-iterate guarantee in expected function value.

\begin{theorem}\label{thm:scaled-wd-muon-convex}
Suppose the assumptions of Theorem~\ref{thm:scaled-wd-muon-stationarity} hold with $\sigma = 0$, additionally suppose that $\|W_t\|_F\le\mathcal{B}$ almost surely for every $t$, and suppose $f$ is convex with a minimizer $W^\star$ and $f^\star = f(W^\star)$. Then
\begin{equation}\label{eq:muon-scaled-function-gap}
    \frac{1}{S_T}\sum_{t=0}^{T-1}\eta_t\,\mathbb{E}\bigl[f(W_t)-f^\star\bigr]\to0,
    \qquad
    \liminf_{t\to\infty}\mathbb{E}\bigl[f(W_t)-f^\star\bigr]=0~,
\end{equation}
so Muon with scaled weight decay has a best-iterate convergence guarantee for the original objective.
\end{theorem}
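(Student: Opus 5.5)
The plan is to combine convexity with the deterministic stationarity bound from Theorem~\ref{thm:scaled-wd-muon-stationarity}. The link between function gap and nuclear norm of the gradient is the spectral--nuclear duality:
\begin{equation*}
f(W_t)-f^\star \le \langle \nabla f(W_t), W_t - W^\star\rangle_F \le \|\nabla f(W_t)\|_*\,\|W_t - W^\star\|_{\mathrm{op}}.
\end{equation*}
The first inequality is convexity and the second is H\"older duality between the nuclear and operator norms. By the almost sure bound $\|W_t\|_F\le\mathcal{B}$, we have $\|W_t-W^\star\|_{\mathrm{op}} \le \|W_t\|_F+\|W^\star\|_F \le \mathcal{B}+\|W^\star\|_F =: D$, which holds pathwise. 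Hence $f(W_t)-f^\star \le D\,\|\nabla f(W_t)\|_*$ almost surely, and taking expectations gives $\mathbb{E}[f(W_t)-f^\star]\le D\,\mathbb{E}\|\nabla f(W_t)\|_*$.

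The almost sure bound is needed here rather than the second-moment bound $\sup_t\mathbb{E}\|W_t\|_F^2\le\mathcal{B}^2$. With only the second-moment bound, Cauchy--Schwarz would produce $\mathbb{E}\|\nabla f\|_*^2$, which Theorem~\ref{thm:scaled-wd-muon-stationarity} does not control. This is presumably why the statement strengthens the assumption.

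Next I multiply by $\eta_t$, sum over $t<T$, and divide by $S_T$:
\begin{equation*}
0\le \frac{1}{S_T}\sum_{t=0}^{T-1}\eta_t\,\mathbb{E}[f(W_t)-f^\star] \le \frac{D}{S_T}\sum_{t=0}^{T-1}\eta_t\,\mathbb{E}\|\nabla f(W_t)\|_*.
\end{equation*}
With $\sigma=0$, Theorem~\ref{thm:scaled-wd-muon-stationarity} says the $\limsup$ of the right-hand side is at most $D\cdot 0=0$. The needed hypotheses (second-moment boundedness and the Robbins--Monro conditions) are implied by our assumptions, since the almost sure bound gives $\mathbb{E}\|W_t\|_F^2\le\mathcal{B}^2$. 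Nonnegativity of the left-hand side then gives the first limit in \eqref{eq:muon-scaled-function-gap}.

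For the $\liminf$ statement, I argue by contradiction. Suppose $\liminf_t \mathbb{E}[f(W_t)-f^\star]=2\epsilon>0$. Then $\mathbb{E}[f(W_t)-f^\star]\ge\epsilon$ for all $t\ge t_1$. The weighted average is then at least $\epsilon\,(S_T-S_{t_1})/S_T$, which tends to $\epsilon$ because $S_T\to\infty$ by $\sum_t\eta_t=\infty$. This contradicts the first limit.

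The main obstacle is making sure the deterministic case of Theorem~\ref{thm:scaled-wd-muon-stationarity} really yields a vanishing weighted average. In particular, the momentum-bias term, which is of order $\eta_t/(1-\beta)$ times $L$ times a bounded quantity, and the shrink term $q\eta_t^2\langle\nabla f,W_t\rangle$ must both be summable or $o(S_T)$. For the shrink term I would use the bound $\|\nabla f(W_t)\|_F\le\|\nabla f(W^\star)\|_F+L\|W_t-W^\star\|_F$, which holds by $L$-smoothness and the almost sure boundedness. I take these as given from the cited theorem, so the remaining work is the duality step above.
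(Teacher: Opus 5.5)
Your proposal is correct and follows essentially the same route as the paper: convexity plus nuclear--operator duality and the almost-sure bound give $\mathbb{E}[f(W_t)-f^\star]\le(\mathcal{B}+\|W^\star\|_F)\,\mathbb{E}\|\nabla f(W_t)\|_*$. The $\sigma=0$ case of Theorem~\ref{thm:scaled-wd-muon-stationarity}, together with nonnegativity of the summands and $\sum_t\eta_t=\infty$, then yields both the weighted-average limit and the $\liminf$ statement.
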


Both theorems are proved in Appendix~\ref{app:scaled-wd-muon}. Applying the convex result to the shifted matrix quadratic shows that scaled-decay Muon approaches the true minimizer along a subsequence, in direct contrast to the constant-decay trap of Proposition~\ref{prop:const-wd-muon}.

\begin{corollary}\label{cor:scaled-wd-muon-quadratic}
Let $f(W) = \tfrac{1}{2}\|W - A\|_F^2$ and consider deterministic Muon with scaled weight decay under a Robbins--Monro schedule, assuming the iterates are bounded. Then $\liminf_{t\to\infty}f(W_t)=0$, so there is a subsequence $W_{t_k}\to A=W^\star$, the minimizer of the original objective. In particular scaled-decay Muon is not asymptotically separated from targets outside the radius-$1/\lambda$ ball that traps constant weight decay in Proposition~\ref{prop:const-wd-muon}.
\end{corollary}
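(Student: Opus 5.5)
The plan is to obtain the corollary as a direct specialization of Theorem~\ref{thm:scaled-wd-muon-convex}, followed by a short topological argument to pass from a liminf of function values to a convergent subsequence.

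\textbf{Step 1: verify the hypotheses.} The objective $f(W)=\tfrac12\|W-A\|_F^2$ is convex and bounded below by $f_{\inf}=0$. Its gradient $\nabla f(W)=W-A$ is $1$-Lipschitz in the Frobenius norm, so $f$ is $L$-smooth with $L=1$. The unique minimizer is $W^\star=A$, with $f^\star=0$. Deterministic Muon means $G_t=\nabla f(W_t)$, so the noise assumption holds with $\sigma=0$ for any batch size $B$. The schedule is Robbins--Monro by hypothesis. Since the iterates are deterministic, "bounded iterates" means $\sup_t\|W_t\|_F\le\mathcal{B}$ for some finite $\mathcal{B}$. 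This yields both the second-moment bound $\sup_t\mathbb{E}\|W_t\|_F^2\le\mathcal{B}^2$ required by Theorem~\ref{thm:scaled-wd-muon-stationarity} and the almost-sure bound required by Theorem~\ref{thm:scaled-wd-muon-convex}. All expectations are trivial here.

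\textbf{Step 2: apply the theorem and extract a subsequence.} Theorem~\ref{thm:scaled-wd-muon-convex} then gives $\liminf_{t\to\infty}f(W_t)=0$. Choose indices $t_k$ with $f(W_{t_k})\to0$. This means $\tfrac12\|W_{t_k}-A\|_F^2\to0$, hence $W_{t_k}\to A$ in Frobenius norm and, by norm equivalence in finite dimensions, in every norm. Because $f$ is strongly convex here, no compactness argument is needed: a small function gap directly controls the distance to $A$.

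\textbf{Step 3: contrast with the constant-decay trap.} Suppose $\sigma_{\max}(A)>1/\lambda$. Along the subsequence, $\|W_{t_k}-A\|_{\mathrm{op}}\le\|W_{t_k}-A\|_F\to0$, so $\liminf_t\|W_t-A\|_{\mathrm{op}}=0$. The bound \eqref{eq:muon-separation} of Proposition~\ref{prop:const-wd-muon} therefore fails for scaled-decay Muon. In particular, $\|W_{t_k}\|_{\mathrm{op}}\to\sigma_{\max}(A)>1/\lambda$, so the radius-$1/\lambda$ confinement \eqref{eq:muon-radius} does not hold either.

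There is no substantive obstacle, since the work is done by Theorem~\ref{thm:scaled-wd-muon-convex}. The only points needing care are that the boundedness assumption is checked in the almost-sure form that theorem requires, which is automatic for deterministic iterates, and that the conclusion is stated only as convergence along a subsequence, which is all the liminf yields.
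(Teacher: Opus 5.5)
Your proposal is correct and follows the paper's proof: you check that $f$ is $1$-smooth, convex, bounded below by $0$, and noise-free ($\sigma=0$), invoke Theorem~\ref{thm:scaled-wd-muon-convex} (which rests on Theorem~\ref{thm:scaled-wd-muon-stationarity}) to get $\liminf_t f(W_t)=0$, and extract a subsequence along which $\tfrac12\|W_{t_k}-A\|_F^2\to 0$. The paper leaves two points implicit that you make explicit: that deterministic boundedness supplies the almost-sure bound the convex theorem requires, and (in your Step 3) that $\|W_{t_k}\|_{\mathrm{op}}\to\sigma_{\max}(A)>1/\lambda$.
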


\begin{proof}
The objective is $1$-smooth, convex, and bounded below by $0$, with deterministic gradient ($\sigma = 0$, so the noise floor in \eqref{eq:muon-scaled-stationarity} vanishes). Theorem~\ref{thm:scaled-wd-muon-stationarity} gives $\tfrac{1}{S_T}\sum_t \eta_t\|\nabla f(W_t)\|_* \to 0$, and Theorem~\ref{thm:scaled-wd-muon-convex} gives $\liminf_t f(W_t)=0$. Choosing a subsequence along which $f(W_{t_k})\to0$ and using $f(W)=\tfrac12\|W-A\|_F^2$ yields $W_{t_k}\to A$.
\end{proof}

All of these guarantees describe a single trajectory run with a decaying schedule $\eta_t \to 0$, which is where scaled and constant weight decay genuinely differ. The constant-decay perturbation accumulates as $\sum_t \eta_t$ and shifts the fixed point, whereas the scaled-decay perturbation is summable and thus does not. However, this separation does not arise at a fixed learning rate, where $q\eta^2$ and $\lambda\eta$ are both constants and the two updates coincide. Figure~\ref{fig:convergence-illustration} summarizes the full picture, contrasting the scaled and constant weight decay for both SGD and Muon on shifted quadratic objectives. 

\section{Experiment on Training MoE with Muon-SW}
\label{sec:experiments}
To empirically validate the efficacy of scaled weight decay, we train decoder-only MoE transformers with Muon-SW and compare against standard Muon. Our aim is to evaluate scaled weight decay under optimization conditions that mirror frontier pre-training, albeit in a scaled-down version. As summarized in Appendix \ref{app:frontier}, open pre-training recipes have largely converged on a common pattern. They apply Muon to the matrix-shaped Transformer weights, use a weight decay coefficient of $0.1$, decay the learning rate to one tenth of its peak with a cosine schedule, and train for several hundred tokens per active parameter \cite{kimiteam2026kimik2openagentic, 5team2025glm45agenticreasoningcoding, deepseekai2026deepseekv4highlyefficientmilliontoken}. We replicate these optimization choices, fixing $\lambda = 0.1$, using the same cosine schedule, and training at $600$--$650$ tokens per active parameter. The architecture is a LLaMA-style MoE decoder with $8$ experts and top-$2$ routing, trained on the FineWeb dataset tokenized with a GPT-2 tokenizer \cite{radford2019language, touvron2023llama2openfoundation, penedo2024finewebdatasetsdecantingweb}. The global batch consists of $256$ sequences of length $2{,}048$, corresponding to $256\times2{,}048=524{,}288$ tokens per optimizer update. Complete architecture and optimizer settings are given in Appendix \ref{app:arch-hypers}. Muon (and Muon-SW) is applied only to the matrix-shaped weights; the embedding, un-embedding, and other non-matrix parameters are trained with AdamW with no weight decay, and we rescale the Muon learning rate by $0.2\sqrt{\max(m,n)}$ for a weight of shape $m\times n$ so that the same learning rate can be used for both optimizers \cite{liu2025muonscalablellmtraining}.

Since hyperparameter tuning is computationally expensive, we employ $\mu$P-MoE parameterization to transfer the peak learning rate across widths \cite{yang2022tensorprogramsvtuning, malasnicki2025muparametrizationmixtureexperts}. We verify this width transfer directly in Appendix \ref{app:mup-moe} and adopt $\eta = 0.02$ as the transferred peak learning rate. Furthermore, because the optimal peak learning rate is a function of the training horizon, we rescale it by the cube-root rule $\eta^\star(T) = \eta^\star(T_0)(T/T_0)^{-1/3}$, which we also validate in Appendix \ref{app:mup-moe} \cite{bjorck2025scalingoptimallrtoken, ren2026}. Combining the two lets us set the peak learning rate for each width and token budget in Table \ref{tab:main-results} without any per-run tuning, so that the only deliberate change between the Muon and Muon-SW columns is the weight-decay rule itself.

\begin{table}[htb]
\centering
\small
\setlength{\tabcolsep}{4.5pt}
\renewcommand{\arraystretch}{1.15}
\begin{tabular}{@{}rcccccccc@{}}
\toprule
\textbf{Width} &
\textbf{Total} &
\textbf{Active} &
\textbf{Steps} &
\textbf{Tokens} &
\textbf{Tok./Act.} &
\textbf{Peak LR} &
\textbf{Loss (Muon / SW)} &
\textbf{Speed-up} \\
\midrule
256  & 72.7M  & 30.2M  & 35k  & 18.4B  & $\sim 610$ & $1.0{\times}10^{-2}$ & 3.350 / \textbf{3.297} & 21.7\% \\
512  & 264.9M & 95.0M  & 110k & 57.7B  & $\sim 610$ & $7.0{\times}10^{-3}$ & 2.971 / \textbf{2.915} & 27.5\% \\
768  & 520.0M & 180.3M & 223k & 116.9B & $\sim 650$ & $5.6{\times}10^{-3}$ & 2.799 / \textbf{2.754} & 26.5\% \\
1024 & 932.4M & 309.6M & 376k & 197.1B & $\sim 640$ & $4.8{\times}10^{-3}$ & 2.675 / \textbf{2.630} & 29.4\% \\
\bottomrule
\end{tabular}%
\caption{\textbf{Comparison of Muon and Muon-SW for MoE training.} All runs use $8$ experts with top-$2$ routing and are trained at $600$--$650$ tokens per active parameter with a constant true batch of $524{,}288$ tokens per optimizer update. The weight-decay coefficient is held constant at $0.1$ for both methods, and each run uses a linear warmup of $\max\{1000,\, 0.01\,T\}$ steps for total steps $T$ followed by a cosine decay to $1/10$ of the peak learning rate by the end of training. Peak learning rates are muP-transferred across width and then adjusted based on training duration. We report the best validation loss for Muon (standard constant weight decay) and Muon-SW (scaled weight decay); the speed-up is the reduction in optimizer steps for Muon-SW to reach Muon's best validation loss.}
\label{tab:main-results}
\end{table}

The results in Table \ref{tab:main-results} show that Muon-SW improves the best validation loss at every width. The speed-up tends to increase with scale, although it is not strictly monotonic across the evaluated widths, and is largest at width $1024$. Measured as the reduction in optimizer steps needed for Muon-SW to reach Muon's best loss, the speed-up rises from $21.7\%$ at width $256$ to $29.4\%$ at width $1024$. While our largest run is orders of magnitude below true frontier scale, the generally larger benefit at greater model sizes and longer training durations suggests that scaled weight decay could substantially accelerate frontier model training if the trend continues.

\section{Weight Norms under Constant and Scaled Weight Decay}
\label{sec:steady-state}

We now analyze how the size of the weights evolves over training under the two decay rules. Consider a single matrix-valued weight $W_t \in \mathbb{R}^{m\times n}$ updated by Muon. Writing $\eta_t$ for the base learning rate at step $t$, $\widehat{\eta}_t = 0.2\,\eta_t\sqrt{\max(m,n)}$ for the actual matrix-scaled Muon update magnitude, and $O_t = \mathrm{polar}(M_t)$ for the orthogonalized momentum direction, the Muon update with decoupled weight decay is
\begin{equation}\label{eq:ss-muon-update}
    W_{t+1} = (1 - \varphi_t)\,W_t - \widehat{\eta}_t\, O_t~,
\end{equation}
where the factor $0.2\sqrt{\max(m,n)}$ rescales the base learning rate so that the polar transformed update has a root-mean-square (RMS) scale comparable to an AdamW update \cite{liu2025muonscalablellmtraining}, and $\varphi_t$ is the decay factor computed from the base learning rate: $\varphi_t = \lambda\eta_t$ for constant weight decay and $\varphi_t = \lambda\eta_t^2/\etamax$ for scaled weight decay. In particular, Muon-SW uses
\[
    W_{t+1} = \left(1-\lambda\frac{\eta_t^2}{\etamax}\right)W_t - \widehat{\eta}_t O_t~.
\]

The quantity we track is the RMS norm of the weight,
\begin{equation}\label{eq:ss-rms-def}
    r_t = \frac{\|W_t\|_F}{\sqrt{mn}}~,
\end{equation}
which measures the typical magnitude of an individual entry of $W_t$ and is the natural scale-invariant summary of how large the weights are.

To see how $r_t$ evolves, we expand the squared Frobenius norm of the update \eqref{eq:ss-muon-update},
\begin{equation}\label{eq:ss-norm-expand}
    \|W_{t+1}\|_F^2
    = (1-\varphi_t)^2 \|W_t\|_F^2
    - 2(1-\varphi_t)\,(0.2\,\eta_t\sqrt{\max(m,n)}) \langle W_t, O_t\rangle
    + (0.2\,\eta_t)^2 \max(m,n)\, \|O_t\|_F^2~,
\end{equation}
where $\langle \cdot,\cdot\rangle$ is the Frobenius inner product. Because $O_t = \mathrm{polar}(M_t)$ has all nonzero singular values equal to $1$, its squared Frobenius norm equals its rank. In the generic case the momentum buffer is full rank, so $\|O_t\|_F^2 = \min(m,n)$ and hence $\max(m,n)\,\|O_t\|_F^2 = mn$. To capture the cross term, we introduce the normalized alignment between the update direction and the current weight,
\begin{equation}\label{eq:ss-alignment}
    a_t = \frac{\langle W_t, O_t\rangle}{\|W_t\|_F\,\|O_t\|_F} \in [-1,1]~,
\end{equation}
which is the cosine of the angle between $O_t$ and $W_t$. For the weight matrices that feed directly into a normalization layer, the loss is invariant to the scale of the rows (or columns) of $W_t$, and this scale-invariance forces the gradient to be orthogonal to the weight, $\langle W_t, \nabla f(W_t)\rangle = 0$. We verify this orthogonality in the left panel of Figure~\ref{fig:app-gm-alignment}. This orthogonality between weights and the gradient is exploited in the analysis of \cite{vanlaarhoven2017, kosson2024, defazio2025}.

Since the momentum buffer $M_t$ is an exponential average of such gradients, it does not inherit this instantaneous orthogonality property (right panel of Figure~\ref{fig:app-gm-alignment}), and neither does its polar transformation $O_t = \mathrm{polar}(M_t)$. Empirically the alignment $a_t$ is negative, meaning the update pushes the weight outward rather than inward, as displayed in the left panel of Figure~\ref{fig:w1024-align}. Using $\|O_t\|_F = \sqrt{\min(m,n)}$, the cross term becomes $2(1-\varphi_t)(0.2\,\eta_t)\,a_t\,\|W_t\|_F\sqrt{mn}$; substituting into \eqref{eq:ss-norm-expand} and dividing by $mn$, the shape factors cancel and the RMS norm obeys the clean recursion
\begin{equation}\label{eq:ss-rms-recursion}
    r_{t+1}^2
    = (1-\varphi_t)^2\, r_t^2
    - 2(1-\varphi_t)\,(0.2\,\eta_t)\, a_t\, r_t
    + (0.2\,\eta_t)^2~.
\end{equation}

The base coefficient $\lambda\eta_t$ is small (of order $10^{-3}$ or below at the learning rates we use), and the scaled coefficient $\lambda\eta_t^2/\etamax$ is smaller still by the factor $\eta_t/\etamax \le 1$. We may therefore drop terms of order $\varphi_t^2$ and $\varphi_t\eta_t$ wherever they appear against the leading contributions. Thus, the recursion \eqref{eq:ss-rms-recursion} simplifies to
\begin{equation}\label{eq:ss-rms-recursion-simplified}
    r_{t+1}^2
    \approx (1 - 2\varphi_t)\, r_t^2
    - 2\,(0.2\,\eta_t)\, a_t\, r_t
    + (0.2\,\eta_t)^2~.
\end{equation}

\begin{figure}[htb]
    \centering
    \includegraphics[width=0.9\linewidth]{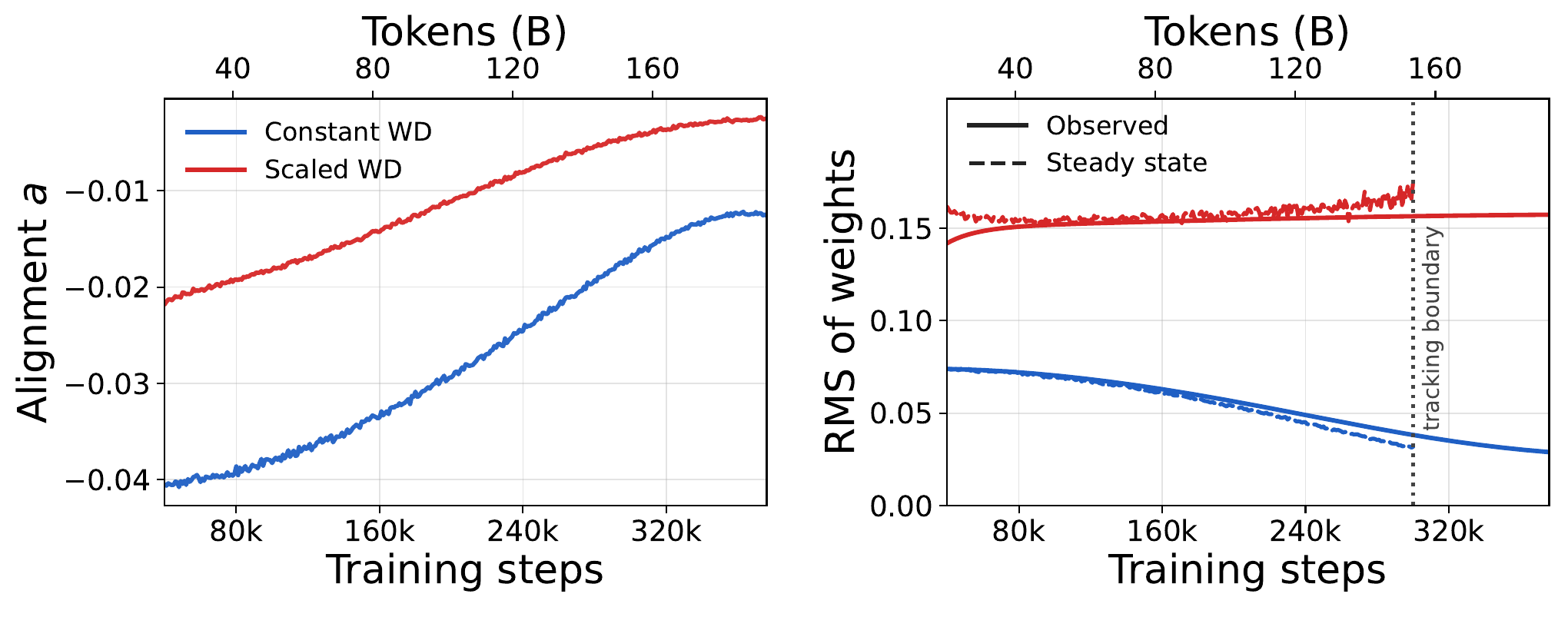}
    \caption{\textbf{Weight--update alignment and steady-state weight norm (width 1024).} \emph{Left:} the aggregate alignment $a_t$ between the weights and the Muon update is negative under both rules, with its magnitude decreasing through training as the learning rate and relative update size fall. \emph{Right:} the observed RMS weight norm (solid) against the instantaneous steady-state prediction (dashed). The theory captures the separation between the two, with scaled weight decay sustaining a nearly constant norm, whereas constant weight decay produces a norm that falls by roughly $60\%$ from its peak value. The steady-state approximation is predictive before the marked tracking boundary near $300$k steps; beyond it the remaining relaxation time is too short for the norm to follow its moving target.}
    \label{fig:w1024-align}
\end{figure}

At each step the norm is pulled toward the value at which it would stop changing, $r_{t+1} = r_t = r^\star_t$. This is a \emph{quasi-steady} target with the coefficients $\eta_t$, $\varphi_t$, and $a_t$ drifting slowly with the schedule, so rather than a single fixed point we obtain a time-dependent target $r^\star_t$ that the norm chases. Setting $r_{t+1} = r_t = r^\star_t$ in \eqref{eq:ss-rms-recursion-simplified} and writing $a_t = -|a_t|$ (the alignment is negative) gives a quadratic in $r^\star_t$,
\begin{equation}\label{eq:ss-quadratic}
    2\varphi_t\, (r^\star_t)^2 - 2\,(0.2\,\eta_t)\,|a_t|\, r^\star_t - (0.2\,\eta_t)^2 = 0~,
\end{equation}
whose positive root is
\begin{equation}\label{eq:ss-rstar}
    r^\star_t = \frac{0.2\,\eta_t}{2\varphi_t}\Bigl(|a_t| + \sqrt{a_t^2 + 2\varphi_t}\Bigr)~.
\end{equation}
% Substituting the two decay coefficients makes the dependence on $\lambda$, $\eta_t$, and $a_t$ explicit. For constant weight decay ($\varphi_t = \lambda\eta_t$) the leading factor of $\eta_t$ cancels,
% \begin{equation}\label{eq:ss-rstar-const-explicit}
%     r^\star_t = \frac{0.1}{\lambda}\Bigl(|a_t| + \sqrt{a_t^2 + 2\lambda\eta_t}\Bigr)~,
% \end{equation}
% whereas for scaled weight decay ($\varphi_t = \lambda\eta_t^2/\etamax$) one factor of $\eta_t$ survives in the denominator,
% \begin{equation}\label{eq:ss-rstar-scaled-explicit}
%     r^\star_t = \frac{0.1\,\etamax}{\lambda\eta_t}\Bigl(|a_t| + \sqrt{a_t^2 + 2\lambda\eta_t^2/\etamax}\Bigr)~.
% \end{equation}

The quasi-steady target \eqref{eq:ss-rstar} balances the decay towards the origin against the outward push of the aligned update. Note that for Muon the polar transformation is applied by five Newton--Schulz steps, which is not sufficient to produce a perfectly orthogonal factor: as shown in Figure~\ref{fig:app-ns5}, the applied update has normalized Frobenius norm close to $0.88$ rather than the ideal value of one, so its RMS is smaller than $0.2\,\eta_t$. We therefore evaluate \eqref{eq:ss-rstar} with the measured update magnitude in place of $0.2\,\eta_t$. The resulting target $r^\star_t$ is the dashed curve in the right panel of Figure~\ref{fig:w1024-align}; it tracks the observed norm across the bulk of training and separates the two decay rules sharply.

The quasi-steady description is accurate only while the norm relaxes toward the target faster than the target itself moves. Freezing the slowly varying coefficients, the recursion \eqref{eq:ss-rms-recursion-simplified} contracts toward $r^\star_t$ with multiplier $1 - 2\varphi_t$ per step, so the norm relaxes on a timescale
\begin{equation}\label{eq:ss-tau}
    \tau_t = \frac{1}{2\varphi_t}~,
\end{equation}
and adiabatic tracking requires
\begin{equation}\label{eq:ss-adiabatic}
    \tau_t\,\Bigl\lvert \frac{\mathrm{d}\ln r^\star_t}{\mathrm{d}t}\Bigr\rvert \ll 1~.
\end{equation}
Since $\varphi_t$ is proportional to the learning rate ($\lambda\eta_t$ for constant decay and $\lambda\eta_t^2/\etamax$ for scaled decay), the relaxation time $\tau_t$ lengthens as the schedule decays, and for scaled decay it grows as $\eta_t^{-2}$. In the width-1024 run condition \eqref{eq:ss-adiabatic} holds comfortably through the bulk of training and is violated only in the final decay tail, near $300$k steps, which is the boundary marked in Figure~\ref{fig:w1024-align}. The observed departure from \eqref{eq:ss-rstar} in the tail is a breakdown of adiabatic tracking, not of the recursion relation \eqref{eq:ss-rms-recursion-simplified}.

The two decay rules produce qualitatively different targets determined by how the alignment changes with the learning rate. As shown in Figure~\ref{fig:app-late-alignment}, under scaled weight decay the alignment is proportional to the learning rate, $a_t \approx -k\,\eta_t$ with $k \approx 20.8$, whereas under constant weight decay it retains a nonzero offset, $a_t \approx -a_0 - k'\eta_t$ with a small coefficient $a_0 \approx 7.4\times10^{-3}$, so that $\lvert a_t\rvert$ approaches a constant as $\eta_t \to 0$. Substituting these fits into the full quasi-steady target \eqref{eq:ss-rstar} determines the end-of-training norm as $\eta_t$ reaches its floor. For scaled weight decay every term carries a factor $\eta_t^2$ that cancels, leaving a value independent of the learning rate,
\begin{equation}\label{eq:ss-rstar-scaled}
    r^\star_t \;\xrightarrow{}\; \frac{0.2\,\etamax}{2\lambda}\Bigl(k + \sqrt{k^2 + 2\lambda/\etamax}\Bigr)~,
\end{equation}
so the norm settles onto a high plateau. For constant weight decay the same limit is set by the alignment offset $a_0$,
\begin{equation}\label{eq:ss-rstar-constant}
    r^\star_t \;\xrightarrow{\eta_t\to 0}\; \frac{0.2\,a_0}{\lambda}~,
\end{equation}
reached from above as the magnitude of the alignment decreases. The two limits differ by more than an order of magnitude: with $\lambda = 0.1$ and $\etamax = 4.8\times10^{-3}$, \eqref{eq:ss-rstar-scaled} evaluates to $r^\star_t \approx 0.20$ while \eqref{eq:ss-rstar-constant} gives $r^\star_t \approx 0.015$. In other words, constant weight decay drives the weights toward an operating norm roughly $14\times$ smaller than scaled weight decay at the same peak learning rate and decay coefficient. These are the strict $\eta_t \to 0$ limits, but the collapse is already clear in Figure~\ref{fig:w1024-align}, where by the final learning rate $\eta_t = \etamax/10$ the norm under constant weight decay has fallen roughly $60\%$ from its post-warmup peak while the norm is essentially flat for scaled weight decay. At this final learning rate the two runs already differ by a factor of about $5.4$ ($r \approx 0.16$ against $r \approx 0.029$), and the same contrast is demonstrated at widths $512$ and $768$ in Figures~\ref{fig:app-w512} and~\ref{fig:app-w768}.

\section{Related Work}
\label{sec:related-work}

Weight decay for improving generalization by imposing an $\ell_2$ penalty was first proposed by Krogh and Hertz \cite{krogh1991}. The importance of decoupling the weight decay from the gradient update for adaptive optimizers was realized by Loshchilov and Hutter \cite{loshchilov2019decoupledweightdecayregularization}, and in the process they proposed AdamW which has been widely adopted for training deep neural networks. In \cite{dangelo2024needweightdecaymodern}, the authors explained how weight decay balances the bias-variance tradeoff, improving generalization and training stability for large language models. Variants on decoupled weight decay have been proposed such as cautious weight decay \cite{chen2026cautiousweightdecay}, and adjusting it based on the gradient norm \cite{xie2022understanding}. Entirely removing weight decay, and optimizing the network by restricting the weights to the hypersphere was suggested in \cite{ren2026,wen2026}. Due to the inclusion of LayerNorm in transformers, it is possible that good performance can be achieved by normalizing the weights of the network but this method has not been widely adopted and it requires further investigation. The steady-state dynamics of the weight norm under weight decay was analyzed by van Laarhoven \cite{vanlaarhoven2017}, extended to Adam and Lion in \cite{kosson2024}, and has also been applied to schedule-free spectral optimization \cite{apte2026anytimetrainingschedulefreespectral}.

The closest reference to our work is the paper by Defazio \cite{defazio2025}, which proposes the same correction, scaling weight decay by the fraction of peak learning rate $\eta_t/\eta_{\max}$, to combat the rise in gradient norms late in training. Our manuscript extends his results in three substantial ways. Firstly, his steady-state argument relies on the varying gradient norm and the orthogonality of gradient to normalized-layer weights, so it does not apply to optimizers like Muon with updates which are normalized (approximately) and not orthogonal to weight matrices. Secondly, we motivate the same rule through Robbins--Monro summability and establish that it preserves the asymptotic stationarity guarantee associated with the original unregularized loss. Thirdly, we undertake a systematic evaluation of the method on large-scale models across different model sizes and training horizons and demonstrate its effectiveness. With frontier training now shifting primarily to Muon over AdamW, our method is of immediate practical relevance. Independent evidence for the practical relevance of this rule comes from Inkling, a frontier-scale model trained by Thinking Machines Lab using Muon with a decay step $\propto \lambda\eta_t^2$ \cite{inkling2026}. They report that this choice kept the overall size of the model weights stable across training horizons, consistent with our results in Figures~\ref{fig:w1024-main} and~\ref{fig:w1024-align}.

\section{Conclusion and Future Work}
\label{sec:conclusion}

Drawing on the Robbins--Monro conditions \cite{Robbins1951}, we proposed scaled weight decay, which multiplies the decoupled decay coefficient by the ratio $\eta_t/\etamax$. Applied to Muon, the resulting Muon-SW optimizer attains the Muon validation loss using $30\%$ fewer training tokens at our largest scale. The speed-up generally increases with scale and is largest at width $1024$. At frontier scale these savings would lead to a sizable reduction in training cost.

Alongside these empirical results, we proved that scaled weight decay retains the asymptotic stationarity guarantees of unregularized SGD and Muon. In contrast, we showed that constant weight decay can remain bounded away from a minimizer on simple quadratic objectives. We also developed a steady-state analysis of the weight norm that shows why constant weight decay causes a substantial decrease in the weight norm, while it remains roughly constant under scaled weight decay. Taken together, these results show that scaled weight decay preserves the optimization benefits of conventional weight decay while avoiding the additional asymptotic bias induced by constant decay, requiring only a one-line modification to existing training pipelines.

There are several directions for future work. On the architecture side, it is important to test whether the same benefits carry over to other models such as Mamba \cite{gu2024mambalineartimesequencemodeling, dao2024transformersssmsgeneralizedmodels}, Samba \cite{ren2025sambasimplehybridstate}, Gated DeltaNet \cite{yang2025gateddeltanetworksimproving}, and Parallax \cite{zuo2026parallaxparameterizedlocallinear}. On the optimizer side, the scaling principle should apply to other optimizers for matrix-valued weights such as Shampoo \cite{gupta2018, anil2021}, SOAP \cite{vyas2025} and Aurora \cite{dewulf2026auroraleverageawarespectraloptimizer}. Confirming this empirically is a natural next step. It will also be fruitful to validate the method for training models designed for other domains such as vision and audio. Finally, it is important to study the impact of pre-training with scaled weight decay in the post-training of large language models including supervised fine-tuning \cite{ouyang2022traininglanguagemodelsfollow}, preference alignment (reinforcement learning from human feedback \cite{christiano2023deepreinforcementlearninghuman}, direct preference optimization \cite{rafailov2024directpreferenceoptimizationlanguage}) and reinforcement learning for reasoning \cite{shao2024deepseekmathpushinglimitsmathematical, Guo_2025}. 

\section*{Acknowledgments}
We are grateful to Pranav Deshpande and Pragna Subrahmanya for assistance with computing infrastructure. We thank Rob Otter for the executive support of the work, and our colleagues at the Global Technology Applied Research center of JPMorganChase
for support.

\section*{Disclaimer}
This paper was prepared for informational purposes with contributions from the Global Technology Applied Research center of JPMorgan Chase \& Co. This paper is not a product of the Research Department of JPMorgan Chase \& Co. or its affiliates. Neither JPMorgan Chase \& Co. nor any of its affiliates makes any explicit or implied representation or warranty and none of them accept any liability in connection with this paper, including, without limitation, with respect to the completeness, accuracy, or reliability of the information contained herein and the potential legal, compliance, tax, or accounting effects thereof. This document is not intended as investment research or investment advice, or as a recommendation, offer, or solicitation for the purchase or sale of any security, financial instrument, financial product or service, or to be used in any way for evaluating the merits of participating in any transaction.

\newpage 
\printbibliography

\newpage
\appendix

\noindent\makebox[\linewidth]{\rule{\linewidth}{1.5pt}}
\section*{\centering \large Technical Appendices and Supplementary Material}
\noindent\makebox[\linewidth]{\rule{\linewidth}{1.5pt}}

In this appendix, we provide additional background, experimental details, proofs, and implementation details that supplement the main text. This appendix is organized as follows:
\begin{itemize}
    \item In \cref{app:frontier}, we review the training recipes of recent open frontier Mixture-of-Experts models and in particular the number of tokens used in training per active model parameter.
    \item In \cref{app:arch-hypers}, we describe the MoE architecture and the hyperparameter choices used across all experiments.
    \item In \cref{app:mup-moe}, we review $\mu$P for MoE models and the learning-rate transfer rules used to tune learning rate across width and training horizon.
    \item In \cref{app:proof-details}, we give detailed proofs deferred from the main text, including the non-convergence of constant weight decay and convergence of scaled weight decay. 
    \item In \cref{app:additional-data}, we present additional data from the MoE training experiments. 
    \item In \cref{app:implementation}, we provide a PyTorch implementation of the Muon-SW algorithm with scaled weight decay.
\end{itemize}

\section{Training Details of Open Frontier Models}
\label{app:frontier}

Open frontier Mixture-of-Experts (MoE) models released since 2025 with disclosed training recipes have adopted Muon as the main optimizer over AdamW \cite{kimiteam2026kimik2openagentic, 5team2025glm45agenticreasoningcoding, deepseekai2026deepseekv4highlyefficientmilliontoken}. AdamW is still retained as the optimizer for selected parameter groups such as embeddings, normalization weights, or output heads. This combination improves optimization efficiency and final model quality relative to AdamW baselines. 
Rescaling the learning rate of Muon based on the shape of the parameters, allows one to use the same learning rate for both the optimizers when employed in conjunction.

To make the comparison concrete, Table \ref{tab:frontier-moe-training} collects the parts of these pre-training recipes that are most relevant for this paper: the active model size, the token horizon, the learning-rate schedule, and the Muon weight-decay value. The striking point is that the recipes have converged on a similar optimizer pattern: Muon is used for matrix-shaped Transformer weights, the nominal weight decay is around $0.1$, the cosine schedule ends at roughly one tenth of the peak learning rate, and the training horizon is several hundred tokens per active parameter. We replicate this qualitative regime in our experiments, which are more modest in scale but still allow us to study how changing from constant to scaled weight decay affects the optimization dynamics and speed of convergence.

\begin{table}[htb]
\centering
\small
\setlength{\tabcolsep}{4.5pt}
\renewcommand{\arraystretch}{1.15}
\begin{tabular}{@{}lccccccc@{}}
\toprule
\textbf{Model} &
\textbf{Total} &
\textbf{Active} &
\textbf{Tokens} &
\textbf{Peak LR} &
\textbf{Final LR} &
\textbf{WD} &
\textbf{Tok./Act.} \\
\midrule
GLM-4.5
& 355B
& 32B
& 22T
& $2.5{\times}10^{-4}$
& $2.5{\times}10^{-5}$
& 0.1
& $\sim 700$ \\

Kimi K2
& 1T
& 32B
& 15.5T
& $2.0{\times}10^{-4}$
& $2.0{\times}10^{-5}$
& 0.1
& $\sim 480$ \\

DeepSeek-V4-Pro
& 1.6T
& 49B
& 32T
& $2.0{\times}10^{-4}$
& $2.0{\times}10^{-5}$
& 0.1
& $\sim 650$ \\
\bottomrule
\end{tabular}
\caption{Model and pre-training details for recent open frontier MoE models. Total and active parameter counts are reported in parameters, tokens are reported in training tokens, and Tok./Act. is computed by dividing reported training tokens by activated parameters. The weight decay value is for matrix-shaped parameters trained with Muon, excluding embedding and unembedding layers.}
\label{tab:frontier-moe-training}
\end{table}

A second notable observation is that these models are trained on far more tokens per active parameter than the dense Chinchilla heuristic of roughly 20 tokens per parameter \cite{hoffmann2022trainingcomputeoptimallargelanguage}. For MoE models, this comparison should be made against active rather than total parameters, since training and inference FLOPs are governed primarily by the routed subset of parameters. Scaling laws for MoE models indicate that increasing the number of experts can itself shift the compute-optimal allocation toward fewer active parameters and more training tokens, raising the optimal tokens-per-active-parameter ratio relative to dense models \cite{krajewski2024scalinglawsfinegrainedmixture, tian2025greaterleveragescalinglaws}. Even taking this into account, the observed ratios are much larger than compute-optimal scaling would suggest.

Frontier MoE model development is not only optimized for pre-training compute efficiency, but must also account for deployment efficiency and serving cost. Since extra pre-training cost is paid once, while active-parameter inference cost is paid on every token served, it is economically rational to train a smaller active model on substantially more data. Thus, the very high tokens-per-active-parameter ratios observed in recent frontier MoEs likely reflect both MoE scaling behavior and lifetime-cost optimization \cite{erdil2025inferenceeconomicslanguagemodels}. 

\section{Architecture and Hyperparameter Details}
\label{app:arch-hypers}

All experiments use the same $\mu$P LLaMA-style \cite{touvron2023llama2openfoundation} sparse MoE decoder, trained on FineWeb \cite{penedo2024finewebdatasetsdecantingweb} with GPT-2 tokenization \cite{radford2019language}. Each model has 12 layers, head dimension 64, 8 experts per layer, and top-2 routing, so each token activates two experts per MoE layer. Each block is pre-norm, with RMS-Norm before attention and before the MoE MLP, plus a final RMSNorm before the tied output head \cite{zhang2019rootmeansquarelayer}. We use RoPE with base $\theta=10000$ \cite{su2023roformerenhancedtransformerrotary}, bias-free linear layers, no dropout, a SiLU-gated SwiGLU expert MLP \cite{shazeer2020gluvariantsimprovetransformer}, and standard MoE routing with \texttt{topk\_softmax} \cite{fedus2022}. Router regularization uses load-balancing loss with coefficient $0.1$ and z-loss with coefficient $0.01$. The experiments were carried out using the \texttt{llm-optimizer-benchmark} codebase \cite{semenov2025benchmarking}. 

\begin{table}[htb]
\centering
\small
\begin{tabular}{lrrrrr}
\toprule
\textbf{Detail} & \textbf{W128} & \textbf{W256} & \textbf{W512} & \textbf{W768} & \textbf{W1024} \\
\midrule
Total params & 26.1M & 72.7M & 264.9M & 520.0M & 932.4M \\
Active params & 12.0M & 30.2M & 95.0M & 180.3M & 309.6M \\
Attention heads & 2 & 4 & 8 & 12 & 16 \\
Expert FF dim & 512 & 768 & 1536 & 2048 & 2816 \\
\bottomrule
\end{tabular}
\caption{\textbf{Model architecture across widths.} The active parameters count the tied embedding \& un-embedding head, attention, routers, norms, and the top-2 selected experts in each layer.}
\label{tab:mup-moe-architecture-counts}
\end{table}

The shared training and optimizer settings are summarized in Table~\ref{tab:training-hyperparameters}. The $\mu$P parameter grouping uses the base learning rate for routers, the width-scaled learning rate $\eta/(d/256)$ for hidden and expert matrices, and zero weight decay on normalization and tied embedding/head parameters. For Muon matrix groups, we use the adjustment factor $0.2\eta_t\sqrt{\max{(m,n)}}$ which lets Muon use the same nominal learning rate as AdamW \cite{liu2025muonscalablellmtraining}. When running Muon, embedding, un-embedding, and non-matrix parameters are trained with AdamW at the identical nominal learning rate, with no weight decay applied to those groups. 

\begin{table}[htb]
\centering
\small
\begin{tabular}{ll}
\toprule
\textbf{Detail} & \textbf{Value} \\
\midrule
Data / tokenizer & FineWeb / GPT-2, vocabulary 50,304 \\
Sequence length & 2048 \\
Hardware & DDP on 8$\times$ A100 GPUs \\
Batch size & 524,288 tokens/update \\
AdamW betas & $(\beta_1,\beta_2)=(0.9,0.95)$ \\
Muon settings & $\beta=0.95$, 5 Newton--Schulz steps \\
Weight decay & $\lambda=0.1$; standard $\lambda\eta_t$, scaled $\lambda\eta_t^2/\eta_{\max}$ \\
Schedule & cosine One-Cycle with final LR $=0.1\peaklr$ \\
Warmup & $\max\{1000,0.01T\}$ steps for total steps $T$ \\
\bottomrule
\end{tabular}
\caption{\textbf{Training and optimization details.} These are shared training settings used across all the runs. The learning rate is adjusted based on the total training duration.}
\label{tab:training-hyperparameters}
\end{table}

\section{$\mu$P for MoE and Learning Rate Transfer Across Horizons}
\label{app:mup-moe}

A key problem in the training of large models is the difficulty of hyperparameter tuning. The $\mu$P parameterization, or maximal update parameterization, is designed to address this problem by enabling the transfer of hyperparameters across model width. This is particularly relevant for frontier models, which have a large number of parameters and can typically only be trained once at the largest scale. By using $\mu$P, one can tune hyperparameters on smaller models and then transfer those settings to larger models, saving computational resources \cite{yang2022tensorprogramsvtuning}.

\begin{table}[htb]
\centering
\begin{tabular}{lccc}
\hline
\textbf{Component} & \textbf{Init. Var.} & \textbf{Multiplier} & \textbf{LR (Adam)} \\
\hline
Embedding & $1$ & $1$ & $1$ \\
Unembedding & $1$ & $1/d_{\mathrm{in}}$ & $1$ \\
Attention $(Q,K,V,O)$ & $1/d_{\mathrm{in}}$ & $1$ & $1/d_{\mathrm{in}}$ \\
Feed-forward (dense) & $1/d_{\mathrm{in}}$ & $1$ & $1/d_{\mathrm{in}}$ \\
\hline
Experts (MoE) & $1/d_{\mathrm{in}}$ & $1$ & $1/d_{\mathrm{in}}$ \\
Router (MoE) & $1$ & $1/d_{\mathrm{in}}$ & $1$ \\
\hline
\end{tabular}
\caption{Recommended $\mu$P scaling rules for dense and MoE Transformers. Here $d_{\mathrm{in}}$ denotes the input dimension of the corresponding weight matrix.}
\label{tab:mup_moe_rules}
\end{table}

The fundamental idea behind $\mu$P is to modify the hyperparameters in a manner that preserves nontrivial feature-learning dynamics as width increases \cite{yang2024spectralconditionfeaturelearning}. This is achieved by scaling the learning rate, variance at initialization, and output multipliers of certain layers so that the training dynamics remain comparable across widths. This idea can be extended to MoE models, where router and expert weights are treated differently in terms of their learning rates and scaling \cite{malasnicki2025muparametrizationmixtureexperts}. For example, router weights are trained at the base learning rate, while the learning rate for expert weights is scaled inversely with width. Table \ref{tab:mup_moe_rules} summarizes the parameterization for dense and MoE Transformers. The listed $\mu$P-MoE rules are derived for width scaling with a fixed number of experts and fixed top-$k$ routing; experiments suggest that it is also possible to transfer hyper-parameters across model depth \cite{yang2023tensorprogramsvifeature}, MoE granularity or the number of experts \cite{ren2026rethinkinglanguagemodelscaling}.

\begin{figure}[htb]
    \centering
    \includegraphics[width=0.6\linewidth]{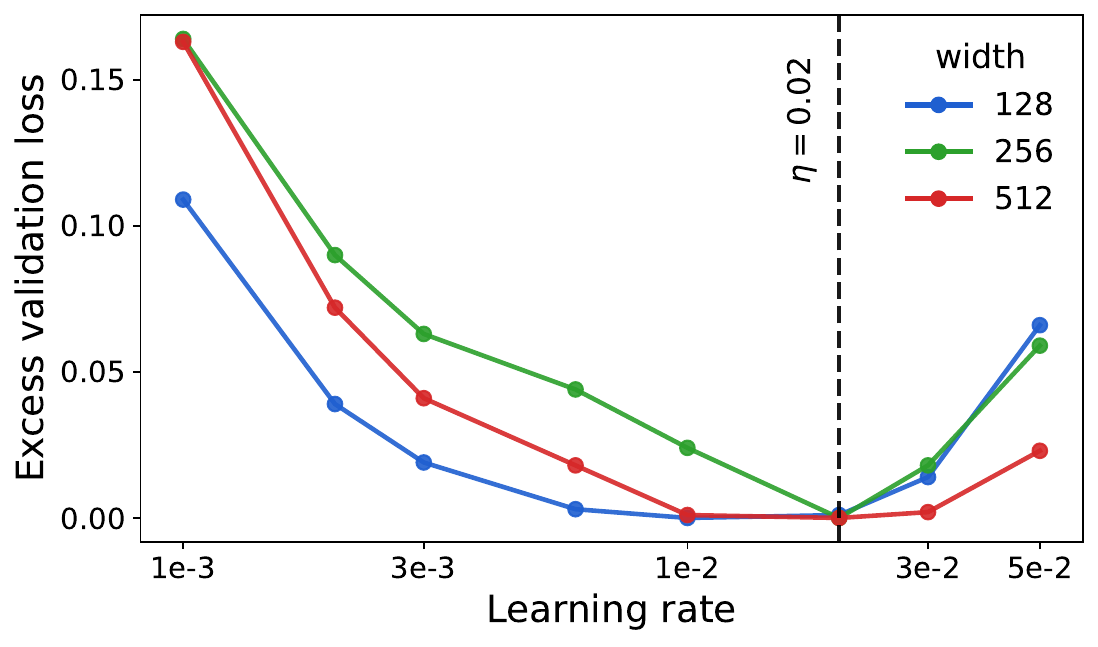}
    \caption{\textbf{$\mu$P-MoE learning-rate transfer across width.}
    We train $\mu$P LLaMA-style sparse MoE models at widths 128, 256, and 512 with fixed MoE granularity. The y-axis shows excess validation loss relative to the best learning rate at each width, making the transfer shape comparable across model sizes. A broad shared low-loss basin appears around $10^{-2}$--$3\times 10^{-2}$, and we select $\eta=0.02$ for subsequent experiments.}
    \label{fig:mup-moe-lr-transfer}
\end{figure}

To test whether the $\mu$P-MoE parameterization gives useful learning-rate transfer across width, we ran a controlled width sweep at fixed MoE granularity. We trained the same $\mu$P LLaMA-style sparse MoE architecture at widths 128, 256, and 512, keeping head dimension fixed by using 2, 4, and 8 attention heads respectively. We swept the peak AdamW learning rate and measured final validation loss after 5k optimizer steps. The results are shown in Figure \ref{fig:mup-moe-lr-transfer} with $\eta=0.02$ lying at or near the minimum for all three widths. We therefore use it as the transferred peak learning rate for subsequent experiments, before any adjustment due to total training duration.

% To test whether the $\mu$P-MoE parameterization gives useful learning-rate transfer across width, we ran a controlled width sweep at fixed MoE granularity. We trained the same $\mu$P LLaMA-style sparse MoE architecture at widths 128, 256, and 512, keeping head dimension fixed by using 2, 4, and 8 attention heads respectively. We swept the peak AdamW learning rate and measured final validation loss after 5k optimizer steps. The results are shown in Figure \ref{fig:mup-moe-lr-transfer}, and also tabulated in Table \ref{tab:mup_moe_lr_sweep}. 

% \begin{table}[htb]
% \centering
% \begin{tabular}{lccc}
% \toprule
% \textbf{Peak LR} & \textbf{Width 128} & \textbf{Width 256} & \textbf{Width 512} \\
% \midrule
% $1\times 10^{-3}$ & 3.948 & 3.667 & 3.451 \\
% $2\times 10^{-3}$ & 3.877 & 3.594 & 3.359 \\
% $3\times 10^{-3}$ & 3.856 & 3.565 & 3.327 \\
% $6\times 10^{-3}$ & 3.840 & 3.546 & 3.304 \\
% $1\times 10^{-2}$ & \textbf{3.835} & 3.524 & 3.287 \\
% $2\times 10^{-2}$ & \textbf{3.835} & \textbf{3.500} & \textbf{3.286} \\
% $3\times 10^{-2}$ & 3.845 & 3.516 & \textbf{3.286} \\
% $5\times 10^{-2}$ & 3.896 & 3.555 & 3.306 \\
% \bottomrule
% \end{tabular}
% \caption{$\mu$P-MoE learning-rate transfer sweep at widths 128, 256, and 512 with fixed MoE granularity: 8 experts per layer and top-2 routing. Final validation loss is reported after 5k optimizer steps. Bold values denote the best validation loss within each width, including ties.}
% \label{tab:mup_moe_lr_sweep}
% \end{table}

\begin{figure}[htb]
    \centering
    \includegraphics[width=0.6\linewidth]{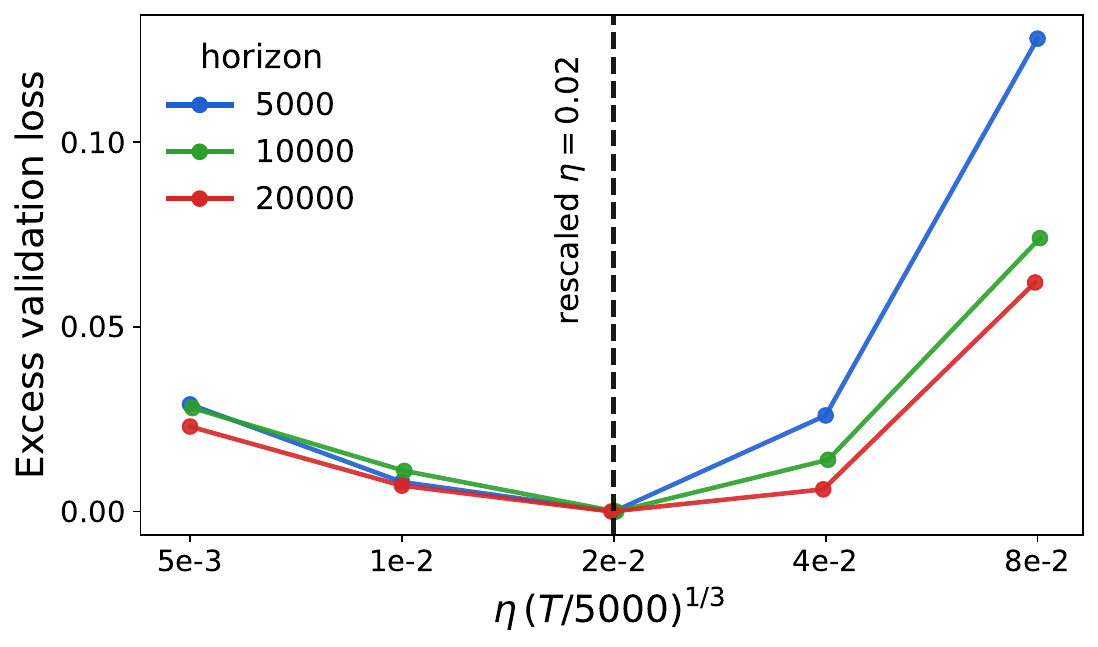}
    \caption{\textbf{Learning-rate transfer across training horizon.}
    We train the $\mu$P LLaMA-style sparse MoE model at width $256$ for horizons $T\in\{5000, 10000, 20000\}$ optimizer steps and sweep the peak AdamW learning rate. The y-axis shows excess validation loss relative to the best learning rate at each horizon; the x-axis is the horizon-rescaled learning rate $\eta\,(T/5000)^{1/3}$. Under this rescaling the three curves share a common minimum near $0.02$ (dashed line), so the peak learning rate for a new horizon can be obtained by rescaling the width-transferred value $\eta=0.02$.}
    \label{fig:mup-moe-horizon-transfer}
\end{figure}

Although $\mu$P transfers the peak learning rate across width, the optimal peak learning rate still depends on the training duration, with longer runs favoring smaller peak learning rates. The optimal learning rate for longer horizons can be computed by tuning the learning rate for a shorter horizon and transferring it using the law suggested in \cite{bjorck2025scalingoptimallrtoken, ren2026}:
\begin{equation}\label{eq:lr-horizon-law}
    \eta^\star(T) = \eta^\star(T_0)\,\bigg(\frac{T}{T_0}\bigg)^{-1/3}~,
\end{equation}
where $T_0$ is a reference horizon and $\eta^\star(T)$ is the optimal peak learning rate at horizon $T$.

We test this for our training setup by fixing the width at $256$ and sweeping the peak AdamW learning rate at three training horizons, $T \in \{5000, 10000, 20000\}$ optimizer steps. We find that the optima are well aligned once the learning rate is rescaled by $(T/5000)^{1/3}$ as illustrated in Figure \ref{fig:mup-moe-horizon-transfer}. This cube-root rule lets us set the peak learning rate for a target horizon by adjusting the width-transferred value of $\eta=0.02$. In combination, $\mu$P width transfer and the cube-root horizon rule let us set the peak learning rate for any width and training horizon directly, with no additional tuning needed.

\section{Proofs for Convergence Theory}
\label{app:proof-details}

This appendix collects the detailed proofs deferred from Section \ref{sec:convergenc-theory}. Throughout, the objective being optimized is the original unregularized training objective, so when we say that an algorithm converges the reference point is a minimizer (or stationary point) of that objective.

\subsection{Non-convergence of constant weight decay on a shifted quadratic}
\label{app:const-wd-1d}

We restate and prove in full the one-dimensional non-convergence statement given in the main text as Proposition \ref{prop:const-wd-nonconvergence}. The setup is deterministic (noise-free), so no stochastic assumptions are needed. Even in the absence of gradient noise the iteration fails to reach the minimizer.

\begin{proposition}[restatement of Proposition \ref{prop:const-wd-nonconvergence}]
\label{prop:app-const-wd-1d}
Let $a>0$ and $c\neq0$, and consider the deterministic quadratic $f(w)=\tfrac{a}{2}(w-c)^2$ with unique minimizer $w^\star=c$. Fix a weight-decay coefficient $\lambda>0$ and a learning-rate schedule $\{\eta_t\}_{t\ge0}$ with $0<\eta_t\le 1/(a+\lambda)$ and $\sum_{t\ge0}\eta_t=\infty$. Then gradient descent with constant decoupled weight decay,
\begin{equation}\label{eq:app-const-wd-update}
    w_{t+1}=(1-\eta_t\lambda)w_t-\eta_t\,a(w_t-c)~,
\end{equation}
converges, from every initial point $w_0\in\mathbb{R}$, to
\begin{equation}\label{eq:app-const-wd-limit}
    \bar w=\frac{ac}{a+\lambda}\neq c~,
\end{equation}
and the loss gap at the limit is
\begin{equation}\label{eq:app-const-wd-gap}
    f(\bar w)-f(c)=\frac{a c^2}{2}\left(\frac{\lambda}{a+\lambda}\right)^2>0~.
\end{equation}
In particular the iterates never converge to the minimizer $w^\star=c$.
\end{proposition}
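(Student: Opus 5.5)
The plan is to recenter the affine recursion \eqref{eq:app-const-wd-update} at its fixed point and then show that the deviation contracts to zero. First rewrite the update as $w_{t+1}=(1-\eta_t(a+\lambda))w_t+\eta_t ac$. A fixed point of this map is independent of $\eta_t$ exactly when $(a+\lambda)\bar w=ac$, which gives $\bar w=ac/(a+\lambda)$ as in \eqref{eq:app-const-wd-limit}. Define the deviation $e_t=w_t-\bar w$. Subtracting the fixed-point identity $\bar w=(1-\eta_t(a+\lambda))\bar w+\eta_t ac$ from the update yields the homogeneous linear recursion
\[
e_{t+1}=\bigl(1-\eta_t(a+\lambda)\bigr)e_t,
\qquad\text{so}\qquad
e_T=e_0\prod_{t=0}^{T-1}\bigl(1-\eta_t(a+\lambda)\bigr).
\]

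The key step is to show that this product tends to zero. The hypothesis $0<\eta_t\le 1/(a+\lambda)$ gives $0\le 1-\eta_t(a+\lambda)<1$. Using $1-x\le e^{-x}$, each factor is bounded by $\exp(-\eta_t(a+\lambda))$, so
\[
|e_T|\le |e_0|\exp\Bigl(-(a+\lambda)\sum_{t<T}\eta_t\Bigr).
\]
By the hypothesis $\sum_t\eta_t=\infty$, the right-hand side tends to $0$. Hence $w_t\to\bar w$ from every $w_0$. This is the only place where the divergence of the step sizes is used. The upper bound on $\eta_t$ is what prevents sign flips or growth of the factors, and it is the one point where care is needed, although it is elementary. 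If some factor vanishes, then $e_t=0$ from that step on, which is consistent with the conclusion.

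Finally, the separation and the loss gap follow by direct computation. We have $\bar w-c=ac/(a+\lambda)-c=-\lambda c/(a+\lambda)$, which is nonzero because $\lambda>0$ and $c\ne0$. Therefore $\bar w\neq c$, and
\[
f(\bar w)-f(c)=\frac a2(\bar w-c)^2=\frac{ac^2}{2}\Bigl(\frac{\lambda}{a+\lambda}\Bigr)^2>0,
\]
since $f(c)=0$. Because limits are unique and the sequence converges to $\bar w\ne c$, it cannot converge to $w^\star=c$. This completes the argument.
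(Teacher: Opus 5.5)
Your proposal is correct and follows essentially the same route as the paper's proof. Both rewrite the update as an affine map with fixed point $\bar w = ac/(a+\lambda)$, reduce to the homogeneous recursion $\delta_{t+1} = \bigl(1-\eta_t(a+\lambda)\bigr)\delta_t$, bound the product by $\exp\bigl(-(a+\lambda)\sum_s \eta_s\bigr)$ using $1-x\le e^{-x}$ and $\sum_t \eta_t = \infty$, and then compute the loss gap directly.
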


\begin{proof}
Since $\nabla f(w)=a(w-c)$, the update \eqref{eq:app-const-wd-update} is affine in $w_t$. Collecting terms,
\begin{equation}\label{eq:app-const-wd-affine}
    w_{t+1}=\bigl(1-\eta_t(a+\lambda)\bigr)w_t+\eta_t\,ac~.
\end{equation}
A point $\bar w$ is fixed by \eqref{eq:app-const-wd-affine} for every $\eta_t>0$ precisely when $\bigl(1-\eta_t(a+\lambda)\bigr)\bar w+\eta_t ac=\bar w$, that is when
\begin{equation*}
    -\eta_t(a+\lambda)\bar w+\eta_t ac=0
    \iff
    (a+\lambda)\bar w=ac~.
\end{equation*}
Because $a+\lambda>0$, this has the unique solution $\bar w=ac/(a+\lambda)$, which is \eqref{eq:app-const-wd-limit}. Since $\lambda>0$ and $c\neq0$,
\begin{equation}\label{eq:app-const-wd-displacement}
    \bar w-c=\frac{ac}{a+\lambda}-c=-\frac{\lambda c}{a+\lambda}\neq0~,
\end{equation}
so $\bar w\neq c=w^\star$.

Now let $\delta_t=w_t-\bar w$. Subtracting the fixed-point identity $\bar w=\bigl(1-\eta_t(a+\lambda)\bigr)\bar w+\eta_t ac$ from \eqref{eq:app-const-wd-affine} gives
\begin{equation}\label{eq:app-const-wd-error}
    \delta_{t+1}=\bigl(1-\eta_t(a+\lambda)\bigr)\delta_t~.
\end{equation}
Set $\rho_t=1-\eta_t(a+\lambda)$. The hypothesis $0<\eta_t\le 1/(a+\lambda)$ gives $0\le\rho_t<1$, so unrolling \eqref{eq:app-const-wd-error},
\begin{equation*}
    \delta_t=\Bigl(\prod_{s=0}^{t-1}\rho_s\Bigr)\delta_0,
    \qquad
    0\le\prod_{s=0}^{t-1}\rho_s\le\exp\!\Bigl(-(a+\lambda)\sum_{s=0}^{t-1}\eta_s\Bigr)~,
\end{equation*}
where the upper bound uses $1-x\le e^{-x}$. Since $\sum_s\eta_s=\infty$, the exponent tends to $-\infty$, so the product tends to $0$ and therefore $\delta_t\to0$, i.e.\ $w_t\to\bar w$, from every initial point $w_0$.

Finally, using \eqref{eq:app-const-wd-displacement} and $f(c)=0$,
\begin{equation*}
    f(\bar w)-f(c)=\frac{a}{2}(\bar w-c)^2
    =\frac{a c^2}{2}\left(\frac{\lambda}{a+\lambda}\right)^2~,
\end{equation*}
which is \eqref{eq:app-const-wd-gap}.
\end{proof}

\subsection{Non-convergence of constant weight decay for Muon}
\label{app:const-wd-muon}

We restate and prove Proposition~\ref{prop:const-wd-muon}. Consider the shifted matrix quadratic
\begin{equation}\label{eq:app-matrix-quadratic}
    f(W) = \frac{1}{2}\|W - A\|_F^2 = \frac{1}{2}\operatorname{tr}\!\bigl((W-A)^\top(W-A)\bigr),
    \qquad
    \nabla f(W) = W - A~,
\end{equation}
with $A \in \mathbb{R}^{m\times n}$ and unique minimizer $W^\star = A$. Muon with heavy-ball momentum $\beta \in [0,1)$ forms
\begin{align}
    M_t &= \beta M_{t-1} + \nabla f(W_t), \label{eq:app-muon-momentum}\\
    O_t &= \operatorname{polar}(M_t), \label{eq:app-muon-direction}
\end{align}
where $\operatorname{polar}(C) = UV^\top$ for a compact singular value decomposition $C = U\Sigma V^\top$ (and $\operatorname{polar}(0)=0$), so that $\|O_t\|_{\mathrm{op}} \le 1$ for every $t$. The constant-decay Muon update is
\begin{equation}\label{eq:app-muon-const-wd}
    W_{t+1} = (1 - \eta_t\lambda) W_t - \eta_t O_t~.
\end{equation}

\begin{proposition}[restatement of Proposition~\ref{prop:const-wd-muon}]
\label{prop:app-const-wd-muon}
Let $\lambda > 0$, suppose $0 < \eta_t\lambda \le 1$ and $\sum_t \eta_t = \infty$, and consider the Muon iteration \eqref{eq:app-muon-momentum}--\eqref{eq:app-muon-const-wd} on the objective \eqref{eq:app-matrix-quadratic}. Then, regardless of the momentum coefficient $\beta$ and the initialization $W_0$,
\begin{equation}\label{eq:app-muon-radius}
    \limsup_{t\to\infty} \|W_t\|_{\mathrm{op}} \le \frac{1}{\lambda}~.
\end{equation}
Consequently, if $\sigma_{\max}(A) > 1/\lambda$, then
\begin{equation}\label{eq:app-muon-separation}
    \liminf_{t\to\infty} \|W_t - A\|_{\mathrm{op}} \ge \sigma_{\max}(A) - \frac{1}{\lambda} > 0~,
\end{equation}
so the iterates never converge to the minimizer $W^\star = A$, and the loss stays bounded below by
\begin{equation}\label{eq:app-muon-loss-gap}
    \liminf_{t\to\infty} f(W_t) \ge \frac{1}{2}\sum_{i}\Bigl(\sigma_i(A) - \frac{1}{\lambda}\Bigr)_+^2~,
\end{equation}
where $\sigma_i(A)$ are the singular values of $A$ and $(x)_+ = \max(x,0)$.
In particular, for every $\lambda > 0$ there is a shifted quadratic on which constant weight decay fails to recover the minimizer.
\end{proposition}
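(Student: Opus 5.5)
The plan is to control the spectral norm directly through the update, without using anything about the momentum buffer beyond $\|O_t\|_{\mathrm{op}}\le 1$. This is why the conclusion holds for every $\beta$ and every $W_0$.

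\textbf{Step 1: a scalar recursion for the norm.} Set $u_t=\|W_t\|_{\mathrm{op}}$. By the triangle inequality and $1-\eta_t\lambda\ge 0$, the update \eqref{eq:app-muon-const-wd} gives
\[
u_{t+1}\le(1-\eta_t\lambda)u_t+\eta_t .
\]
Subtracting $1/\lambda$ from both sides, the excess $e_t=u_t-1/\lambda$ satisfies
\[
e_{t+1}\le(1-\eta_t\lambda)e_t .
\]

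\textbf{Step 2: the excess dies out.} If $e_t\le 0$ at some step, then $e_{t+1}\le 0$ as well, because the multiplier $1-\eta_t\lambda$ is nonnegative. So once $u_t\le 1/\lambda$, the norm stays in the ball forever. If instead $e_t>0$ for all $t$, unrolling exactly as in the proof of Proposition~\ref{prop:app-const-wd-1d} gives
\[
e_t\le e_0\exp\Bigl(-\lambda\sum_{s<t}\eta_s\Bigr)\to 0,
\]
using $1-x\le e^{-x}$ and $\sum_t\eta_t=\infty$. In either case $\limsup_t e_t\le 0$, which is \eqref{eq:app-muon-radius}.

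\textbf{Step 3: separation from $A$.} For the operator-norm lower bound, apply the reverse triangle inequality:
\[
\|W_t-A\|_{\mathrm{op}}\ge \sigma_{\max}(A)-\|W_t\|_{\mathrm{op}}.
\]
Taking $\liminf$ and combining with Step 2 gives \eqref{eq:app-muon-separation}.

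\textbf{Step 4: the loss gap.} For the Frobenius-norm loss I would use Mirsky's inequality (the Hoffman--Wielandt inequality for singular values):
\[
\|W-A\|_F^2\ge\sum_i\bigl(\sigma_i(A)-\sigma_i(W)\bigr)^2 .
\]
Fix $\epsilon>0$. For all large $t$ every singular value satisfies $\sigma_i(W_t)\le 1/\lambda+\epsilon$. Each term in the sum is then at least $\bigl(\sigma_i(A)-1/\lambda-\epsilon\bigr)_+^2$. Summing over $i$, halving, taking $\liminf$, and letting $\epsilon\to0$ gives \eqref{eq:app-muon-loss-gap}.

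The final claim, that for every $\lambda>0$ there is a shifted quadratic on which constant decay fails, follows by taking $A=(2/\lambda)E_{11}$. This target has $\sigma_{\max}(A)=2/\lambda>1/\lambda$, so Step 3 applies.

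The main obstacle is Step 4. It needs the right singular-value perturbation inequality, namely Mirsky's version valid for rectangular matrices, and it needs careful handling of the $\limsup$ through an $\epsilon$-argument. The remaining steps are elementary. One point to state explicitly is the convention $\operatorname{polar}(0)=0$, which keeps $\|O_t\|_{\mathrm{op}}\le1$ true even when $M_t$ vanishes.
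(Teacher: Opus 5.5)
Your proof is correct and follows the paper's argument essentially step for step: the same scalar recursion $\|W_{t+1}\|_{\mathrm{op}}\le(1-\eta_t\lambda)\|W_t\|_{\mathrm{op}}+\eta_t$ shifted by $1/\lambda$, the same reverse triangle inequality, and the same $\epsilon$-limiting argument for the loss. The only difference is in the loss bound. You invoke Mirsky's inequality directly, while the paper uses the closed form $\min_{\|W\|_{\mathrm{op}}\le\rho}\|W-A\|_F^2=\sum_i(\sigma_i(A)-\rho)_+^2$ for singular-value clipping, and the lower-bound half of that formula is exactly Mirsky's inequality combined with $\sigma_i(W)\le\rho$.
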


\begin{proof}
Let $z_t = \|W_t\|_{\mathrm{op}}$. Taking operator norms in \eqref{eq:app-muon-const-wd} and using $\|O_t\|_{\mathrm{op}} \le 1$ together with $1 - \eta_t\lambda \ge 0$,
\begin{equation*}
    z_{t+1} \le (1 - \eta_t\lambda) z_t + \eta_t~.
\end{equation*}
Subtracting $1/\lambda$ from both sides gives
\begin{equation*}
    z_{t+1} - \frac{1}{\lambda} \le (1 - \eta_t\lambda)\Bigl(z_t - \frac{1}{\lambda}\Bigr)~,
\end{equation*}
so by induction
\begin{equation*}
    z_t - \frac{1}{\lambda} \le \Bigl(z_0 - \frac{1}{\lambda}\Bigr)_+ \prod_{s=0}^{t-1}(1 - \eta_s\lambda)~,
\end{equation*}
where $(x)_+ = \max(x,0)$. Since $1 - \eta_s\lambda \le e^{-\eta_s\lambda}$, the product is at most $\exp\bigl(-\lambda\sum_{s<t}\eta_s\bigr)$, which tends to $0$ because $\sum_s \eta_s = \infty$. Hence $\limsup_t z_t \le 1/\lambda$, which is \eqref{eq:app-muon-radius}. The bound uses only $\|O_t\|_{\mathrm{op}} \le 1$, so it is independent of $\beta$ and of how the momentum buffer $M_t$ was formed.

For \eqref{eq:app-muon-separation}, the reverse triangle inequality in operator norm gives
\begin{equation*}
    \|W_t - A\|_{\mathrm{op}} \ge \|A\|_{\mathrm{op}} - \|W_t\|_{\mathrm{op}} = \sigma_{\max}(A) - z_t~,
\end{equation*}
and taking $\liminf$ as $t \to \infty$ together with \eqref{eq:app-muon-radius} gives \eqref{eq:app-muon-separation}, which is strictly positive when $\sigma_{\max}(A) > 1/\lambda$.

For the loss bound \eqref{eq:app-muon-loss-gap}, we use that the Frobenius-norm projection onto the spectral-norm ball clips singular values. Concretely, for any radius $\rho \ge 0$,
\begin{equation}\label{eq:app-clip-projection}
    \min_{\|W\|_{\mathrm{op}} \le \rho} \|W - A\|_F^2 = \sum_i \bigl(\sigma_i(A) - \rho\bigr)_+^2~,
\end{equation}
the minimizer being $\sum_i \min(\sigma_i(A), \rho)\, u_i v_i^\top$ for the singular value decomposition $A = \sum_i \sigma_i(A)\, u_i v_i^\top$. Fix $\rho > 1/\lambda$. By \eqref{eq:app-muon-radius} there is $T_\rho$ such that $\|W_t\|_{\mathrm{op}} \le \rho$ for all $t \ge T_\rho$, so \eqref{eq:app-clip-projection} gives $\|W_t - A\|_F^2 \ge \sum_i (\sigma_i(A) - \rho)_+^2$ for such $t$, and hence
\begin{equation*}
    \liminf_{t\to\infty} f(W_t) = \liminf_{t\to\infty} \tfrac{1}{2}\|W_t - A\|_F^2 \ge \tfrac{1}{2}\sum_i (\sigma_i(A) - \rho)_+^2~.
\end{equation*}
Letting $\rho \downarrow 1/\lambda$ and using continuity of $\rho \mapsto (\sigma_i(A) - \rho)_+^2$ yields \eqref{eq:app-muon-loss-gap}.
\end{proof}

\subsection{Convergence of scaled weight decay for SGD}
\label{app:scaled-wd-sgd}

We restate and prove Theorem~\ref{thm:scaled-wd-sgd-convergence}. Let $F : \mathbb{R}^d \to \mathbb{R}$ be bounded below by $F_{\inf}$ and $L$-smooth, and let the stochastic gradients satisfy
\begin{equation}\label{eq:app-sgd-assumptions}
    \mathbb{E}[g_t \mid \mathcal{F}_t] = \nabla F(x_t), \qquad
    \mathbb{E}[\|g_t\|_2^2 \mid \mathcal{F}_t] \le \sigma^2 + \rho \|\nabla F(x_t)\|_2^2~,
\end{equation}
with $\sigma^2 \ge 0$, $\rho \ge 1$. The scaled weight decay update is
\begin{equation}\label{eq:app-scaled-wd-sgd}
    x_{t+1} = x_t - \eta_t g_t - q\, \eta_t^2 x_t, \qquad q = \frac{\lambda}{\etamax}~.
\end{equation}
We assume the iterates are bounded in mean square, $\sup_t \mathbb{E}\|x_t\|_2^2 \le B^2 < \infty$, which in the nonconvex setting plays the role that strong convexity plays for the distance recursion: it keeps the order-$\eta_t^2$ shrink term summable.

\begin{theorem}[restatement of Theorem~\ref{thm:scaled-wd-sgd-convergence}]
\label{thm:app-scaled-wd-sgd}
Under \eqref{eq:app-sgd-assumptions}, $\sup_t \mathbb{E}\|x_t\|_2^2 \le B^2 < \infty$, and the Robbins--Monro conditions $\sum_t \eta_t = \infty$, $\sum_t \eta_t^2 < \infty$, the iterates \eqref{eq:app-scaled-wd-sgd} satisfy
\begin{equation}\label{eq:app-scaled-wd-stationarity}
    \frac{1}{S_T}\sum_{t=0}^{T-1} \eta_t\, \mathbb{E}\|\nabla F(x_t)\|_2^2 \to 0,
    \qquad S_T = \sum_{t=0}^{T-1}\eta_t~,
\end{equation}
and in particular $\liminf_{t\to\infty} \mathbb{E}\|\nabla F(x_t)\|_2^2 = 0$.
\end{theorem}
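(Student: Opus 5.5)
The plan is the standard descent-lemma argument for nonconvex SGD, with the scaled shrink term treated as an $O(\eta_t^2)$ perturbation of the step.

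\textbf{Descent inequality.} Write the step as $x_{t+1}-x_t = -\eta_t g_t - q\eta_t^2 x_t$. Then $L$-smoothness gives
\begin{equation*}
F(x_{t+1}) \le F(x_t) - \eta_t\langle \nabla F(x_t), g_t\rangle - q\eta_t^2\langle \nabla F(x_t), x_t\rangle + \tfrac{L}{2}\|\eta_t g_t + q\eta_t^2 x_t\|_2^2 .
\end{equation*}
Take conditional expectation given $\mathcal{F}_t$ and handle the terms one at a time.
\begin{itemize}
\item The linear term becomes $-\eta_t\|\nabla F(x_t)\|_2^2$.
\item For the cross term, use Young's inequality: $q\eta_t^2|\langle \nabla F(x_t),x_t\rangle| \le \tfrac{\eta_t^2}{2}\|\nabla F(x_t)\|_2^2 + \tfrac{q^2\eta_t^2}{2}\|x_t\|_2^2$.
\item For the quadratic term, use $\|u+v\|^2\le 2\|u\|^2+2\|v\|^2$ and the variance bound \eqref{eq:app-sgd-assumptions}. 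It is at most $L\eta_t^2(\sigma^2+\rho\|\nabla F(x_t)\|_2^2) + Lq^2\eta_t^4\|x_t\|_2^2$.
\end{itemize}
Taking full expectations and using $\mathbb{E}\|x_t\|_2^2\le B^2$, this yields
\begin{equation*}
\mathbb{E}F(x_{t+1}) \le \mathbb{E}F(x_t) - \eta_t\bigl(1 - c_1\eta_t\bigr)\mathbb{E}\|\nabla F(x_t)\|_2^2 + c_2\eta_t^2 ,
\end{equation*}
with $c_1 = \tfrac12 + L\rho$ and $c_2 = L\sigma^2 + q^2B^2(\tfrac12 + L\eta_{\max}^2)$. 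Here I use $\eta_t\le\etamax$ to absorb the $\eta_t^4$ term into $\eta_t^2$.

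\textbf{Telescoping.} Since $\sum\eta_t^2<\infty$, we have $\eta_t\to0$. So there is $t_0$ with $c_1\eta_t\le\tfrac12$ for all $t\ge t_0$. Telescope from $t_0$ to $T-1$ and use $F\ge F_{\inf}$:
\begin{equation*}
\tfrac12\sum_{t=t_0}^{T-1}\eta_t\,\mathbb{E}\|\nabla F(x_t)\|_2^2 \le \mathbb{E}F(x_{t_0}) - F_{\inf} + c_2\sum_t\eta_t^2 < \infty .
\end{equation*}
$\mathbb{E}F(x_{t_0})$ is finite because each of the finitely many earlier steps raises $\mathbb{E}F$ by at most a finite amount, by the same inequality. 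Before $t_0$, each term $\eta_t\,\mathbb{E}\|\nabla F(x_t)\|_2^2$ is finite: $\|\nabla F(x)\|\le\|\nabla F(0)\|+L\|x\|$ and the iterates are bounded in mean square.

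\textbf{Conclusion.} The full weighted sum $\sum_t \eta_t\,\mathbb{E}\|\nabla F(x_t)\|_2^2$ is therefore finite, while $S_T\to\infty$ by the first Robbins--Monro condition. Dividing by $S_T$ gives \eqref{eq:app-scaled-wd-stationarity}. For the liminf statement, suppose $\mathbb{E}\|\nabla F(x_t)\|_2^2\ge\epsilon>0$ for all large $t$. Then the weighted sum would diverge like $\epsilon\sum\eta_t$, a contradiction.

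\textbf{Main obstacle.} The only delicate point is the cross term $q\eta_t^2\langle\nabla F(x_t),x_t\rangle$. There is no sign information on it, so it must be bounded. The mean-square boundedness hypothesis is exactly what makes its contribution summable. It must be split with Young's inequality in the $\eta_t^2$-weighted form above, so that the gradient part is absorbed into the $\eta_t\|\nabla F\|^2$ descent term for large $t$ and does not compete with it at order $\eta_t$.
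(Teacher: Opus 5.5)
Your proposal is correct and follows essentially the same descent-lemma, conditional-expectation and telescoping argument as the paper's proof. The one difference is cosmetic. The paper splits the cross term as $\frac{\eta_t}{4}\|\nabla F(x_t)\|_2^2 + q^2\eta_t^3\|x_t\|_2^2$, absorbing part of the descent at order $\eta_t$. So your claim that the split \emph{must} put the gradient part at order $\eta_t^2$ is overstated, though harmless.
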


\begin{proof}
By $L$-smoothness, with the increment $\Delta_t = x_{t+1} - x_t = -\eta_t g_t - q\eta_t^2 x_t$,
\begin{equation}\label{eq:app-descent-lemma}
    F(x_{t+1}) \le F(x_t) + \langle \nabla F(x_t), \Delta_t\rangle + \frac{L}{2}\|\Delta_t\|_2^2~.
\end{equation}
Take the conditional expectation given $\mathcal{F}_t$. Using $\mathbb{E}[g_t \mid \mathcal{F}_t] = \nabla F(x_t)$,
\begin{equation}\label{eq:app-inner-product}
    \mathbb{E}[\langle \nabla F(x_t), \Delta_t\rangle \mid \mathcal{F}_t]
    = -\eta_t \|\nabla F(x_t)\|_2^2 - q\eta_t^2 \langle \nabla F(x_t), x_t\rangle~.
\end{equation}
The shrink cross term is controlled by Cauchy--Schwarz and Young's inequality,
\begin{equation}\label{eq:app-cross-young}
    -q\eta_t^2 \langle \nabla F(x_t), x_t\rangle
    \le q\eta_t^2 \|\nabla F(x_t)\|_2 \|x_t\|_2
    \le \frac{\eta_t}{4}\|\nabla F(x_t)\|_2^2 + q^2\eta_t^3\|x_t\|_2^2~,
\end{equation}
where the last step uses $ab \le \tfrac14 a^2 + b^2$ with $a = \sqrt{\eta_t}\|\nabla F(x_t)\|_2$ and $b = q\eta_t^{3/2}\|x_t\|_2$. For the second-order term, $\|\Delta_t\|_2^2 \le 2\eta_t^2\|g_t\|_2^2 + 2q^2\eta_t^4\|x_t\|_2^2$, so by \eqref{eq:app-sgd-assumptions},
\begin{equation}\label{eq:app-second-order}
    \frac{L}{2}\mathbb{E}[\|\Delta_t\|_2^2 \mid \mathcal{F}_t]
    \le L\eta_t^2\bigl(\sigma^2 + \rho\|\nabla F(x_t)\|_2^2\bigr) + Lq^2\eta_t^4\|x_t\|_2^2~.
\end{equation}
Substituting \eqref{eq:app-inner-product}--\eqref{eq:app-second-order} into the conditional expectation of \eqref{eq:app-descent-lemma},
\begin{equation*}
    \mathbb{E}[F(x_{t+1}) \mid \mathcal{F}_t]
    \le F(x_t) - \eta_t\Bigl(\tfrac34 - L \rho \eta_t\Bigr)\|\nabla F(x_t)\|_2^2
    + L\sigma^2\eta_t^2 + q^2\eta_t^3\|x_t\|_2^2 + Lq^2\eta_t^4\|x_t\|_2^2~.
\end{equation*}
Since $\eta_t \to 0$, there is an index beyond which $\tfrac34 - L \rho\eta_t \ge \tfrac12$. Taking total expectations, using $\mathbb{E}\|x_t\|_2^2 \le B^2$, and summing from that index to $T-1$,
\begin{equation*}
    \frac12 \sum_{t}\eta_t \mathbb{E}\|\nabla F(x_t)\|_2^2
    \le \mathbb{E} F(x_{t_1}) - F_{\inf}
    + \bigl(L\sigma^2 + q^2 B^2 + Lq^2 B^2\bigr)\sum_{t}\eta_t^2~,
\end{equation*}
where we bounded $\eta_t^3, \eta_t^4 \le \eta_t^2$ for large $t$. The right-hand side is finite because $\sum_t \eta_t^2 < \infty$. Dividing by $S_T$ and letting $T \to \infty$, and noting $S_T \to \infty$, gives \eqref{eq:app-scaled-wd-stationarity}. The $\liminf$ statement follows since a nonnegative sequence whose $\eta_t$-weighted average tends to zero, with $\sum_t \eta_t = \infty$, has a subsequence tending to zero.
\end{proof}

When the objective is in addition strongly convex, the stationarity guarantee upgrades to convergence of the iterates to the unique minimizer, and the bounded-iterate assumption becomes unnecessary. Both this statement and its rate follow from a single stochastic-approximation recursion. 

\begin{lemma}\label{lem:app-sa-recursion}
Let $\{u_t\}$ be nonnegative reals satisfying, for all sufficiently large $t$,
\begin{equation}\label{eq:app-sa-recursion}
    u_{t+1} \le (1 - \alpha\eta_t) u_t + \kappa \eta_t^2
\end{equation}
for constants $\alpha, \kappa > 0$ with $\alpha\eta_t \le 1$. If $\sum_t \eta_t = \infty$ and $\sum_t \eta_t^2 < \infty$, then $u_t \to 0$. For $\eta_t = \gamma/(t+t_0)$ with $\alpha\gamma > 1$, one has $u_t = O(1/t)$.
\end{lemma}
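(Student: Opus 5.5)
The plan has two parts: a classical Robbins--Monro (Chung-type) argument for $u_t\to0$, and an induction for the rate. Fix $t_1$ such that the recursion holds for all $t\ge t_1$. Since $0\le 1-\alpha\eta_t\le 1$, unrolling \eqref{eq:app-sa-recursion} from $t_1$ gives, for $t>s\ge t_1$,
\begin{equation*}
    u_t \le \Bigl(\prod_{j=s}^{t-1}(1-\alpha\eta_j)\Bigr)u_s + \kappa\sum_{j=s}^{t-1}\eta_j^2 .
\end{equation*}
This uses only that each product of factors in $[0,1]$ is bounded by one.

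Given $\varepsilon>0$, I would first choose $s\ge t_1$ so large that $\kappa\sum_{j\ge s}\eta_j^2<\varepsilon/2$; this is possible because $\sum_t\eta_t^2<\infty$. With $s$ fixed, $u_s$ is a fixed finite number. Using $1-x\le e^{-x}$, the product is at most $\exp(-\alpha\sum_{j=s}^{t-1}\eta_j)$, which tends to $0$ as $t\to\infty$ because $\sum_t\eta_t=\infty$. Hence $\limsup_t u_t\le\varepsilon/2$, and since $\varepsilon$ is arbitrary, $u_t\to0$. This mirrors the product estimate already used in the proof of Proposition~\ref{prop:app-const-wd-1d}.

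For the rate with $\eta_t=\gamma/(t+t_0)$ and $\alpha\gamma>1$, I would show by induction that $u_t\le C/(t+t_0)$ for all $t\ge t_1$. The constant is
\begin{equation*}
    C=\max\Bigl\{(t_1+t_0)u_{t_1},\ \frac{\kappa\gamma^2}{\alpha\gamma-1}\Bigr\}.
\end{equation*}
For the inductive step, write $n=t+t_0$. Then
\begin{equation*}
    u_{t+1}\le\Bigl(1-\frac{\alpha\gamma}{n}\Bigr)\frac{C}{n}+\frac{\kappa\gamma^2}{n^2}
    =\frac{C}{n}-\frac{(\alpha\gamma-1)C-\kappa\gamma^2}{n^2}-\frac{C}{n^2}
    \le\frac{C(n-1)}{n^2}.
\end{equation*}
The last inequality holds because $(\alpha\gamma-1)C\ge\kappa\gamma^2$. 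Since $(n-1)(n+1)\le n^2$, we get $C(n-1)/n^2\le C/(n+1)$, which closes the induction. The induction needs $\alpha\eta_t\le1$ so that $1-\alpha\eta_t\ge0$ when the bound on $u_t$ is substituted; the hypotheses supply this.

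The only delicate point is the order of quantifiers in the first part: $s$ must be fixed before sending $t\to\infty$, so that $u_s$ is a finite constant. For the rate, the care needed is in choosing $C$ to absorb both the initial value and the $\kappa\gamma^2/(\alpha\gamma-1)$ threshold. The strict inequality $\alpha\gamma>1$ is exactly what makes that threshold finite.
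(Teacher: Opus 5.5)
Your proposal is correct and follows essentially the same route as the paper: unroll the recursion, bound the product by $\exp(-\alpha\sum_s\eta_s)\to 0$, control the $\kappa\eta_t^2$ contribution by a tail of $\sum_t\eta_t^2$, and prove the rate by induction with exactly the paper's constant $\max\{(t_1+t_0)u_{t_1},\,\kappa\gamma^2/(\alpha\gamma-1)\}$. The differences are minor. You restart the unrolling at a late index $s$, absorbing the prefix into the finite number $u_s$, instead of splitting the convolution sum into a prefix and a tail. You also write out the inductive step that the paper only cites as standard.
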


\begin{proof}
Unrolling \eqref{eq:app-sa-recursion} from a base index $t_1$ beyond which it holds,
\begin{equation*}
    u_{t+1} \le \Bigl(\prod_{s=t_1}^{t}(1-\alpha\eta_s)\Bigr) u_{t_1}
    + \kappa \sum_{s=t_1}^{t} \eta_s^2 \prod_{r=s+1}^{t}(1-\alpha\eta_r)~.
\end{equation*}
Using $1 - \alpha\eta_s \le e^{-\alpha\eta_s}$, the leading product is at most $\exp(-\alpha\sum_{s=t_1}^t \eta_s) \to 0$ since $\sum_s \eta_s = \infty$. For the second term, fix $\epsilon > 0$; because $\sum_s \eta_s^2 < \infty$, choose $t_2 \ge t_1$ with $\sum_{s \ge t_2} \eta_s^2 < \epsilon$. The tail $s \ge t_2$ contributes at most $\kappa\sum_{s\ge t_2}\eta_s^2 < \kappa\epsilon$ (each product factor is at most $1$), while the fixed prefix $t_1 \le s < t_2$ is multiplied by $\exp(-\alpha\sum_{r=t_2}^t \eta_r) \to 0$. Hence $\limsup_t u_{t+1} \le \kappa\epsilon$ for every $\epsilon$, so $u_t \to 0$. For the harmonic schedule, a standard induction on \eqref{eq:app-sa-recursion} with $\eta_t = \gamma/(t+t_0)$ and $\alpha\gamma > 1$ gives $u_t \le \nu/(t+t_0)$ with $\nu = \max\{(t_1+t_0)u_{t_1}, \kappa\gamma^2/(\alpha\gamma-1)\}$, which is $O(1/t)$.
\end{proof}

\begin{theorem}[restatement of Theorem~\ref{thm:scaled-wd-sgd-strongly-convex}]
\label{thm:app-scaled-wd-sgd-sc}
Under \eqref{eq:app-sgd-assumptions}, suppose $F$ is in addition $\mu$-strongly convex with $\mu \le L$ and unique minimizer $x^\star$, and the Robbins--Monro conditions hold. Then the iterates \eqref{eq:app-scaled-wd-sgd} satisfy $\mathbb{E}\|x_t - x^\star\|_2^2 \to 0$, with rate $O(1/t)$ for a harmonic schedule with $\mu\gamma > 1$.
\end{theorem}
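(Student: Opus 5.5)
The plan is to run the classical distance-to-optimum argument for strongly convex SGD on $u_t=\mathbb{E}\|x_t-x^\star\|_2^2$, reduce it to the recursion \eqref{eq:app-sa-recursion}, and then invoke Lemma~\ref{lem:app-sa-recursion}. The one structural idea is how to handle the shrink term. Writing $x_t=e_t+x^\star$ with $e_t=x_t-x^\star$ splits $q\eta_t^2x_t$ into a part proportional to $e_t$, which only perturbs the contraction factor at order $\eta_t^2$, and a \emph{constant} drift $q\eta_t^2x^\star$. That drift is bounded because $x^\star$ is fixed, and this is why no bounded-iterate assumption is needed. The error recursion is
\begin{equation*}
    e_{t+1}=(1-q\eta_t^2)\,e_t-\eta_t g_t-q\eta_t^2x^\star~.
\end{equation*}

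First I expand $\|e_{t+1}\|_2^2$ and take $\mathbb{E}[\cdot\mid\mathcal{F}_t]$, using $\mathbb{E}[g_t\mid\mathcal{F}_t]=\nabla F(x_t)$. The terms are:
\begin{itemize}
    \item the leading term $(1-q\eta_t^2)^2\|e_t\|_2^2$;
    \item the descent term $-2\eta_t(1-q\eta_t^2)\langle e_t,\nabla F(x_t)\rangle$;
    \item the variance term $\eta_t^2\,\mathbb{E}[\|g_t\|_2^2\mid\mathcal{F}_t]$;
    \item the two drift cross terms $-2q\eta_t^2(1-q\eta_t^2)\langle e_t,x^\star\rangle$ and $2q\eta_t^3\langle\nabla F(x_t),x^\star\rangle$;
    \item the pure drift $q^2\eta_t^4\|x^\star\|_2^2$.
\end{itemize}

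Next I bound each term. Strong convexity together with $\nabla F(x^\star)=0$ gives $\langle e_t,\nabla F(x_t)\rangle\ge\mu\|e_t\|_2^2$. Smoothness gives $\|\nabla F(x_t)\|_2\le L\|e_t\|_2$, so the variance term is at most $\eta_t^2(\sigma^2+\rho L^2\|e_t\|_2^2)$. For the first drift cross term, Young's inequality gives
\begin{equation*}
    2q\eta_t^2|\langle e_t,x^\star\rangle|\le\tfrac{\mu\eta_t}{2}\|e_t\|_2^2+\tfrac{2q^2\eta_t^3}{\mu}\|x^\star\|_2^2~.
\end{equation*}
For the second, $2q\eta_t^3|\langle\nabla F(x_t),x^\star\rangle|\le q\eta_t^3\bigl(L^2\|e_t\|_2^2+\|x^\star\|_2^2\bigr)$.

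Collecting, after total expectation the coefficient of $u_t$ is $1-\tfrac32\mu\eta_t+C\eta_t^2$ for a constant $C$ depending on $q,L,\rho,\mu$. All remaining terms are at most $\kappa\eta_t^2$, with $\kappa$ depending on $\sigma^2$, $q$, $\mu$ and $\|x^\star\|_2^2$, using $\eta_t^3,\eta_t^4\le\eta_t^2$ once $\eta_t\le1$.

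Since $\eta_t\to0$, for all sufficiently large $t$ we have $C\eta_t\le\mu/2$ and $\mu\eta_t\le1$. For those $t$ this gives $u_{t+1}\le(1-\mu\eta_t)u_t+\kappa\eta_t^2$, which is \eqref{eq:app-sa-recursion} with $\alpha=\mu$. Lemma~\ref{lem:app-sa-recursion} then yields $u_t\to0$. For $\eta_t=\gamma/(t+t_0)$ with $\mu\gamma>1$ it also gives the $O(1/t)$ rate; this is where it matters that $\alpha$ is exactly $\mu$ rather than a smaller constant. To apply the lemma I also need $u_{t_1}<\infty$ at the base index. This follows by finite induction from a finite initial second moment, since each step's bound above has finite coefficients.

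The main obstacle is the bookkeeping needed to get contraction rate exactly $\mu$. The drift cross term $\langle e_t,x^\star\rangle$ is only $O(\eta_t^2)$ and is not sign-definite. Bounding it naively as $\eta_t^2\|e_t\|_2\|x^\star\|_2$ would leave a term linear in $\sqrt{u_t}$, which the lemma cannot absorb. The Young split above trades it for a slice $\tfrac{\mu}{2}\eta_t$ of the contraction plus an $O(\eta_t^3)$ constant. The $\tfrac32\mu$ slack leaves room for this slice and for the $O(\eta_t^2)$ perturbations, at the price of waiting until $\eta_t$ is small enough.
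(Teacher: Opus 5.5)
Your proposal is correct and follows essentially the paper's route: expand $\|x_{t+1}-x^\star\|_2^2$, split $x_t=e_t+x^\star$, apply strong convexity, smoothness and Young's inequality to the $\langle e_t,x^\star\rangle$ cross term, and reduce to Lemma~\ref{lem:app-sa-recursion} with $\alpha=\mu$. The only bookkeeping difference is how that cross term is absorbed. The paper uses the shrink's own $-2q\eta_t^2\|e_t\|_2^2$ piece, via $-2\langle e_t,x_t\rangle\le-\|e_t\|_2^2+\|x^\star\|_2^2$, so it spends none of the $2\mu\eta_t$ contraction and the obstacle you flag does not arise; you instead trade a $\tfrac{\mu}{2}\eta_t$ slice of the contraction for an $O(\eta_t^3)$ constant, which also works.
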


\begin{proof}
Write $e_t = x_t - x^\star$. Expanding \eqref{eq:app-scaled-wd-sgd} and conditioning on $\mathcal{F}_t$, using $\mathbb{E}[g_t \mid \mathcal{F}_t] = \nabla F(x_t)$,
\begin{equation}\label{eq:app-sc-expand}
    \mathbb{E}[\|e_{t+1}\|_2^2 \mid \mathcal{F}_t]
    = \|e_t\|_2^2
    - 2\eta_t \langle e_t, \nabla F(x_t)\rangle
    - 2q\eta_t^2 \langle e_t, x_t\rangle
    + \eta_t^2 \mathbb{E}[\|g_t + q\eta_t x_t\|_2^2 \mid \mathcal{F}_t]~.
\end{equation}
Strong convexity and $\nabla F(x^\star) = 0$ give $\langle e_t, \nabla F(x_t)\rangle \ge \mu\|e_t\|_2^2$, and $L$-smoothness gives $\|\nabla F(x_t)\|_2 \le L\|e_t\|_2$. Writing $x_t = e_t + x^\star$, Young's inequality gives
\begin{equation}\label{eq:app-sc-young}
    -2\langle e_t, x_t\rangle
    = -2\|e_t\|_2^2 - 2\langle e_t, x^\star\rangle
    \le -\|e_t\|_2^2 + \|x^\star\|_2^2~,
\end{equation}
so strong convexity here also supplies the mean-square boundedness of the iterates, and no separate bounded-iterate assumption is required. For the second-moment term, using $\|g_t + q\eta_t x_t\|_2^2 \le 2\|g_t\|_2^2 + 2q^2\eta_t^2\|x_t\|_2^2$, \eqref{eq:app-sgd-assumptions}, $\|\nabla F(x_t)\|_2^2 \le L^2\|e_t\|_2^2$, and $\|x_t\|_2^2 \le 2\|e_t\|_2^2 + 2\|x^\star\|_2^2$,
\begin{equation}\label{eq:app-sc-second-moment}
    \mathbb{E}[\|g_t + q\eta_t x_t\|_2^2 \mid \mathcal{F}_t]
    \le 2\sigma^2 + 2\rho L^2 \|e_t\|_2^2 + 4q^2\eta_t^2\bigl(\|e_t\|_2^2 + \|x^\star\|_2^2\bigr)~.
\end{equation}
Substituting into \eqref{eq:app-sc-expand} and collecting powers of $\eta_t$,
\begin{equation*}
    \mathbb{E}[\|e_{t+1}\|_2^2 \mid \mathcal{F}_t]
    \le \bigl(1 - 2\mu\eta_t + K_1\eta_t^2 + K_2\eta_t^4\bigr)\|e_t\|_2^2
    + \bigl(q\|x^\star\|_2^2 + 2\sigma^2\bigr)\eta_t^2 + 4q^2\|x^\star\|_2^2\,\eta_t^4~,
\end{equation*}
with $K_1 = 2\rho L^2 + 4q^2$ and $K_2 = 4q^2$ (the $-2q\eta_t^2\|e_t\|_2^2$ term only helps). Since $\eta_t \to 0$, for all sufficiently large $t$ the bracketed coefficient is at most $1 - \mu\eta_t$ and the $\eta_t^4$ terms are dominated by $\eta_t^2$. Taking total expectations yields, for large $t$,
\begin{equation*}
    \mathbb{E}\|e_{t+1}\|_2^2 \le (1 - \mu\eta_t)\,\mathbb{E}\|e_t\|_2^2 + \kappa\eta_t^2
\end{equation*}
for some $\kappa > 0$. This is the recursion \eqref{eq:app-sa-recursion} with $u_t = \mathbb{E}\|e_t\|_2^2$ and $\alpha = \mu$, so Lemma~\ref{lem:app-sa-recursion} gives $\mathbb{E}\|e_t\|_2^2 \to 0$, and the $O(1/t)$ rate for a harmonic schedule with $\mu\gamma > 1$.
\end{proof}

\subsection{Convergence of scaled weight decay for Muon}
\label{app:scaled-wd-muon}

We restate and prove the Muon results of the main text. Let $f : \mathbb{R}^{m\times n} \to \mathbb{R}$ be bounded below by $f_{\inf}$ and $L$-smooth in Frobenius norm, with stochastic gradient $G_t = \nabla f(W_t) + \xi_t$ where $\mathbb{E}[\xi_t \mid \mathcal{F}_t] = 0$ and $\mathbb{E}[\|\xi_t\|_F^2 \mid \mathcal{F}_t] \le \sigma^2/B$ for a fixed batch size $B$. The buffer is $M_t = \beta M_{t-1} + G_t$ with fixed $\beta \in [0,1)$, $O_t = \mathrm{polar}(M_t)$, and the scaled-decay update is
\begin{equation}\label{eq:app-muon-scaled-wd}
    W_{t+1} = W_t - \eta_t O_t - q\, \eta_t^2 W_t, \qquad q = \frac{\lambda}{\etamax}~.
\end{equation}
Write $r = \min(m,n)$ for the ambient rank bound, and assume the iterates are bounded in mean square, $\sup_t \mathbb{E}\|W_t\|_F^2 \le \mathcal{B}^2$.

Unrolling from $M_{-1} = 0$ gives $M_t = \sum_{s\le t}\beta^{t-s}G_s$, so the normalized buffer $\bar M_t = (1-\beta) M_t$ obeys $\bar M_t = \beta \bar M_{t-1} + (1-\beta)G_t$ and is an exact positive multiple of $M_t$. Since orthogonalization is scale-invariant, $O_t = \mathrm{polar}(M_t) = \mathrm{polar}(\bar M_t)$, and we track the error of the normalized buffer,
\begin{equation}\label{eq:app-muon-error}
    E_t = \bar M_t - \nabla f(W_t)~.
\end{equation}
The orthogonalization satisfies the polar identities
\begin{equation}\label{eq:app-polar-identities}
    \langle \bar M_t, O_t\rangle_F = \|\bar M_t\|_*, \qquad \|O_t\|_{\mathrm{op}} \le 1, \qquad \|O_t\|_F^2 = \mathrm{rank}(M_t) \le r~,
\end{equation}
where $\|\cdot\|_*$ is the nuclear norm, dual to the operator norm. We use repeatedly $\langle X, Y\rangle_F \le \|X\|_* \|Y\|_{\mathrm{op}}$ and $\|X\|_* \le \sqrt{\mathrm{rank}(X)}\,\|X\|_F$.

\begin{lemma}[alignment]\label{lem:app-muon-alignment}
For every $t$,
\begin{equation}\label{eq:app-muon-alignment}
    \langle \nabla f(W_t), O_t\rangle_F \ge \|\nabla f(W_t)\|_* - 2\sqrt{r}\,\|E_t\|_F~.
\end{equation}
\end{lemma}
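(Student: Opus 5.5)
The plan is to split $\nabla f(W_t) = \bar M_t - E_t$ in the inner product and treat the two pieces separately, using only the polar identities \eqref{eq:app-polar-identities} and the two norm inequalities recorded just above the lemma.

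\textbf{Expand the inner product.} By linearity,
\begin{equation*}
\langle \nabla f(W_t), O_t\rangle_F = \langle \bar M_t, O_t\rangle_F - \langle E_t, O_t\rangle_F = \|\bar M_t\|_* - \langle E_t, O_t\rangle_F .
\end{equation*}
Here I use $O_t=\mathrm{polar}(\bar M_t)$ and the first identity in \eqref{eq:app-polar-identities}.

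\textbf{Bound the error term.} I will use Cauchy--Schwarz in Frobenius norm together with $\|O_t\|_F \le \sqrt r$:
\begin{equation*}
|\langle E_t,O_t\rangle_F| \le \|E_t\|_F\|O_t\|_F \le \sqrt r\,\|E_t\|_F .
\end{equation*}
Alternatively, $|\langle E_t,O_t\rangle_F| \le \|E_t\|_*\|O_t\|_{\mathrm{op}} \le \|E_t\|_*$. Either route gives one copy of $\sqrt r\,\|E_t\|_F$.

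\textbf{Compare nuclear norms.} For the remaining term I apply the reverse triangle inequality for the nuclear norm:
\begin{equation*}
\|\bar M_t\|_* = \|\nabla f(W_t) + E_t\|_* \ge \|\nabla f(W_t)\|_* - \|E_t\|_* .
\end{equation*}
Since $\mathrm{rank}(E_t) \le r$, this gives $\|E_t\|_* \le \sqrt r\,\|E_t\|_F$, which is the second copy.

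\textbf{Combine.} Putting the two bounds into the expansion gives
\begin{equation*}
\langle \nabla f(W_t),O_t\rangle_F \ge \|\nabla f(W_t)\|_* - 2\sqrt r\,\|E_t\|_F ,
\end{equation*}
which is \eqref{eq:app-muon-alignment}.

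\textbf{Degenerate case.} If $M_t = 0$, then $O_t = 0$ and $\bar M_t = 0$, so $E_t = -\nabla f(W_t)$. The left side of \eqref{eq:app-muon-alignment} is $0$. The right side is $\|\nabla f\|_* - 2\sqrt r\,\|\nabla f\|_F \le -\sqrt r\,\|\nabla f\|_F \le 0$, because $\|\nabla f\|_* \le \sqrt r\,\|\nabla f\|_F$. The inequality therefore still holds. In general the identities used remain valid because $\mathrm{polar}(0)=0$.

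There is no real obstacle here. The only points needing care are making sure $O_t$ is the polar factor of $\bar M_t$ (by scale invariance), and checking that each error term costs at most $\sqrt r\,\|E_t\|_F$.
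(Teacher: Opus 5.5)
Your proof is correct and follows essentially the same route as the paper: split $\nabla f(W_t)=\bar M_t-E_t$, use $\langle \bar M_t,O_t\rangle_F=\|\bar M_t\|_*$, bound $\langle E_t,O_t\rangle_F\le\|E_t\|_*$ by duality, apply the reverse triangle inequality for the nuclear norm, and finish with $\|E_t\|_*\le\sqrt r\,\|E_t\|_F$. Your separate treatment of $M_t=0$ is correct but unnecessary, since $\mathrm{polar}(0)=0$ already makes the polar identity hold in that case.
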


\begin{proof}
Writing $\nabla f(W_t) = \bar M_t - E_t$ and using \eqref{eq:app-polar-identities},
\begin{equation*}
    \langle \nabla f(W_t), O_t\rangle_F
    = \|\bar M_t\|_* - \langle E_t, O_t\rangle_F
    \ge \|\bar M_t\|_* - \|E_t\|_*~,
\end{equation*}
since $\langle E_t, O_t\rangle_F \le \|E_t\|_*\|O_t\|_{\mathrm{op}} \le \|E_t\|_*$. By the reverse triangle inequality, $\|\bar M_t\|_* \ge \|\nabla f(W_t)\|_* - \|E_t\|_*$; combining and using $\|E_t\|_* \le \sqrt{r}\,\|E_t\|_F$ gives \eqref{eq:app-muon-alignment}.
\end{proof}

The tracking error obeys a contraction with a bias term of order $\eta$ and a noise term that does not vanish for fixed $\beta$ and $B$.

\begin{lemma}[tracking with a noise floor]\label{lem:app-muon-tracking}
Under the standing assumptions, if $\eta_t \to 0$, then
\begin{equation}\label{eq:app-muon-tracking-floor}
    \limsup_{t\to\infty} \mathbb{E}\|E_t\|_F^2 \le (1-\beta)\,\frac{\sigma^2}{B}~.
\end{equation}
\end{lemma}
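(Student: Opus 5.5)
The plan is to derive a one-step recursion for $E_t$ and show that $\mathbb{E}\|E_t\|_F^2$ obeys a geometric contraction with factor $\beta$, driven by a vanishing drift term plus a constant noise injection $(1-\beta)^2\sigma^2/B$. The stationary level of that recursion is $(1-\beta)^2\sigma^2/B$ divided by $1-\beta$, which is exactly \eqref{eq:app-muon-tracking-floor}.

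\textbf{Step 1 (error recursion).} Using $\bar M_t=\beta\bar M_{t-1}+(1-\beta)G_t$ and $G_t=\nabla f(W_t)+\xi_t$, I will write
\begin{equation*}
E_t=\beta E_{t-1}+\beta D_t+(1-\beta)\xi_t,\qquad D_t=\nabla f(W_{t-1})-\nabla f(W_t).
\end{equation*}
The key measurability point is that $W_t$ is determined by data up to step $t-1$, so $E_{t-1}$ and $D_t$ are $\mathcal{F}_t$-measurable. Since $\mathbb{E}[\xi_t\mid\mathcal{F}_t]=0$, the cross term vanishes in conditional expectation. This gives
\begin{equation*}
\mathbb{E}\|E_t\|_F^2=\mathbb{E}\|\beta E_{t-1}+\beta D_t\|_F^2+(1-\beta)^2\,\mathbb{E}\|\xi_t\|_F^2\le \mathbb{E}\|\beta E_{t-1}+\beta D_t\|_F^2+(1-\beta)^2\frac{\sigma^2}{B}.
\end{equation*}

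\textbf{Step 2 (Young with a tuned parameter).} I will bound $\|\beta E+\beta D\|^2\le\beta^2(1+\epsilon)\|E\|^2+\beta^2(1+1/\epsilon)\|D\|^2$. The choice $\epsilon=(1-\beta)/\beta$ makes $\beta^2(1+\epsilon)=\beta$ and $\beta^2(1+1/\epsilon)=\beta^2/(1-\beta)$. The case $\beta=0$ is trivial, since then $E_t=\xi_t$ directly. With $u_t=\mathbb{E}\|E_t\|_F^2$ this yields
\begin{equation*}
u_t\le\beta u_{t-1}+\frac{\beta^2}{1-\beta}\,\mathbb{E}\|D_t\|_F^2+(1-\beta)^2\frac{\sigma^2}{B}.
\end{equation*}

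\textbf{Step 3 (the drift vanishes).} By $L$-smoothness and \eqref{eq:app-muon-scaled-wd},
\begin{equation*}
\|D_t\|_F\le L\|W_t-W_{t-1}\|_F\le L\bigl(\eta_{t-1}\|O_{t-1}\|_F+q\eta_{t-1}^2\|W_{t-1}\|_F\bigr).
\end{equation*}
Using $\|O_{t-1}\|_F\le\sqrt r$ from \eqref{eq:app-polar-identities} and the mean-square bound $\mathcal{B}^2$ on the iterates, I get
\begin{equation*}
\mathbb{E}\|D_t\|_F^2\le 2L^2\bigl(r\eta_{t-1}^2+q^2\eta_{t-1}^4\mathcal{B}^2\bigr)\to0,
\end{equation*}
because $\eta_t\to0$.

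\textbf{Step 4 (limsup of the recursion).} Write $\delta_t$ for the drift term and $c=(1-\beta)^2\sigma^2/B$, so that $u_t\le\beta u_{t-1}+\delta_t+c$ with $\beta<1$ and $\delta_t\to0$. A standard argument then gives $\limsup_t u_t\le c/(1-\beta)$: either unroll the recursion, or take $\limsup$ on both sides after checking that $u_t$ stays bounded, which itself follows by induction. This is precisely $(1-\beta)\sigma^2/B$.

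The only delicate point is Step 1: one must confirm that $D_t$ is $\mathcal{F}_t$-measurable so that the noise is orthogonal to the rest of the error. The other place requiring care is choosing the Young parameter so the contraction factor is exactly $\beta$. Any cruder choice would inflate the constant in the floor above $(1-\beta)\sigma^2/B$.
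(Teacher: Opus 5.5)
Your proposal is correct and follows the paper's proof essentially step for step. It uses the same error recursion with the $\mathcal{F}_t$-measurability argument for the cross term, the same Young weight $(1-\beta)/\beta$, the same drift bound via $L$-smoothness and $\|O_{t-1}\|_F\le\sqrt r$, and the same $\limsup$ conclusion. Your remark that $u_t$ must first be shown bounded fills in a small rigor point the paper leaves implicit.

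One small correction to your closing remark. Because the drift term vanishes, any smaller Young parameter $\epsilon$ also works, and letting $\epsilon\to 0$ actually tightens the floor to $(1-\beta)\sigma^2/((1+\beta)B)$. So the choice that makes the contraction exactly $\beta$ is convenient, not necessary.
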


\begin{proof}
If $\beta=0$, then $\bar M_t=G_t$ and $E_t=\xi_t$, so the claimed bound follows immediately from $\mathbb{E}\|\xi_t\|_F^2\le \sigma^2/B$. Suppose henceforth that $0<\beta<1$. From $\bar M_t = \beta \bar M_{t-1} + (1-\beta)G_t$ and $G_t = \nabla f(W_t) + \xi_t$,
\begin{equation}\label{eq:app-muon-error-recursion}
    E_t = \beta E_{t-1} + \beta\,\delta_t + (1-\beta)\xi_t,
    \qquad \delta_t = \nabla f(W_{t-1}) - \nabla f(W_t)~.
\end{equation}
The term $\beta E_{t-1} + \beta\delta_t$ is $\mathcal{F}_t$-measurable and $\mathbb{E}[\xi_t\mid\mathcal{F}_t] = 0$, so the cross term vanishes and
\begin{equation*}
    \mathbb{E}\|E_t\|_F^2 = \beta^2\,\mathbb{E}\|E_{t-1} + \delta_t\|_F^2 + (1-\beta)^2\,\mathbb{E}\|\xi_t\|_F^2~.
\end{equation*}
By Young's inequality with weight $a = (1-\beta)/\beta$, $\|E_{t-1}+\delta_t\|_F^2 \le \tfrac{1}{\beta}\|E_{t-1}\|_F^2 + \tfrac{1}{1-\beta}\|\delta_t\|_F^2$, so $\beta^2\|E_{t-1}+\delta_t\|_F^2 \le \beta\|E_{t-1}\|_F^2 + \tfrac{\beta^2}{1-\beta}\|\delta_t\|_F^2$. Using $\mathbb{E}\|\xi_t\|_F^2 \le \sigma^2/B$,
\begin{equation}\label{eq:app-muon-tracking-recursion}
    \mathbb{E}\|E_t\|_F^2 \le \beta\,\mathbb{E}\|E_{t-1}\|_F^2 + \frac{\beta^2}{1-\beta}\,\mathbb{E}\|\delta_t\|_F^2 + (1-\beta)^2\frac{\sigma^2}{B}~.
\end{equation}
By $L$-smoothness and \eqref{eq:app-muon-scaled-wd}, $\|\delta_t\|_F \le L\|W_t - W_{t-1}\|_F \le L(\eta_{t-1}\sqrt{r} + q\eta_{t-1}^2\|W_{t-1}\|_F)$, so $\mathbb{E}\|\delta_t\|_F^2 \le 2L^2(r\eta_{t-1}^2 + q^2\eta_{t-1}^4\mathcal{B}^2) \to 0$. Thus \eqref{eq:app-muon-tracking-recursion} is $u_t \le \beta u_{t-1} + c_t$ with $u_t = \mathbb{E}\|E_t\|_F^2$ and $\limsup_t c_t = (1-\beta)^2\sigma^2/B$. Taking $\limsup$ gives $\limsup_t u_t \le \beta\limsup_t u_t + (1-\beta)^2\sigma^2/B$, hence $\limsup_t u_t \le (1-\beta)\sigma^2/B$.
\end{proof}

\begin{theorem}[restatement of Theorem~\ref{thm:scaled-wd-muon-stationarity}]
\label{thm:app-scaled-wd-muon}
Under the standing assumptions, if $\sum_t \eta_t = \infty$ and $\sum_t \eta_t^2 < \infty$, then
\begin{equation}\label{eq:app-muon-scaled-stationarity}
    \limsup_{T\to\infty}\ \frac{1}{S_T}\sum_{t=0}^{T-1} \eta_t\, \mathbb{E}\|\nabla f(W_t)\|_*
    \le 2\sigma\sqrt{\frac{(1-\beta)\,r}{B}},
    \qquad S_T = \sum_{t=0}^{T-1}\eta_t~.
\end{equation}
\end{theorem}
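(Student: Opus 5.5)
The plan is the descent-lemma argument already used for Theorem~\ref{thm:app-scaled-wd-sgd}, with Lemma~\ref{lem:app-muon-alignment} replacing the unbiasedness identity and Lemma~\ref{lem:app-muon-tracking} controlling the buffer-tracking error.

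\textbf{Descent inequality.} Set $\Delta_t = W_{t+1}-W_t = -\eta_t O_t - q\eta_t^2 W_t$. By $L$-smoothness in Frobenius norm,
\begin{equation*}
f(W_{t+1}) \le f(W_t) - \eta_t\langle \nabla f(W_t), O_t\rangle_F - q\eta_t^2\langle \nabla f(W_t), W_t\rangle_F + \tfrac{L}{2}\|\Delta_t\|_F^2 .
\end{equation*}
The four terms are handled as follows.
\begin{itemize}
\item \emph{Main term.} Lemma~\ref{lem:app-muon-alignment} gives $-\eta_t\langle \nabla f(W_t),O_t\rangle_F \le -\eta_t\|\nabla f(W_t)\|_* + 2\sqrt r\,\eta_t\|E_t\|_F$.
\item \emph{Shrink cross term.} Bound it by $q\eta_t^2\|\nabla f(W_t)\|_F\|W_t\|_F$.
\item \emph{Removing the gradient from the cross term.} Smoothness plus boundedness do not bound $\|\nabla f\|_F$ uniformly, so use $\|\nabla f\|_F\le\|\nabla f\|_*$ and Young's inequality:
\begin{equation*}
q\eta_t^2\|\nabla f\|_F\|W_t\|_F \le \tfrac{\eta_t}{2}\|\nabla f\|_* + \tfrac{q^2\eta_t^3}{2}\|\nabla f\|_F\|W_t\|_F^2 .
\end{equation*}
This is still awkward because a gradient factor remains. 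The cleaner route is to absorb the cross term directly: for $t$ large enough that $q\eta_t\|W_t\|_F\le \tfrac12$, it is at most $\tfrac{\eta_t}{2}\|\nabla f(W_t)\|_*$. That threshold holds only pathwise, not in expectation, so in expectation I instead use
\begin{equation*}
q\eta_t^2\|\nabla f\|_F\|W_t\|_F \le \tfrac{\eta_t}{4}\|\nabla f\|_F^2/C + C q^2\eta_t^3\|W_t\|_F^2 .
\end{equation*}
I control $\|\nabla f\|_F^2$ through the standard bound $\|\nabla f(W)\|_F^2 \le 2L(f(W)-f_{\inf})$ and then a Grönwall-type argument on $\mathbb{E}f$. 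This yields a multiplicative factor $(1+O(\eta_t^2))$ on $f-f_{\inf}$, which is harmless because $\prod_t(1+c\eta_t^2)<\infty$.
\item \emph{Quadratic term.} Since $\|O_t\|_F^2\le r$, we get $\|\Delta_t\|_F^2 \le 2r\eta_t^2 + 2q^2\eta_t^4\|W_t\|_F^2$.
\end{itemize}

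\textbf{Summation.} Take total expectations, use $\mathbb{E}\|W_t\|_F^2\le\mathcal B^2$, and apply Jensen's inequality $\mathbb{E}\|E_t\|_F\le(\mathbb{E}\|E_t\|_F^2)^{1/2}$. This gives
\begin{equation*}
\mathbb{E}f(W_{t+1}) - f_{\inf} \le (1+c\eta_t^2)\bigl(\mathbb{E}f(W_t)-f_{\inf}\bigr) - \eta_t\,\mathbb{E}\|\nabla f(W_t)\|_* + 2\sqrt r\,\eta_t\bigl(\mathbb{E}\|E_t\|_F^2\bigr)^{1/2} + K\eta_t^2 ,
\end{equation*}
where $K$ collects $Lr$, $Lq^2\mathcal B^2$ and $q^2\mathcal B^2$. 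The factor $\prod(1+c\eta_s^2)$ is bounded, so $\mathbb{E}f(W_t)-f_{\inf}$ stays bounded by a constant times $\bigl(\text{initial gap} + \sum_s \eta_s\,\text{tracking}_s + K\sum_s\eta_s^2\bigr)$. Telescope from $0$ to $T-1$ and divide by $S_T$. The initial gap and the $\sum\eta_t^2$ contributions are finite, so after dividing by $S_T\to\infty$ they vanish, leaving
\begin{equation*}
\limsup_T \frac{1}{S_T}\sum_t \eta_t\,\mathbb{E}\|\nabla f(W_t)\|_* \le \limsup_T \frac{2\sqrt r}{S_T}\sum_t \eta_t\bigl(\mathbb{E}\|E_t\|_F^2\bigr)^{1/2} .
\end{equation*}

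\textbf{Tracking floor.} Because $\eta_t\to0$, Lemma~\ref{lem:app-muon-tracking} gives $\limsup_t(\mathbb{E}\|E_t\|_F^2)^{1/2}\le\sigma\sqrt{(1-\beta)/B}$. Since $\sum\eta_t=\infty$, a Cesàro/Stolz argument transfers this $\limsup$ to the $\eta_t$-weighted average. The result is the bound $2\sigma\sqrt{(1-\beta)r/B}$.

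\textbf{Main obstacle.} The difficult step is the shrink cross term $q\eta_t^2\langle\nabla f(W_t),W_t\rangle$. The SGD proof absorbed it into the Frobenius squared-gradient descent term. Here the descent term is the nuclear norm, which is linear in the gradient rather than quadratic, so Young's inequality produces a squared gradient that has no matching negative term. I would first try the bound $\|\nabla f\|_F^2\le 2L(f-f_{\inf})$ together with the summable-multiplicative Grönwall step above. If that step fails, the fallback is to exploit $\|\nabla f(W_t)\|_F \le \|\nabla f(0)\|_F + L\|W_t\|_F$. This makes the cross term at most $q\eta_t^2\bigl(\|\nabla f(0)\|_F\,\mathbb{E}\|W_t\|_F + L\,\mathbb{E}\|W_t\|_F^2\bigr)\le C\eta_t^2$ by mean-square boundedness, which is summable. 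The fallback is simpler and likely what suffices.
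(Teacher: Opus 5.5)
Your fallback is exactly the paper's proof. The paper bounds the shrink cross term in expectation by $q\eta_t^2\bigl(\|\nabla f(0)\|_F\mathcal{B}+L\mathcal{B}^2\bigr)$, using $\|\nabla f(W)\|_F\le\|\nabla f(0)\|_F+L\|W\|_F$ and mean-square boundedness. The remaining steps are as you describe: Lemma~\ref{lem:app-muon-alignment} for the main term, $\|\Delta_t\|_F^2\le 2r\eta_t^2+2q^2\eta_t^4\|W_t\|_F^2$, telescoping, Jensen, Lemma~\ref{lem:app-muon-tracking}, and the weighted Ces\`aro step. You should lead with it. Your worry that smoothness and boundedness do not control $\|\nabla f\|_F$ is only a pathwise worry, and the descent argument needs nothing beyond control in expectation.

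The route you would try first does not work as written. The split $\tfrac{\eta_t}{4C}\|\nabla f\|_F^2+Cq^2\eta_t^3\|W_t\|_F^2$ is the SGD-style split. It is designed to be absorbed by a $-\eta_t\|\nabla f\|_F^2$ descent term, which Muon does not have. Combined with $\|\nabla f\|_F^2\le 2L(f-f_{\inf})$, it gives the factor $1+\tfrac{L}{2C}\eta_t$ rather than $1+O(\eta_t^2)$. Moreover $\prod_t(1+c\eta_t)=\infty$ whenever $\sum_t\eta_t=\infty$.

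A balanced split $q\eta_t^2ab\le\tfrac{q\eta_t^2}{2}(a^2+b^2)$ does give a genuine $1+qL\eta_t^2$ factor, but then a second gap appears. When $\sigma>0$, your bound on $\mathbb{E}f(W_t)-f_{\inf}$ grows like $S_t$ through the tracking term. The terms $c\eta_t^2\bigl(\mathbb{E}f(W_t)-f_{\inf}\bigr)$ produced by telescoping are then not shown to be summable, contrary to your claim that only the initial gap and $\sum\eta_t^2$ contributions remain. To remove them after dividing by $S_T$, you would need an extra argument such as $\sum_{t<T}\eta_t^2S_t=o(S_T)$, which holds by splitting at a large fixed index. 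So that route can be repaired, but it takes strictly more work than the fallback, which never puts a multiplicative factor on the function gap.
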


\begin{proof}
By $L$-smoothness, with $\Delta_t = -\eta_t O_t - q\eta_t^2 W_t$,
\begin{equation}\label{eq:app-muon-descent}
    f(W_{t+1}) \le f(W_t) + \langle \nabla f(W_t), \Delta_t\rangle_F + \frac{L}{2}\|\Delta_t\|_F^2~.
\end{equation}
The first-order term splits into $-\eta_t\langle\nabla f(W_t), O_t\rangle_F - q\eta_t^2\langle\nabla f(W_t), W_t\rangle_F$. By Lemma~\ref{lem:app-muon-alignment}, the first piece is at most $-\eta_t\|\nabla f(W_t)\|_* + 2\sqrt{r}\,\eta_t\|E_t\|_F$. For the shrink term, Frobenius Cauchy--Schwarz and $L$-smoothness give
\begin{align*}
    \mathbb{E}\bigl|\langle\nabla f(W_t),W_t\rangle_F\bigr|
    &\le \mathbb{E}\bigl[\|\nabla f(W_t)\|_F\|W_t\|_F\bigr]\\
    &\le \|\nabla f(0)\|_F\,\mathcal{B}+L\mathcal{B}^2
    =: C_W~.
\end{align*}
For the second-order term, $\|\Delta_t\|_F^2 \le 2r\eta_t^2 + 2q^2\eta_t^4\|W_t\|_F^2$. Taking total expectations in \eqref{eq:app-muon-descent} and substituting these bounds yields
\begin{equation}\label{eq:app-muon-rearranged}
    \eta_t\,\mathbb{E}\|\nabla f(W_t)\|_*
    \le \mathbb{E}[f(W_t)-f(W_{t+1})] + 2\sqrt{r}\,\eta_t\mathbb{E}\|E_t\|_F
    + (qC_W+Lr)\eta_t^2 + Lq^2\mathcal{B}^2\eta_t^4~.
\end{equation}
Choose $t_1$ so that $\eta_t\le1$ for every $t\ge t_1$. Summing from $t_1$ to $T-1$,
\begin{equation*}
    \sum_{t=t_1}^{T-1} \eta_t\,\mathbb{E}\|\nabla f(W_t)\|_*
    \le \mathbb{E} f(W_{t_1}) - f_{\inf}
    + 2\sqrt{r}\sum_{t=t_1}^{T-1} \eta_t\,\mathbb{E}\|E_t\|_F
    + \bigl(qC_W+Lr+Lq^2\mathcal{B}^2\bigr)\sum_{t=t_1}^{T-1} \eta_t^2~.
\end{equation*}
Here we used $\eta_t^4\le \eta_t^2$ for $t\ge t_1$.
Divide by $S_T$. The telescoped gap and the finite $\sum_t\eta_t^2$ term vanish after division by $S_T \to \infty$. For the tracking term, $\mathbb{E}\|E_t\|_F \le \sqrt{\mathbb{E}\|E_t\|_F^2}$, and by Lemma~\ref{lem:app-muon-tracking} $\limsup_t \mathbb{E}\|E_t\|_F \le \sqrt{(1-\beta)\sigma^2/B}$; a weighted Cesàro average has $\limsup$ at most this bound. Hence
\begin{equation*}
    \limsup_T \frac{1}{S_T}\sum_t \eta_t\,\mathbb{E}\|\nabla f(W_t)\|_*
    \le 2\sqrt{r}\,\sqrt{\frac{(1-\beta)\sigma^2}{B}} = 2\sigma\sqrt{\frac{(1-\beta)r}{B}}~,
\end{equation*}
which is \eqref{eq:app-muon-scaled-stationarity}. The scaled shrink term contributes only the order-$\eta_t^2$ and order-$\eta_t^4$ terms in \eqref{eq:app-muon-rearranged}, both summable, so it does not affect the floor; the same bound holds for Muon without weight decay.
\end{proof}

The floor vanishes in the deterministic case $\sigma = 0$, in which case the right-hand side of \eqref{eq:app-muon-scaled-stationarity} is zero and $\tfrac{1}{S_T}\sum_t \eta_t\,\mathbb{E}\|\nabla f(W_t)\|_* \to 0$.

\begin{theorem}[restatement of Theorem~\ref{thm:scaled-wd-muon-convex}]
\label{thm:app-scaled-wd-muon-convex}
Suppose the assumptions of Theorem~\ref{thm:app-scaled-wd-muon} hold with $\sigma = 0$, and additionally suppose that $\|W_t\|_F\le \mathcal{B}$ almost surely for every $t$. If $f$ is convex with minimizer $W^\star$ and $f^\star = f(W^\star)$, then
\begin{equation*}
    \frac{1}{S_T}\sum_{t=0}^{T-1}\eta_t\,\mathbb{E}[f(W_t)-f^\star]\to0,
    \qquad
    \liminf_{t\to\infty}\mathbb{E}[f(W_t)-f^\star]=0~.
\end{equation*}
\end{theorem}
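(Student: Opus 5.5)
The plan is to combine convexity with the deterministic stationarity result of Theorem~\ref{thm:app-scaled-wd-muon} through a duality bound. For convex $f$,
\[
f(W_t)-f^\star \le \langle \nabla f(W_t), W_t - W^\star\rangle_F \le \|\nabla f(W_t)\|_*\,\|W_t - W^\star\|_{\mathrm{op}},
\]
using the nuclear/operator duality $\langle X,Y\rangle_F\le\|X\|_*\|Y\|_{\mathrm{op}}$ already used in the alignment lemma. The almost-sure bound $\|W_t\|_F\le\mathcal{B}$ then gives the deterministic constant
\[
\|W_t-W^\star\|_{\mathrm{op}}\le\|W_t\|_F+\|W^\star\|_F\le \mathcal{B}+\|W^\star\|_F =: D.
\]
Hence, pointwise in $\omega$,
\[
0\le f(W_t)-f^\star\le D\,\|\nabla f(W_t)\|_*,
\]
and taking expectations gives $\mathbb{E}[f(W_t)-f^\star]\le D\,\mathbb{E}\|\nabla f(W_t)\|_*$. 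This is the step that needs the almost-sure bound rather than only the mean-square bound. With only $\sup_t\mathbb{E}\|W_t\|_F^2\le\mathcal{B}^2$, we would have to control $\mathbb{E}[\|\nabla f\|_*\|W_t-W^\star\|]$ by Cauchy--Schwarz. That would require a second-moment stationarity statement for $\|\nabla f(W_t)\|_*$, which Theorem~\ref{thm:app-scaled-wd-muon} does not provide.

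Next, multiply by $\eta_t$, sum over $t<T$, and divide by $S_T$:
\[
0\le \frac{1}{S_T}\sum_{t=0}^{T-1}\eta_t\,\mathbb{E}[f(W_t)-f^\star]\le \frac{D}{S_T}\sum_{t=0}^{T-1}\eta_t\,\mathbb{E}\|\nabla f(W_t)\|_*.
\]
With $\sigma=0$, Theorem~\ref{thm:app-scaled-wd-muon} says the $\limsup$ of the right-hand average is at most $0$. Since that average is nonnegative, it tends to $0$, and so does the left-hand average, which is the first claim. The hypotheses of that theorem are satisfied: the almost-sure bound implies the mean-square bound, and the Robbins--Monro conditions are assumed.

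For the $\liminf$ claim I would argue by contradiction, exactly as in the last line of the proof of Theorem~\ref{thm:app-scaled-wd-sgd}. Set $a_t=\mathbb{E}[f(W_t)-f^\star]\ge0$ and suppose $\liminf_t a_t=2\epsilon>0$. Then $a_t\ge\epsilon$ for all $t\ge t_0$. Because $S_T\to\infty$ (from $\sum_t\eta_t=\infty$), the weighted average satisfies $\liminf_T \frac{1}{S_T}\sum_t \eta_t a_t\ge\epsilon$. This contradicts the first claim.

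The only real obstacle is the duality step and the need for a bound $D$ that does not depend on $\omega$, which the added almost-sure hypothesis supplies. One could also worry about finiteness of $\mathbb{E}f(W_t)$. This is harmless: $f$ is continuous, hence bounded on the Frobenius ball of radius $\mathcal{B}$, and all the expectations above are finite.
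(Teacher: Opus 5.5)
Your proposal is correct and follows the paper's own proof: the convexity--duality bound $f(W_t)-f^\star\le\|\nabla f(W_t)\|_*\bigl(\|W_t\|_F+\|W^\star\|_F\bigr)$, the almost-sure bound to pull out the deterministic constant $\mathcal{B}+\|W^\star\|_F$, the $\sigma=0$ case of Theorem~\ref{thm:app-scaled-wd-muon} for the weighted average, and nonnegativity together with $\sum_t\eta_t=\infty$ for the $\liminf$. Your extra remarks are accurate points the paper leaves implicit: the almost-sure (not merely mean-square) bound is what makes the expectation step work, and $\mathbb{E} f(W_t)$ is finite.
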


\begin{proof}
By convexity and duality,
\begin{equation*}
    f(W_t) - f^\star \le \langle \nabla f(W_t), W_t - W^\star\rangle_F
    \le \|\nabla f(W_t)\|_*\bigl(\|W_t\|_F + \|W^\star\|_F\bigr)~.
\end{equation*}
Using the almost-sure bound $\|W_t\|_F\le\mathcal{B}$ and then taking expectations gives $\mathbb{E}[f(W_t) - f^\star] \le (\mathcal{B} + \|W^\star\|_F)\,\mathbb{E}\|\nabla f(W_t)\|_*$. Multiplying by $\eta_t$, summing, and dividing by $S_T$,
\begin{equation*}
    \frac{1}{S_T}\sum_{t=0}^{T-1}\eta_t\,\mathbb{E}[f(W_t) - f^\star]
    \le (\mathcal{B} + \|W^\star\|_F)\,\frac{1}{S_T}\sum_{t=0}^{T-1}\eta_t\,\mathbb{E}\|\nabla f(W_t)\|_* \to 0~.
\end{equation*}
This proves the weighted-average statement. Since the summands are nonnegative and $\sum_t\eta_t=\infty$, it also implies $\liminf_t\mathbb{E}[f(W_t)-f^\star]=0$.
\end{proof}

\section{Additional Data from MoE Training Experiments}
\label{app:additional-data}

This appendix collects supporting diagnostics for the weight-norm analysis of Section~\ref{sec:steady-state}, together with the loss and norm curves for the intermediate widths. All diagnostics are computed over the matrix-valued, non-router weights updated by Muon.

\begin{figure}[htbp]
    \centering
    \includegraphics[width=0.5\linewidth]{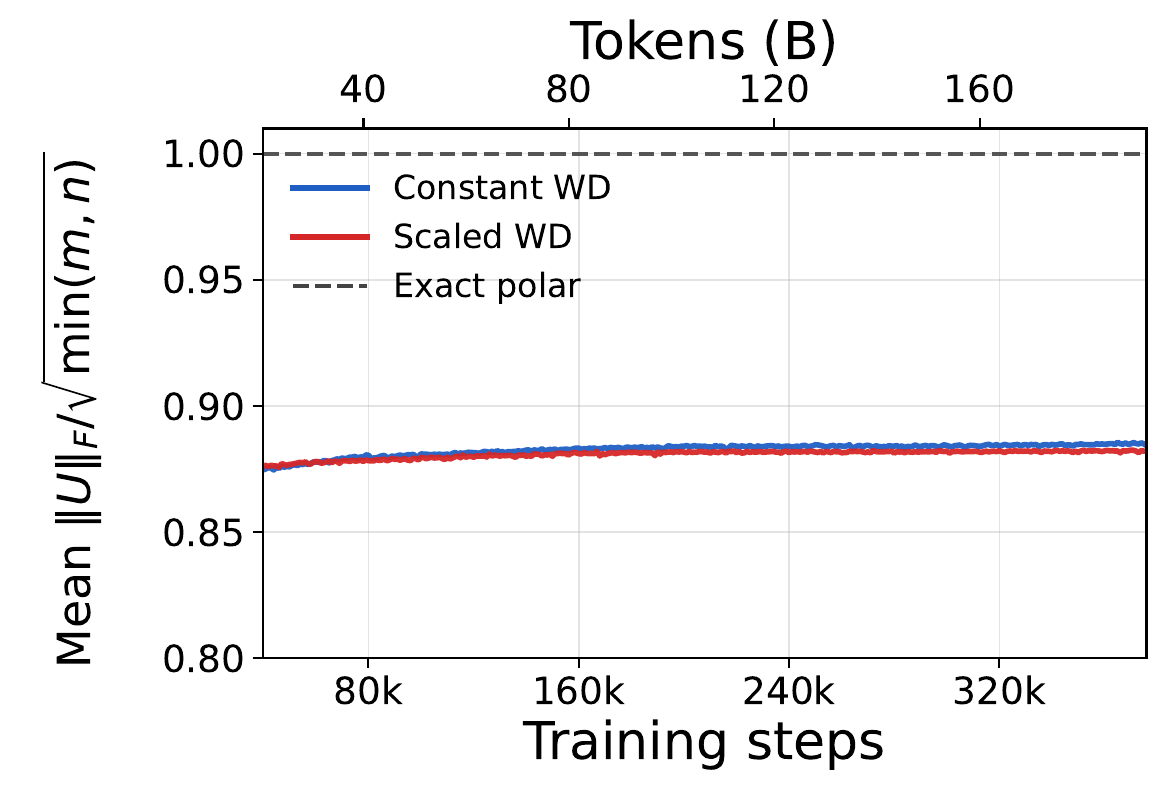}
    \caption{\textbf{Imperfect orthogonalization of the five-step Newton--Schulz polar map (width 1024).} We plot the layer-averaged normalized Frobenius norm $\tfrac{1}{L}\sum_\ell \|U_t^{(\ell)}\|_F / \sqrt{\min(m_\ell, n_\ell)}$ of the applied Muon update. An exact full-rank polar factor would give a ratio of one; the five-step Newton--Schulz output instead stays near $0.88$ throughout training.}
    \label{fig:app-ns5}
\end{figure}

\begin{figure}[htbp]
    \centering
    \includegraphics[width=0.9\linewidth]{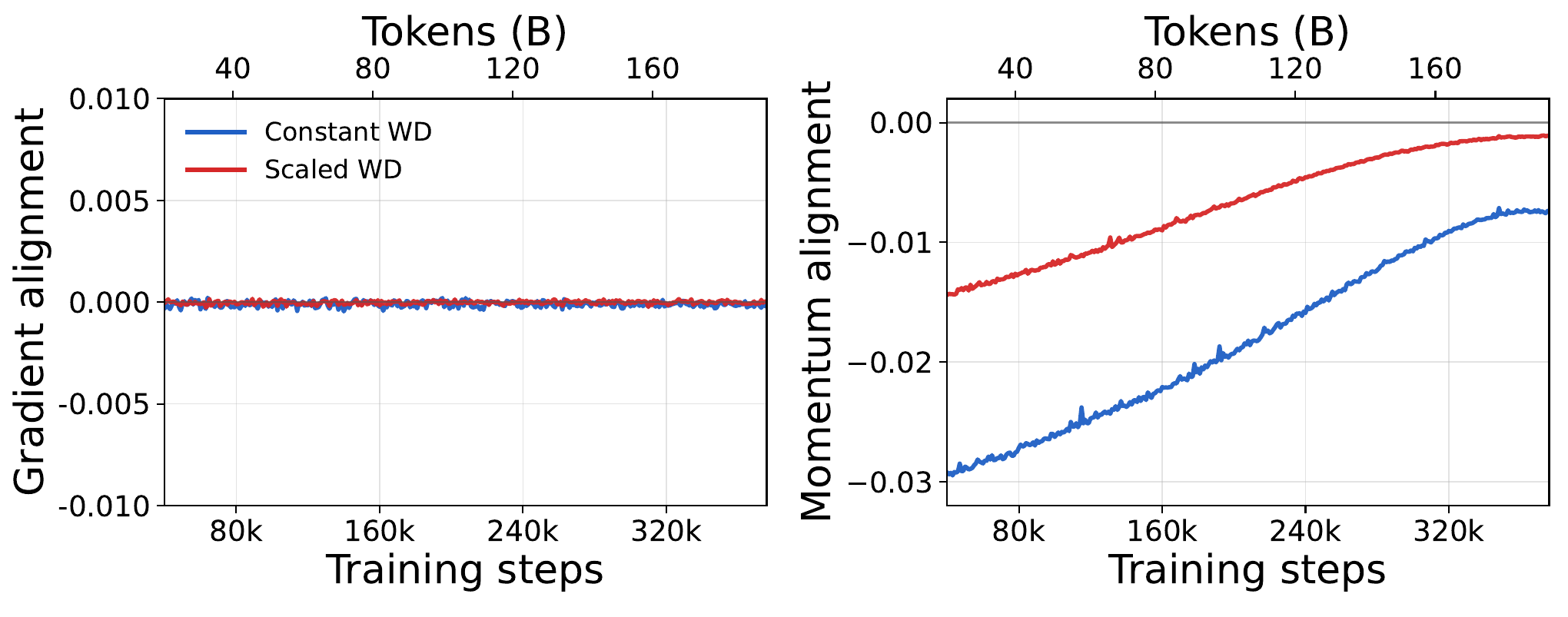}
    \caption{\textbf{Weight--gradient and weight--momentum alignment (width 1024).} \emph{Left:} the raw gradient stays nearly orthogonal to the weights under both decay rules, $\langle W_t, G_t\rangle \approx 0$, as expected from the scale-invariance induced by the normalization layers. \emph{Right:} the momentum buffer which is an exponential average of past gradients, acquires an appreciably negative alignment with the weights. As the learning rate decays the newly injected radial component shrinks relative to the weight norm, so the momentum alignment moves toward zero.}
    \label{fig:app-gm-alignment}
\end{figure}

\begin{figure}[htbp]
    \centering
    \includegraphics[width=0.55\linewidth]{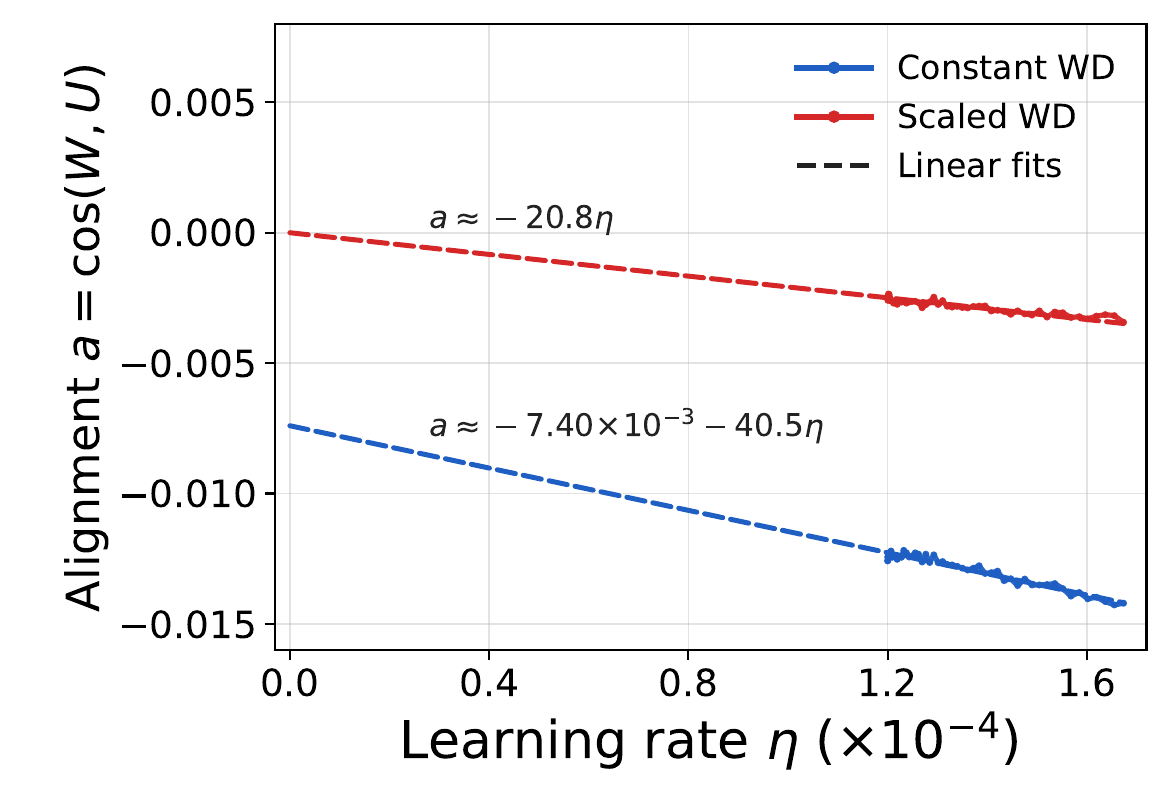}
    \caption{\textbf{Late-run weight--update alignment versus learning rate (width 1024).} Aggregate weight--update alignment $a$ against the learning rate $\eta$ over the final $50$k steps (steps $326$k--$376$k); the observed samples lie on the right, and the dashed fits are extended toward $\eta = 0$ to expose their asymptotics. Scaled weight decay is well described by the origin-constrained relation $a \approx -20.8\,\eta$, so that $a/\eta$ approaches a finite constant. Constant weight decay instead requires a shifted fit $a \approx -7.40\times 10^{-3} - 40.5\,\eta$ with a nonzero intercept. This changes the asymptotics of the steady-state norm and underlies the constant-norm versus collapsing-norm behavior of the two rules. The extrapolated portion comes from modeling the late-run fits, and not directly observed since the minimum learning rate in the experiment is $\etamax/10$.}
    \label{fig:app-late-alignment}
\end{figure}

\begin{figure}[htbp]
    \centering
    \includegraphics[width=0.9\linewidth]{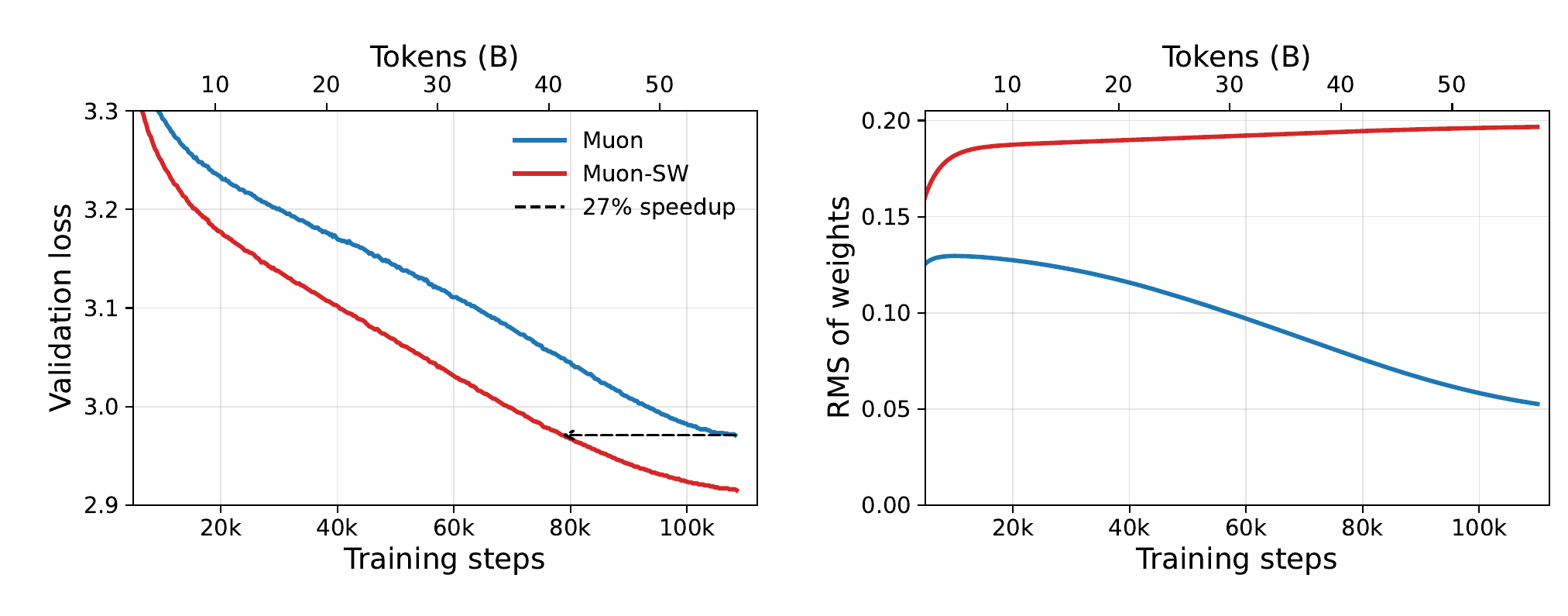}
    \caption{\textbf{Validation loss and weight norm at width 512.} \emph{Left:} validation loss for \textcolor{blue}{Muon} (constant weight decay) and \textcolor{red}{Muon-SW} (scaled weight decay); the dashed arrow marks the reduction in steps for the scaled run to reach the best constant-decay loss of $2.971$ ($108{,}250$ vs.\ $78{,}500$ steps, a ${\sim}27\%$ reduction). \emph{Right:} the scaled-decay weight norm rises quickly and remains near $0.20$, while the constant-decay norm peaks early and then decays to ${\sim}0.05$, reproducing the width-1024 separation.}
    \label{fig:app-w512}
\end{figure}

\begin{figure}[htbp]
    \centering
    \includegraphics[width=0.9\linewidth]{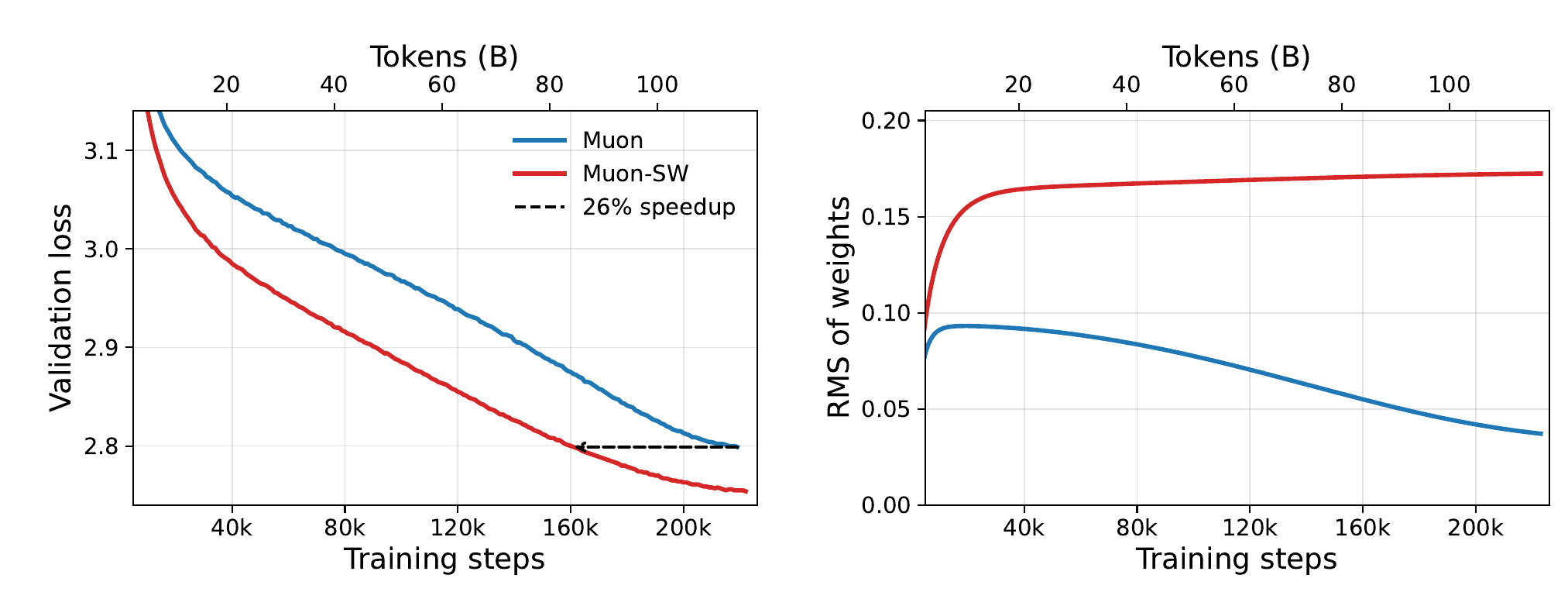}
    \caption{\textbf{Validation loss and weight norm at width 768.} \emph{Left:} the best constant-decay loss of $2.799$ is reached at $219{,}000$ steps, and Muon-SW reaches it at $161{,}000$ steps, a ${\sim}26\%$ reduction. \emph{Right:} scaled weight decay holds the norm near $0.17$ while the constant-decay norm turns over and decays to ${\sim}0.04$. Together with the width-512 result, this confirms that the loss improvement and contrasting norm dynamics are not specific to width 1024.}
    \label{fig:app-w768}
\end{figure}

\FloatBarrier
\section{PyTorch Implementation of MuonSW}
\label{app:implementation}
We present a PyTorch \cite{paszke2019pytorchimperativestylehighperformance} implementation of Muon-SW, the Muon optimizer with scaled weight decay. Our implementation is based on the code by Jordan \cite{jordan2024muon}. The important change from standard decoupled decay is that the decay coefficient uses the scheduler-mutated learning rate $\eta_t$ and the peak learning rate $\eta_{\max}$, so the parameter shrinkage is $c_t=\lambda\eta_t^2/\eta_{\max}$ instead of $c_t=\lambda\eta_t$. The Muon step itself includes the $\alpha\sqrt{\max(m,n)}$ adjustment for an $m\times n$ matrix, with default $\alpha=0.2$ it matches the RMS of AdamW updates and allows one to use the same nominal learning rate as AdamW \cite{liu2025muonscalablellmtraining}. The Muon optimizer should be used for hidden matrix parameters such as attention projection matrices and expert MLP matrices. Embeddings, the tied un-embedding or language-model head, normalization weights, biases, and other vector or scalar parameters should be trained with AdamW. 

The implementation uses five Newton--Schulz iterations with coefficients $(3.4445,-4.7750,2.0315)$, matching the common Muon implementation \cite{jordan2024muon}. One could instead use other polynomial coefficients as in Polar Express \cite{amsel2026polarexpressoptimalmatrix}, or use randomized orthogonalization methods as proposed in \cite{ahn2025dion, ahn2025dion2}. We have provided this particular implementation since it was used in our experiments, but the scaled weight decay method can be implemented by changing just a few lines of code in other optimizers such as Shampoo \cite{gupta2018, anil2021}, SOAP \cite{vyas2025}, Schedule-Free Muon \cite{apte2026anytimetrainingschedulefreespectral} and Aurora \cite{dewulf2026auroraleverageawarespectraloptimizer}.

\clearpage
\newgeometry{margin=0.5in}
\begin{center}
\begin{lstlisting}[language=Python,basicstyle=\ttfamily\footnotesize,breaklines=true]
import math, torch
import torch.distributed as dist

def zeropower_via_newtonschulz5(G, steps=5, eps=1e-7):
    assert G.ndim == 2
    a, b, c = (3.4445, -4.7750, 2.0315)
    X = G.bfloat16()
    transposed = X.size(0) > X.size(1)
    if transposed:
        X = X.T
    X = X / (X.norm() + eps)
    for _ in range(steps):
        A = X @ X.T
        B = b * A + c * A @ A
        X = a * X + B @ X
    return X.T if transposed else X

def muon_direction(grad, momentum, beta=0.95, ns_steps=5, nesterov=True, shape_scale=0.2):
    momentum.lerp_(grad, 1.0 - beta)
    update = grad.lerp(momentum, beta) if nesterov else momentum
    original_shape = update.shape
    if update.ndim == 4:
        update = update.view(update.size(0), -1)
    update = zeropower_via_newtonschulz5(update, steps=ns_steps)
    m, n = update.shape
    update.mul_(shape_scale * math.sqrt(max(m, n)))
    return update.reshape(original_shape)

class DistributedMuonSW(torch.optim.Optimizer):
    def __init__(self, params, lr=1e-2, eta_max=None, lambda_=0.1,
                 shape_scale=0.2, momentum=0.95, nesterov=True, ns_steps=5):
        defaults = dict(lr=lr, eta_max=lr if eta_max is None else eta_max,
                        lambda_=lambda_, shape_scale=shape_scale,
                        momentum=momentum, nesterov=nesterov, ns_steps=ns_steps)
        assert isinstance(params, list) and len(params) >= 1 and isinstance(params[0], torch.nn.Parameter)
        params = sorted(params, key=lambda x: x.size(), reverse=True)
        super().__init__(params, defaults)

    @staticmethod
    def scaled_decay_coeff(group):
        lr, eta_max, lambda_ = group["lr"], group["eta_max"], group["lambda_"]
        # MuonSW: standard WD would return lambda_ * lr.
        return 0.0 if eta_max <= 0.0 else lambda_ * lr * lr / eta_max

    @torch.no_grad()
    def step(self, closure=None):
        loss = None
        if closure is not None:
            with torch.enable_grad():
                loss = closure()
        world = dist.get_world_size() if dist.is_initialized() else 1
        rank = dist.get_rank() if dist.is_initialized() else 0

        for group in self.param_groups:
            params = group["params"]
            params_pad = params + [torch.empty_like(params[-1])] * (-len(params) % world)
            for base in range(0, len(params), world):
                if base + rank < len(params):
                    p = params[base + rank]
                    if p.grad is None:
                        p.grad = torch.zeros_like(p)
                    state = self.state[p]
                    if len(state) == 0:
                        state["momentum_buffer"] = torch.zeros_like(p)
                    update = muon_direction(
                        p.grad, state["momentum_buffer"], beta=group["momentum"],
                        ns_steps=group["ns_steps"], nesterov=group["nesterov"],
                        shape_scale=group["shape_scale"])
                    # MuonSW: decay by 1 - lambda * lr^2 / eta_max.
                    p.mul_(max(0.0, 1.0 - self.scaled_decay_coeff(group)))
                    p.add_(update, alpha=-group["lr"])
                if dist.is_initialized():
                    dist.all_gather(params_pad[base:base + world], params_pad[base + rank])

        return loss
\end{lstlisting}
\end{center}
\restoregeometry

% \bibliographystyle{plain}
% \bibliography{references}

\end{document}